\documentclass{article}
\usepackage{preprint,times}
\preptfinalcopy
\usepackage{amsmath,amssymb,amsthm,graphicx,booktabs,microtype}
\usepackage{newtxtext,newtxmath}
\usepackage{hyperref}
\hypersetup{hidelinks}
\usepackage{url}

\newtheorem{theorem}{Theorem}
\newtheorem{proposition}{Proposition}
\newtheorem{corollary}{Corollary}
\newtheorem{lemma}{Lemma}
\newtheorem*{restate}{Proposition~\ref{prop:transport} (restated)}
\newtheorem*{restateinv}{Proposition~\ref{prop:invisible} (restated)}

\title{How Much Can Reliability Drift Under a Fixed Confidence Distribution?}
\author{Wenhao Liang, Lin Yue, Wei Emma Zhang, Mingyu Guo, Olaf Maennel, Weitong Chen \\[3pt]
Adelaide University}

\newcommand{\SpearmanMedian}{0.82}
\newcommand{\SpearmanRangeLo}{0.08}
\newcommand{\SpearmanRangeHi}{0.94}
\newcommand{\PearsonAll}{0.91}
\newcommand{\SpearmanAll}{0.88}
\newcommand{\PartialEceAll}{0.00}
\newcommand{\PartialGAll}{0.86}
\newcommand{\PartialEcePrim}{-0.78}
\newcommand{\PartialGPrim}{0.95}

\newcommand{\NatShareShiftIndep}{0.027}
\newcommand{\NatShareShiftDisjoint}{0.031}

\newcommand{\BrN}{12}
\newcommand{\BrFamilies}{17}
\newcommand{\BrRankMedian}{0.80}
\newcommand{\BrRankLo}{0.14}
\newcommand{\BrRankHi}{0.94}
\newcommand{\CombRankMedian}{0.81}

\newcommand{\BrEOneSpMax}{0.924}

\newcommand{\BrEOneBelow}{96.2}
\newcommand{\BrEOneMedRatio}{0.260}
\newcommand{\BrEOneQone}{9.2}
\newcommand{\BrEOneQfour}{0.74}
\newcommand{\BrCertBothRaw}{9}
\newcommand{\BrCertOneRaw}{10}
\newcommand{\BrCertBothTemp}{3}
\newcommand{\BrConfEceGRraw}{+0.61}
\newcommand{\BrConfEceFaucRaw}{+0.81}
\newcommand{\BrConfGRFaucRaw}{+0.92}
\newcommand{\CombConfEceGRraw}{+0.75}
\newcommand{\CombConfEceFaucRaw}{+0.85}
\newcommand{\CombConfGRFaucRaw}{+0.95}
\newcommand{\LofoEceFaucLo}{+0.79}
\newcommand{\LofoEceFaucHi}{+0.90}
\newcommand{\LofoGRFaucLo}{+0.93}
\newcommand{\LofoGRFaucHi}{+0.98}
\newcommand{\LofoRankLo}{0.81}
\newcommand{\LofoRankHi}{0.82}
\newcommand{\LofoEOneLo}{0.899}
\newcommand{\LofoEOneHi}{0.903}
\newcommand{\OpfEceFauc}{+0.82}
\newcommand{\OpfGRFauc}{+0.94}
\newcommand{\OpfRank}{0.81}
\newcommand{\OpfEOne}{0.899}
\newcommand{\BrMonoMedian}{76.2}
\newcommand{\BrMonoNLL}{10}
\newcommand{\BrMonoMaxdF}{0}

\newcommand{\SpEceFaucRawPrim}{+1.00}
\newcommand{\SpEceFaucRawAll}{+0.85}
\newcommand{\SpEceFaucTsPrim}{$-$0.83}
\newcommand{\SpEceFaucTsAll}{$-$0.36}
\newcommand{\SpGrFaucTsPrim}{+0.60}
\newcommand{\SpGrFaucTsAll}{+0.95}
\newcommand{\LofoTsLo}{$-$0.50}
\newcommand{\LofoTsHi}{$-$0.28}
\newcommand{\OpfTsEceFauc}{$-$0.35}
\newcommand{\MatchedTs}{5}
\newcommand{\MatchedRaw}{2}
\newcommand{\MatchedMaxRatio}{1.71}
\newcommand{\RsPooledPrim}{0.9010}
\newcommand{\RsPooledBr}{0.9005}
\newcommand{\RsPooledAll}{0.9007}

\newcommand{\RsFixedLo}{0.82}
\newcommand{\RsFixedHi}{0.84}
\newcommand{\RsNullFixedMean}{0.75}
\newcommand{\RsNullFixedMax}{0.77}

\newcommand{\RsNullPooledMean}{0.877}
\newcommand{\RsNullPooledMax}{0.881}

\newcommand{\RsUnitObsPrim}{0.78}

\newcommand{\RegTailThree}{45\%}
\newcommand{\VexcessThree}{44\%}
\newcommand{\VcapExcessThree}{10\%}

\newcommand{\MatchedVarThree}{19\%}

\newcommand{\FaucVcapRank}{0.996}
\newcommand{\FaucVRank}{0.976}
\newcommand{\FaucVoverF}{1.17}
\newcommand{\FaucVcapOverF}{1.09}
\newcommand{\RankTwoStatePrim}{0.59}
\newcommand{\RankTwoStateBr}{0.63}
\newcommand{\RankTwoStateLo}{0.08}
\newcommand{\RankTwoStateHi}{0.93}

\newcommand{\RankStateHalf}{0.84}
\newcommand{\RankStateTwo}{0.88}

\newcommand{\CtrlObsTwoState}{0.62}
\newcommand{\CtrlNullMean}{$-0.01$}
\newcommand{\CtrlNullMax}{0.05}

\newcommand{\LcRetainedMass}{77.9\%}
\newcommand{\LcRetainedMin}{55.8\%}
\newcommand{\LcRetainedMax}{98.6\%}
\newcommand{\LcRatioAll}{41.8\%}
\newcommand{\LcRatioRet}{43.9\%}
\newcommand{\LcCondAll}{400}
\newcommand{\LcCondRet}{401}

\newcommand{\LcSevOneRet}{52.7\%}

\newcommand{\LcSevFiveRet}{37.0\%}

\newcommand{\MagSlackB}{0.101}
\newcommand{\MagFullBsmall}{0.022}
\newcommand{\MagCapBsmall}{0.148}
\newcommand{\MagFullBlarge}{0.006}
\newcommand{\MagCapBlarge}{0.102}
\newcommand{\MagMinLabelB}{2}
\newcommand{\MagWeakRelFull}{56\%}

\newcommand{\MagWeakSelBias}{0.027}
\newcommand{\MagCondTotal}{600}
\newcommand{\MagMaxWorse}{0.0002}
\newcommand{\MagCondFull}{78}
\newcommand{\MagCondTie}{522}

\newcommand{\MagMaxFullOverCap}{6\times10^{-17}}
\newcommand{\MagIIDDropMaxN}{2000}
\newcommand{\ResBratioMedOne}{1.5}
\newcommand{\ResAratioMedOne}{9}
\newcommand{\ResBratioMedThree}{1.9}
\newcommand{\ResAratioMedThree}{6}
\newcommand{\ResBratioMaxThree}{6.2}
\newcommand{\ResBestKsmall}{8}
\newcommand{\ResBestKlarge}{16}
\newcommand{\RealKfourLo}{73\%}
\newcommand{\RealKfourHi}{94\%}
\newcommand{\RealKtwoLo}{48\%}
\newcommand{\RealKtwoHi}{75\%}
\newcommand{\RealDebShareRaw}{35\%}
\newcommand{\RealDebShareTemp}{14\%}
\newcommand{\EnvRatioMed}{82}
\newcommand{\EnvRatioQten}{20}
\newcommand{\EnvRecords}{3{,}159}
\newcommand{\RealReproMax}{3\times10^{-17}}
\newcommand{\AudChanged}{36}
\newcommand{\AudTotal}{6{,}480{,}000}
\newcommand{\AudMaxEntry}{0.018}
\newcommand{\AudMaxSummary}{3.2\times10^{-5}}

\newcommand{\RnineMainTable}{%
\begin{tabular}{lcccccc}
\toprule
& & & \multicolumn{2}{c}{$n=250$} & \multicolumn{2}{c}{$n=5000$} \\
\cmidrule(lr){4-5}\cmidrule(lr){6-7}
law of $\eta$ & $\mathcal F^+(3)$ & cap slack & full & capped & full & capped \\
\midrule
four-point (B) & 0.299 & 0.101 & 0.022 & 0.148 & 0.006 & 0.102 \\
mixture $\lambda=\tfrac12$ & 0.362 & 0.038 & 0.029 & 0.052 & 0.006 & 0.038 \\
three-point (A) & 0.400 & 0.000 & 0.030 & 0.030 & 0.006 & 0.006 \\
B, variance $/16$ & 0.075 & 0.025 & 0.042 & 0.052 & 0.008 & 0.027 \\
\bottomrule
\end{tabular}}
\newcommand{\RnineEoneTable}{%
\begin{tabular}{llccccccc}
\toprule
& & & \multicolumn{2}{c}{$n=250$} & \multicolumn{2}{c}{$n=1000$} & \multicolumn{2}{c}{$n=5000$} \\
\cmidrule(lr){4-5}\cmidrule(lr){6-7}\cmidrule(lr){8-9}
law, direction & $\rho$ & $\mathcal F$ & MAE f/c & $\Delta$ & MAE f/c & $\Delta$ & MAE f/c & $\Delta$ \\
\midrule
B & 0.3 & 0.155 & 0.011/0.011 & $+0.000$ & 0.006/0.006 & $+0.000$ & 0.003/0.003 & $+0.000$ \\
 & 1 & 0.283 & 0.020/0.020 & $+0.000$ & 0.010/0.010 & $+0.000$ & 0.005/0.005 & $+0.000$ \\
 & 3 & 0.299 & 0.022/0.148 & $-0.126$ & 0.014/0.110 & $-0.096$ & 0.006/0.102 & $-0.095$ \\
\addlinespace[1pt]
$\lambda=\tfrac14$ & 0.3 & 0.155 & 0.011/0.011 & $+0.000$ & 0.005/0.005 & $+0.000$ & 0.002/0.002 & $+0.000$ \\
 & 1 & 0.279 & 0.020/0.021 & $+0.000$ & 0.010/0.011 & $-0.001$ & 0.004/0.005 & $-0.001$ \\
 & 3 & 0.337 & 0.028/0.075 & $-0.047$ & 0.014/0.063 & $-0.049$ & 0.006/0.062 & $-0.056$ \\
\addlinespace[1pt]
$\lambda=\tfrac12$ & 0.3 & 0.155 & 0.010/0.010 & $+0.000$ & 0.005/0.005 & $+0.000$ & 0.002/0.002 & $+0.000$ \\
 & 1 & 0.275 & 0.019/0.020 & $-0.002$ & 0.009/0.012 & $-0.003$ & 0.004/0.008 & $-0.004$ \\
 & 3 & 0.362 & 0.029/0.052 & $-0.024$ & 0.015/0.040 & $-0.025$ & 0.006/0.038 & $-0.032$ \\
\addlinespace[1pt]
$\lambda=\tfrac34$ & 0.3 & 0.155 & 0.009/0.009 & $+0.000$ & 0.005/0.005 & $+0.000$ & 0.002/0.002 & $+0.000$ \\
 & 1 & 0.271 & 0.017/0.021 & $-0.004$ & 0.009/0.014 & $-0.005$ & 0.004/0.012 & $-0.009$ \\
 & 3 & 0.382 & 0.029/0.037 & $-0.008$ & 0.015/0.023 & $-0.008$ & 0.007/0.018 & $-0.012$ \\
\addlinespace[1pt]
A & 0.3 & 0.155 & 0.008/0.008 & $+0.000$ & 0.004/0.004 & $+0.000$ & 0.002/0.002 & $+0.000$ \\
 & 1 & 0.267 & 0.016/0.021 & $-0.006$ & 0.008/0.017 & $-0.009$ & 0.003/0.016 & $-0.013$ \\
 & 3 & 0.400 & 0.030/0.030 & $+0.000$ & 0.015/0.015 & $+0.000$ & 0.006/0.006 & $+0.000$ \\
\addlinespace[1pt]
B weak & 0.3 & 0.039 & 0.013/0.014 & $+0.000$ & 0.007/0.007 & $+0.000$ & 0.003/0.003 & $+0.000$ \\
 & 1 & 0.071 & 0.024/0.025 & $-0.001$ & 0.012/0.012 & $+0.000$ & 0.006/0.006 & $+0.000$ \\
 & 3 & 0.075 & 0.042/0.052 & $-0.010$ & 0.017/0.035 & $-0.018$ & 0.008/0.027 & $-0.019$ \\
\addlinespace[1pt]
A weak & 0.3 & 0.039 & 0.013/0.013 & $+0.000$ & 0.007/0.007 & $+0.000$ & 0.003/0.003 & $+0.000$ \\
 & 1 & 0.067 & 0.023/0.025 & $-0.002$ & 0.012/0.013 & $-0.001$ & 0.006/0.007 & $-0.001$ \\
 & 3 & 0.100 & 0.040/0.040 & $+0.000$ & 0.021/0.021 & $+0.000$ & 0.010/0.010 & $+0.000$ \\
\addlinespace[1pt]
constant & 0.3 & 0.000 & 0.028/0.028 & $+0.000$ & 0.014/0.014 & $+0.000$ & 0.006/0.006 & $+0.000$ \\
 & 1 & 0.000 & 0.047/0.050 & $-0.002$ & 0.024/0.025 & $-0.001$ & 0.011/0.011 & $-0.001$ \\
 & 3 & 0.000 & 0.065/0.065 & $+0.000$ & 0.033/0.033 & $+0.000$ & 0.014/0.014 & $+0.000$ \\
\addlinespace[1pt]
two-point & 0.3 & 0.110 & 0.012/0.012 & $+0.000$ & 0.006/0.006 & $+0.000$ & 0.003/0.003 & $+0.000$ \\
 & 1 & 0.200 & 0.022/0.022 & $+0.000$ & 0.011/0.011 & $+0.000$ & 0.005/0.005 & $+0.000$ \\
 & 3 & 0.200 & 0.040/0.040 & $+0.000$ & 0.020/0.020 & $+0.000$ & 0.009/0.009 & $+0.000$ \\
\addlinespace[1pt]
asym., up & 0.3 & 0.065 & 0.011/0.018 & $-0.008$ & 0.005/0.016 & $-0.011$ & 0.002/0.015 & $-0.013$ \\
 & 1 & 0.099 & 0.018/0.040 & $-0.021$ & 0.009/0.040 & $-0.031$ & 0.004/0.043 & $-0.039$ \\
 & 3 & 0.134 & 0.028/0.034 & $-0.006$ & 0.015/0.020 & $-0.005$ & 0.006/0.012 & $-0.006$ \\
asym., down & 0.3 & 0.080 & 0.011/0.011 & $+0.000$ & 0.005/0.005 & $+0.000$ & 0.002/0.002 & $+0.000$ \\
 & 1 & 0.147 & 0.020/0.020 & $+0.000$ & 0.010/0.010 & $+0.000$ & 0.004/0.004 & $+0.000$ \\
 & 3 & 0.247 & 0.035/0.036 & $-0.001$ & 0.017/0.018 & $-0.001$ & 0.008/0.009 & $-0.002$ \\
\addlinespace[1pt]
\bottomrule
\end{tabular}}
\newcommand{\RnineArmsTable}{%
\begin{tabular}{llcccc}
\toprule
condition & sampling & bias f/c, $n=250$ & MAE f/c, $n=250$ & $n=1000$ & $n=5000$ \\
\midrule
B, up, $\rho=3$ & proportional & $+0.010$/$+0.141$ & 0.022/0.148 & 0.014/0.110 & 0.006/0.102 \\
 & equal & $+0.000$/$+0.096$ & 0.025/0.097 & 0.013/0.101 & 0.006/0.101 \\
 & iid, drop empty & $+0.006$/$+0.120$ & 0.023/0.130 & 0.014/0.106 & 0.006/0.102 \\
 & iid, drop $<30$ & $-0.017$/$-0.017$ & 0.025/0.025 & 0.020/0.020 & 0.006/0.102 \\
\addlinespace[1pt]
asym., down, $\rho=1$ & proportional & $+0.003$/$+0.004$ & 0.020/0.020 & 0.010/0.010 & 0.004/0.004 \\
 & equal & $+0.004$/$+0.004$ & 0.012/0.013 & 0.006/0.006 & 0.003/0.003 \\
 & iid, drop empty & $+0.004$/$+0.004$ & 0.021/0.020 & 0.010/0.010 & 0.005/0.005 \\
 & iid, drop $<30$ & $-0.076$/$-0.071$ & 0.076/0.071 & 0.010/0.010 & 0.004/0.004 \\
\addlinespace[1pt]
\bottomrule
\end{tabular}}
\newcommand{\RnineEtwoPopTable}{%
\begin{tabular}{llccccccccc}
\toprule
& & & & \multicolumn{3}{c}{$\rho=1$} & \multicolumn{3}{c}{$\rho=3$} \\
\cmidrule(lr){5-7}\cmidrule(lr){8-10}
law, ordering & $K$ & $G_R$ & $H_R$ & gap & B$-\mathcal F_R$ & A$-\mathcal F_R$ & gap & B$-\mathcal F_R$ & A$-\mathcal F_R$ \\
\midrule
smooth, feature & 1 & 0.0000 & 0.0469 & 0.212 & 0.217 & 0.217 & 0.296 & 0.375 & 0.375 \\
 & 2 & 0.0385 & 0.0085 & 0.016 & 0.020 & 0.092 & 0.088 & 0.126 & 0.159 \\
 & 4 & 0.0417 & 0.0053 & 0.010 & 0.014 & 0.072 & 0.030 & 0.057 & 0.126 \\
 & 8 & 0.0440 & 0.0029 & 0.006 & 0.008 & 0.054 & 0.016 & 0.036 & 0.094 \\
 & 16 & 0.0458 & 0.0012 & 0.003 & 0.003 & 0.034 & 0.003 & 0.013 & 0.059 \\
\addlinespace[1pt]
rare, feature & 1 & 0.0000 & 0.0342 & 0.178 & 0.185 & 0.185 & 0.264 & 0.320 & 0.320 \\
 & 2 & 0.0231 & 0.0111 & 0.026 & 0.033 & 0.106 & 0.050 & 0.107 & 0.183 \\
 & 4 & 0.0292 & 0.0050 & 0.016 & 0.017 & 0.071 & 0.031 & 0.055 & 0.122 \\
 & 8 & 0.0336 & 0.0006 & 0.002 & 0.002 & 0.025 & 0.007 & 0.013 & 0.044 \\
 & 16 & 0.0340 & 0.0002 & 0.000 & 0.001 & 0.015 & 0.001 & 0.004 & 0.027 \\
\addlinespace[1pt]
wave, feature & 1 & 0.0000 & 0.0504 & 0.213 & 0.224 & 0.224 & 0.273 & 0.389 & 0.389 \\
 & 2 & 0.0016 & 0.0488 & 0.173 & 0.184 & 0.221 & 0.230 & 0.346 & 0.383 \\
 & 4 & 0.0422 & 0.0081 & 0.015 & 0.026 & 0.090 & 0.055 & 0.108 & 0.156 \\
 & 8 & 0.0425 & 0.0078 & 0.013 & 0.024 & 0.089 & 0.047 & 0.099 & 0.153 \\
 & 16 & 0.0472 & 0.0032 & 0.003 & 0.009 & 0.056 & 0.012 & 0.043 & 0.097 \\
\addlinespace[1pt]
smooth, by $\eta$ (oracle) & 1 & 0.0000 & 0.0469 & 0.212 & 0.217 & 0.217 & 0.296 & 0.375 & 0.375 \\
 & 2 & 0.0385 & 0.0085 & 0.016 & 0.020 & 0.092 & 0.088 & 0.126 & 0.159 \\
 & 4 & 0.0433 & 0.0036 & 0.006 & 0.010 & 0.060 & 0.017 & 0.038 & 0.104 \\
 & 8 & 0.0460 & 0.0009 & 0.001 & 0.002 & 0.030 & 0.002 & 0.011 & 0.052 \\
 & 16 & 0.0467 & 0.0003 & 0.000 & 0.001 & 0.017 & 0.001 & 0.004 & 0.029 \\
\addlinespace[1pt]
rare, by $\eta$ (oracle) & 1 & 0.0000 & 0.0342 & 0.178 & 0.185 & 0.185 & 0.264 & 0.320 & 0.320 \\
 & 2 & 0.0218 & 0.0125 & 0.041 & 0.048 & 0.112 & 0.127 & 0.163 & 0.193 \\
 & 4 & 0.0322 & 0.0021 & 0.006 & 0.007 & 0.046 & 0.006 & 0.019 & 0.079 \\
 & 8 & 0.0341 & 0.0002 & 0.000 & 0.001 & 0.013 & 0.001 & 0.003 & 0.023 \\
 & 16 & 0.0341 & 0.0001 & 0.000 & 0.000 & 0.010 & 0.000 & 0.002 & 0.017 \\
\addlinespace[1pt]
wave, by $\eta$ (oracle) & 1 & 0.0000 & 0.0504 & 0.213 & 0.224 & 0.224 & 0.273 & 0.389 & 0.389 \\
 & 2 & 0.0416 & 0.0088 & 0.021 & 0.032 & 0.094 & 0.081 & 0.137 & 0.163 \\
 & 4 & 0.0475 & 0.0029 & 0.004 & 0.008 & 0.054 & 0.029 & 0.057 & 0.093 \\
 & 8 & 0.0494 & 0.0010 & 0.001 & 0.003 & 0.031 & 0.006 & 0.017 & 0.054 \\
 & 16 & 0.0500 & 0.0004 & 0.000 & 0.001 & 0.019 & 0.001 & 0.006 & 0.033 \\
\addlinespace[1pt]
\bottomrule
\end{tabular}}
\newcommand{\RnineEtwoFinTable}{%
\begin{tabular}{llccccc}
\toprule
law & $n$ & $K=1$ & $K=2$ & $K=4$ & $K=8$ & $K=16$ \\
\midrule
smooth & 250 & 0.000/0.296 & 0.030/0.094 & 0.039/0.048 & 0.040/0.037 & 0.045/0.043 \\
 & 1000 & 0.000/0.296 & 0.015/0.089 & 0.020/0.036 & 0.020/0.023 & 0.020/0.019 \\
 & 5000 & 0.000/0.296 & 0.007/0.088 & 0.009/0.031 & 0.009/0.018 & 0.008/0.008 \\
\addlinespace[1pt]
rare & 250 & 0.000/0.264 & 0.038/0.063 & 0.040/0.046 & 0.044/0.041 & 0.044/0.044 \\
 & 1000 & 0.000/0.264 & 0.019/0.053 & 0.021/0.037 & 0.021/0.020 & 0.021/0.021 \\
 & 5000 & 0.000/0.264 & 0.008/0.051 & 0.010/0.033 & 0.010/0.012 & 0.010/0.010 \\
\addlinespace[1pt]
wave & 250 & 0.000/0.273 & 0.026/0.231 & 0.037/0.059 & 0.046/0.041 & 0.051/0.043 \\
 & 1000 & 0.000/0.273 & 0.015/0.231 & 0.020/0.056 & 0.022/0.043 & 0.022/0.019 \\
 & 5000 & 0.000/0.273 & 0.007/0.230 & 0.010/0.056 & 0.010/0.047 & 0.010/0.013 \\
\addlinespace[1pt]
\bottomrule
\end{tabular}}
\newcommand{\RnineEthreeTable}{%
\begin{tabular}{lcccc}
\toprule
state & FAUC$_4$/FAUC$_8$ & FAUC$_2$/FAUC$_8$ & $G_{R}^{\mathrm{deb}}$: $K{=}4$/$K{=}8$ & $\Delta G^{\mathrm{deb}}/\Delta G^{\mathrm{plug}}$, $4\to8$ \\
\midrule
as released & 0.89 [0.76, 0.94] & 0.65 [0.56, 0.75] & 0.89 [0.85, 1.05] & 0.35 [-0.05, 0.79] \\
temperature-scaled & 0.80 [0.73, 0.89] & 0.62 [0.48, 0.72] & 0.89 [0.83, 1.08] & 0.14 [-0.04, 0.61] \\
\bottomrule
\end{tabular}}

\begin{document}
\maketitle

\begin{abstract}
A classifier's conditional accuracy can change while its confidence distribution
stays exactly the same. We study the worst-case movement of the reliability
relation under covariate shifts that preserve the distribution of the confidence
score, constraining the reweighting within each confidence level by a $\chi^2$
budget; the resulting worst case, as a function of the budget, is a fragility
profile. On an interval of budgets that can be computed from the source
distribution, the profile equals exactly the square root of the budget times the
within-level variance of the correctness propensity---the grouping-loss term of
calibration--refinement decompositions. Beyond this interval the profile is
governed by the tails of the propensity law, and the entire upward profile
determines the centred within-level law; consequently, calibration residual and
grouping variance do not determine fragility in general, though they do when
labels and predictions are deterministic. Since the propensity is not observed,
we restrict reweightings to a learned finite readout within confidence bins,
bound the part the restriction misses by the grouping variance remaining inside
readout cells, estimate the restricted profile with role-separated labels, and
provide a separate split-sample lower confidence bound. On ImageNet this bound is
positive in both splits for four of six primary classifiers and nine of twelve
additional ones as released, and for three of eighteen after temperature scaling.
Held-out drift under optimised reweightings fitted without evaluation labels
tracks the estimated profile; an exploratory label-permutation diagnostic yields
near-zero agreement for this statistic while largely reproducing the correlation
observed for unsigned random reweightings. Under natural shifts, the within-bin
composition change, which the source profile bounds, is smaller than the
within-cell change, which it does not control.
\end{abstract}

\section{Introduction}
\label{sec:intro}

A classifier's confidence describes its prediction; the reliability relation
$s\mapsto\Pr(C=1\mid S=s)$ describes how often predictions at that confidence are
correct. The latter depends on the data distribution, and calibration error
compares the two on the distribution at hand. Under shift the relation can move, and a common first check is whether the
confidence distribution has changed. This paper asks how much can happen when it
has not.

Within a confidence level, inputs share a score but not a propensity to be
correct. A shift that favours the harder inputs at each level, without changing
the mass of any level, leaves the score distribution unchanged and lowers
reliability; a rule predicting only when $S\ge t$ keeps its coverage while its error
rate rises (Proposition~\ref{prop:selective}, Figure~\ref{fig:concept}). Confidence histograms, coverage curves and accuracy
estimators computed from target confidences \citep{garg2022leveraging} cannot
register such a shift, although a monitor of the inputs might. We quantify the
exposure by the \emph{fragility profile}: the largest movement of the reliability
relation under covariate shifts that preserve the score distribution, as a
function of a $\chi^2$ budget on the reweighting within each confidence level.

The profile has an exact local law. With $\eta(X)=\Pr(C=1\mid X)$ and
$G_P(s)=\operatorname{Var}(\eta\mid S=s)$, the grouping heterogeneity of
calibration--refinement decompositions
\citep{degroot1983comparison,kull2015novel,perezlebel2023grouping}, the profile
equals $\sqrt{\rho\,G_P(s)}$ on an interval $[0,\rho_+]$ computable from the source
(Corollary~\ref{cor:local}), the non-negativity condition of
\citet{duchi2019variance} read within a level set; a static term of a calibration
decomposition is thus the rate at which reliability can drift. Beyond $\rho_+$ the
variance no longer suffices: via the
$\chi^2$ robust dual and the uniqueness of a law given its stop-loss transform
\citep{duchi2021learning,gushchin2018integrated}, the upward profile identifies the
whole centred within-level law of $\eta$ (Theorem~\ref{thm:ident}), and two
classifiers can agree in calibration residual and grouping variance yet differ in
fragility (Proposition~\ref{prop:witness}). This hierarchy collapses when labels
and predictions are deterministic functions of the input, where the reliability
relation fixes the profile (Section~\ref{sec:setup}).

The propensity is not observed, so we change the target rather than estimate it:
restricting reweightings to a learned finite readout inside confidence bins gives
a population restricted profile, which for fixed bins lies below the binned oracle
profile by at most the square root of the budget times the grouping variance left
inside readout cells (Propositions~\ref{prop:hierarchy} and~\ref{prop:approx}). Estimating and bounding it are
separate problems. Role-separated labels remove a selection bias from the plug-in
estimate, which nevertheless stays upward biased in expectation and has no
guarantee; a split-sample construction gives a finite-sample lower confidence bound
(Proposition~\ref{prop:fscert}). We evaluate these objects on six primary and
twelve additional ImageNet classifiers, check held-out tracking with exploratory
within-bin label-permutation diagnostics, compare the profile with its variance on the
same cells and, with known propensities, in finite samples, and decompose natural drift into a composition term that the source
profile bounds and a within-cell term it does not.

\paragraph{Contributions.}
\begin{itemize}\itemsep1pt \parskip0pt
\item A formulation of reliability drift under score-preserving covariate shift,
  with an exact local law and a proof that the upward profile identifies the
  centred within-level propensity law (Corollary~\ref{cor:local},
  Theorem~\ref{thm:ident}), and its scope: general-law witnesses and the
  deterministic-label collapse.
\item An observable restriction with a population ordering and approximation bound
  at fixed bins, a role-separated plug-in estimator with its remaining bias stated,
  and a split-sample lower confidence bound with per-analysis coverage
  (Propositions~\ref{prop:hierarchy}, \ref{prop:approx} and~\ref{prop:fscert}).
\item An empirical study on eighteen ImageNet classifiers with exploratory permutation diagnostics,
  a same-cell comparison of profile and variance with a known-propensity check of its
  finite-sample magnitude, and a natural-shift decomposition
  (Sections~\ref{sec:cannotsee} and~\ref{sec:natural}).
\end{itemize}

\begin{figure}[tb]
\centering
\includegraphics[width=\textwidth]{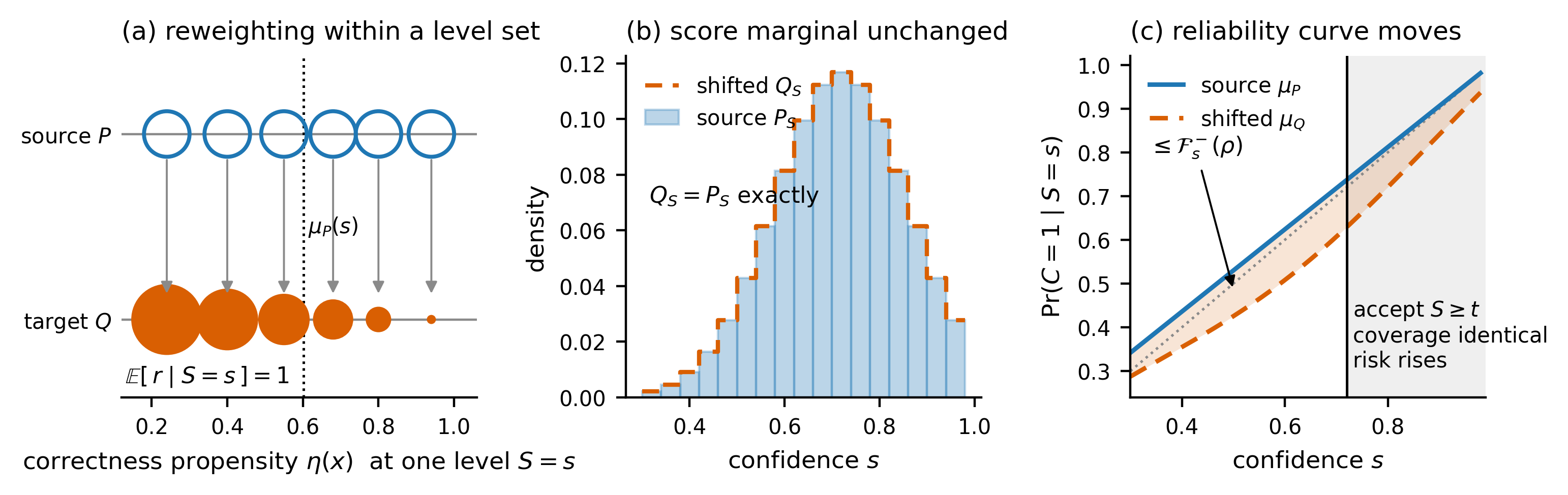}
\caption{\textbf{Reliability can move while the confidence distribution does not.}
\textbf{(a)} Within a confidence level set, mass moves between inputs that share a
score but differ in their propensity to be correct; $\mathbb E[r\mid S=s]=1$ keeps
the level's mass fixed. \textbf{(b)} The score distribution, and so any confidence
histogram, is unchanged. \textbf{(c)} The reliability curve moves by at most the
fragility profile; a threshold policy keeps its coverage while its error rate rises.
Movement is possible exactly where $G_P>0$ (Proposition~\ref{prop:invisible}). The
schematic is the population construction; the finite-sample one of
Section~\ref{sec:certificate} preserves the binned marginal.}
\label{fig:concept}
\end{figure}

\section{Setup}
\label{sec:setup}

Fix a trained classifier; nothing below modifies it. For an input $X$ with label
$Y$ and prediction $\hat Y(X)$, write $C=\mathbf 1\{\hat Y(X)=Y\}$ for correctness
and $S=s(X)\in[0,1]$ for the top-label probability. Let $\eta(X)=\Pr(C=1\mid X)$ be
the correctness propensity and $\mu_P(s)=\mathbb E_P[\eta\mid S=s]$ the
reliability relation, whose distance from the diagonal is what calibration error
measures. Inputs that share a confidence value need not share a propensity, and
their disagreement is
\begin{equation}
  G_P(s) \;=\; \operatorname{Var}_P\!\big(\eta(X)\mid S=s\big),
  \qquad Z \;=\; \eta - \mu_P(s),
  \label{eq:G}
\end{equation}
so $\mathbb E_P[Z\mid S=s]=0$ and $G_P(s)=\mathbb E_P[Z^2\mid S=s]$. This is the
integrand of the grouping loss \citep{degroot1983comparison,kull2015novel,
perezlebel2023grouping}, which measures heterogeneity a score cannot express; here
it will measure how far reliability can move.

\paragraph{Deterministic labels.}
If the label and the prediction are both deterministic functions of $X$, then
$\eta\in\{0,1\}$, $G_P(s)=\mu(1-\mu)$ with $\mu=\mu_P(s)$, and the profile defined
below is
\begin{equation}
  \mathcal F^{+}_s(\rho)=\min\big\{\sqrt{\rho\,\mu(1-\mu)},\,1-\mu\big\},\qquad
  \mathcal F^{-}_s(\rho)=\min\big\{\sqrt{\rho\,\mu(1-\mu)},\,\mu\big\},
  \label{eq:deterministic}
\end{equation}
also at $\mu\in\{0,1\}$, where both sides vanish (Appendix~\ref{app:claims}). In
this family the reliability relation determines the profile; the non-determination
statements of Section~\ref{sec:theory} concern general laws. One label
per input does not reveal whether labelling is deterministic, and we assume
neither; the empirical quantities of Section~\ref{sec:certificate} average
correctness over groups and need not be $\{0,1\}$-valued even then.

\paragraph{Score-preserving shifts.}
Let $Q\ll P$ with density $r=\mathrm dQ/\mathrm dP$ a function of $X$, so the shift
reweights inputs and leaves $P(Y\mid X)$ unchanged. We restrict to reweightings
\emph{within} confidence level sets and size them by a conditional Pearson $\chi^2$
budget:
\begin{equation}
  \mathbb E_P[\,r\mid S=s\,]=1 \ \text{ for $P_S$-a.e. } s,
  \qquad
  \mathbb E_P\!\big[(r-1)^2\mid S=s\big]\le\rho .
  \label{eq:set}
\end{equation}
The first condition holds if and only if $Q_S=P_S$ (Lemma~\ref{lem:preserve}), so
it defines score preservation rather than adding an assumption. Write
$\mathcal A_s(\rho)$ for the set of $r\ge0$ satisfying~\eqref{eq:set}. For scale,
doubling a subgroup's share of a level set from $0.2$ to $0.4$ costs
$\rho=0.2\cdot1^2+0.8\cdot0.25^2=0.25$. Smooth $f$-divergences behave alike for
small $\rho$ (Appendix~\ref{app:fdiv}).

\paragraph{Calibration fragility.}
The largest upward movement of the reliability relation at level $s$ is
\begin{equation}
  \mathcal F^{+}_s(\rho) \;=\; \sup_{r\in\mathcal A_s(\rho)}
  \mathbb E_P\big[(r-1)\,Z \mid S=s\big],
  \label{eq:fragility}
\end{equation}
and $\mathcal F^{-}_s$ replaces $Z$ by $-Z$;
$\rho\mapsto(\mathcal F^{+}_s,\mathcal F^{-}_s)$ is the \emph{fragility profile}, a
function of the source law alone.

\section{The fragility profile}
\label{sec:theory}

The profile, a supremum over reweightings, depends only on the conditional law of $Z$.

\begin{proposition}[Reliability transport]
\label{prop:transport}
For any $Q$ satisfying~\eqref{eq:set},
$\;\mu_Q(s)-\mu_P(s) = \mathbb E_P[(r-1)Z \mid S=s]$ for $P_S$-a.e.\ $s$.
\end{proposition}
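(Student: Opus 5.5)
The plan is to express $\mu_Q(s)$ as a conditional expectation under $P$, weighted by $r$, and then use the normalisation $\mathbb E_P[r\mid S=s]=1$ twice.

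\textbf{Step 1 (propensity is unchanged).} Because $r$ is a function of $X$ and $Q$ leaves $P(Y\mid X)$ unchanged, $\Pr_Q(C=1\mid X)=\eta(X)$ $Q$-a.s. Hence $\mu_Q(s)=\mathbb E_Q[\eta\mid S=s]$. This follows by the tower property, since $S=s(X)$ is $X$-measurable.

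\textbf{Step 2 (change of measure for a conditional expectation).} By Lemma~\ref{lem:preserve}, the first condition in~\eqref{eq:set} gives $Q_S=P_S$. The claim is
\[
\mathbb E_Q[g(X)\mid S]=\mathbb E_P[r\,g(X)\mid S]
\]
for bounded $g$. I would verify the defining property directly. For bounded measurable $h$,
\[
\mathbb E_Q[g\,h(S)]=\mathbb E_P[r\,g\,h(S)]=\mathbb E_P\big[h(S)\,\mathbb E_P[rg\mid S]\big].
\]
Since $\mathbb E_P[r\mid S]=1$, the right-hand side equals $\mathbb E_P\big[r\,h(S)\,\mathbb E_P[rg\mid S]\big]$, which in turn equals $\mathbb E_Q\big[h(S)\,\mathbb E_P[rg\mid S]\big]$. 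So the $\sigma(S)$-measurable function $\mathbb E_P[rg\mid S]$ is a version of $\mathbb E_Q[g\mid S]$. The equality holds $Q_S$-a.e., which is the same as $P_S$-a.e. because $Q_S=P_S$.

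This is the only step that is not pure algebra. The main care needed is the a.e.\ bookkeeping between versions under $P$ and under $Q$, and Lemma~\ref{lem:preserve} is exactly what resolves it. Integrability is immediate, because $\eta\in[0,1]$ and $r\in L^1(P)$.

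\textbf{Step 3 (algebra).} Apply Step 2 with $g=\eta=\mu_P(S)+Z$. This gives
\[
\mu_Q(s)=\mu_P(s)\,\mathbb E_P[r\mid S=s]+\mathbb E_P[rZ\mid S=s]=\mu_P(s)+\mathbb E_P[rZ\mid S=s].
\]
The factor $\mu_P(S)$ comes out of the conditional expectation because it is $\sigma(S)$-measurable. Finally, subtract $\mathbb E_P[Z\mid S=s]=0$ from~\eqref{eq:G} to replace $rZ$ by $(r-1)Z$.

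The $\chi^2$ budget is not needed for this identity. It only guarantees that $(r-1)Z$ is square-integrable, which makes the right-hand side well defined in $L^2$ as well.
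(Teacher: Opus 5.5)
Your proposal is correct and follows essentially the paper's proof: the paper also uses $\mathbb E_Q[W\mid S]=\mathbb E_P[rW\mid S]$ (the denominator $\mathbb E_P[r\mid S]$ being $1$), the tower property over $\sigma(X)$ to bring in $\eta$, and $\mathbb E_P[r-1\mid S]=0$ to centre $\eta$ at $\mu_P(s)$. The differences are cosmetic. The paper applies the change of measure to $C$ and conditions on $X$ under $P$, whereas you first establish $\Pr_Q(C=1\mid X)=\eta$ under $Q$ and then change measure for $\eta$. You also verify the conditional change-of-measure formula from its defining property rather than quoting it.
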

\noindent
The proof uses that $r$ is a function of $X$; a density depending on $Y$ would
move $\eta$ itself (Appendix~\ref{app:transport}). Reliability drift is thus a
covariance between the reweighting and the centred propensity, and
$-\mathcal F^{-}_s(\rho)\le\mu_Q(s)-\mu_P(s)\le\mathcal F^{+}_s(\rho)$ for every
admissible $Q$, with both bounds attained.

\subsection{An exact local law, and what lies beyond it}

Under $P(\cdot\mid S=s)$, so that $\mathbb EZ=0$ and $\mathbb EZ^2=G_P(s)$, let
$z_- = \operatorname*{ess\,inf} Z$ and $z_+ = \operatorname*{ess\,sup} Z$, and for
$\tau\in\mathbb R$ define the stop-loss transform, its second moment and their
ratio
\begin{equation}
  M_1(\tau) = \mathbb E[(Z-\tau)_+],\quad
  M_2(\tau) = \mathbb E[(Z-\tau)_+^2],\quad
  \kappa(\tau) = \frac{M_2(\tau)}{M_1(\tau)^2}-1 .
  \label{eq:stoploss}
\end{equation}

\begin{theorem}[Full fragility profile]
\label{thm:profile}
Assume $G_P(s)>0$, let $p_+ = \Pr(Z=z_+)$ and adopt the convention
$1/0:=+\infty$. Then $\mathcal F^{+}_s$ is concave and non-decreasing, and
\begin{enumerate}\itemsep0pt
  \item \emph{variance-controlled}, for $0\le\rho\le\rho_+ := G_P(s)/z_-^2$:
    $\ \mathcal F^{+}_s(\rho)=\sqrt{\rho\,G_P(s)}$, attained by the affine
    reweighting $r^\ast = 1+\sqrt{\rho/G_P(s)}\,Z$;
  \item \emph{tail-controlled}, for $\rho_+<\rho<\rho_{\mathrm{sat}} := 1/p_+-1$:
    $\ \mathcal F^{+}_s(\rho)=\tau_\rho + M_2(\tau_\rho)/M_1(\tau_\rho)$ where
    $\kappa(\tau_\rho)=\rho$, attained by the truncated-affine
    $r^\ast = (Z-\tau_\rho)_+/M_1(\tau_\rho)$;
  \item \emph{saturated}, for $\rho\ge\rho_{\mathrm{sat}}$:
    $\ \mathcal F^{+}_s(\rho)=z_+$.
\end{enumerate}
The downward profile is obtained by replacing $Z$ with $-Z$, giving
$\rho_-=G_P(s)/z_+^2$ and $\rho_{\mathrm{sat}}^-=1/\Pr(Z=z_-)-1$.
Proof in Appendix~\ref{app:profile}.
\end{theorem}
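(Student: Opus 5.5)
Fix $s$ and work under $P(\cdot\mid S=s)$, so the problem is the one-dimensional program
\[
\sup\{\mathbb E[(r-1)Z] : r\ge0,\ \mathbb E r=1,\ \mathbb E(r-1)^2\le\rho\}.
\]
Since $r$ is a function of $X$ and the objective depends on $X$ only through $Z$, conditioning $r$ on $Z$ keeps feasibility (Jensen for the $\chi^2$ term) and the objective. We may therefore take $r=g(Z)$. Concavity and monotonicity come first and are cheap. The feasible sets are nested in $\rho$, which gives monotonicity. A convex combination of feasible $r_0,r_1$ at budgets $\rho_0,\rho_1$ is feasible at budget $(\sqrt{\lambda\rho_0}+\sqrt{(1-\lambda)\rho_1})^2$ by Minkowski. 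This gives concavity of $\mathcal F^+_s$ as a function of $\sqrt\rho$ composed with a concave map. Cleaner still, once the explicit formula is in hand we can read off concavity directly, or take concavity from the Lagrangian dual representation below (a pointwise infimum of affine functions of $\rho$).

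\textbf{Regime 1.} Cauchy--Schwarz gives $\mathbb E[(r-1)Z]\le\sqrt{\rho G_P(s)}$ for every feasible $r$. The affine $r^\ast=1+\sqrt{\rho/G_P}\,Z$ has mean $1$ and $\chi^2$ exactly $\rho$. It attains the bound, and it is nonnegative iff $1+\sqrt{\rho/G_P}\,z_-\ge0$, that is, iff $\rho\le G_P/z_-^2$. This settles the first item; note $z_-<0$ since $G_P>0$ and $\mathbb EZ=0$.

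\textbf{Regimes 2 and 3.} Here we use Lagrangian duality for the concave-in-$r$ problem (linear objective, convex constraints, Slater holds at $r\equiv1$ when $\rho>0$). Pointwise maximisation of $rZ-\lambda(r-1)^2-\nu r$ over $r\ge0$ gives the truncated-affine form $r=(Z-\tau)_+/c$ for some $\tau$ and $c>0$.

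Rather than lean on duality, I would verify optimality directly. Define $r_\tau=(Z-\tau)_+/M_1(\tau)$. This has mean $1$ and $\chi^2$ value $\mathbb E r_\tau^2-1=\kappa(\tau)$. For any feasible $r$ with $\mathbb E(r-1)^2=\rho=\kappa(\tau)$, apply Cauchy--Schwarz to $\mathbb E[r(Z-\tau)]\le\mathbb E[r(Z-\tau)_+]\le\sqrt{\mathbb E r^2}\sqrt{M_2}$, which uses $r\ge0$. Using $\mathbb E r^2=1+\rho=M_2/M_1^2$, this yields $\mathbb E[rZ]\le\tau+M_2/M_1$, with equality at $r_\tau$. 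Since $\mathbb E[(r-1)Z]=\mathbb E[rZ]$, this proves the formula. A feasible $r$ with strictly smaller $\chi^2$ is handled by monotonicity in $\rho$ (or by the same bound with $\le$).

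\textbf{Main obstacle.} The remaining work, which I expect to be the main obstacle, is showing that $\kappa$ maps $\tau\in(z_-,z_+)$ continuously and monotonically onto $(\rho_+,\rho_{\mathrm{sat}})$, so that $\tau_\rho$ exists. At $\tau\le z_-$ we have $(Z-\tau)_+=Z-\tau$, so $\kappa(\tau)=\operatorname{Var}Z/\tau^2=G_P/\tau^2$. This equals $\rho_+$ at $\tau=z_-$ and matches regime 1 continuously. As $\tau\uparrow z_+$, $M_2/M_1^2\to1/p_+$ if $p_+>0$, and $\to\infty$ otherwise. Continuity is from dominated convergence. For monotonicity I would first try to show that $\log M_2-2\log M_1$ is non-decreasing via $M_2'=-2M_1$ and $M_1'=-\Pr(Z>\tau)$. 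The derivative is then
\[
2\bigl[\Pr(Z>\tau)/M_1-M_1/M_2\bigr],
\]
which is $\ge0$ by Cauchy--Schwarz, since $M_1^2\le\Pr(Z>\tau)M_2$. If strict monotonicity fails on flat pieces, pick any solution; the value formula is then unaffected because the bound argument holds for each.

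Saturation is immediate: $\mathbb E[rZ]\le z_+$ always. For $\rho\ge1/p_+-1$, $r=\mathbf 1\{Z=z_+\}/p_+$ is feasible and attains $z_+$. The downward statements follow by applying everything to $-Z$.
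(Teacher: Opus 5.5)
Your proposal is correct and follows the paper's proof essentially step for step. It uses Cauchy--Schwarz for the affine regime, the same verification bound $\mathbb E[rZ]\le\tau+M_2(\tau)/M_1(\tau)$ attained by $(Z-\tau)_+/M_1(\tau)$, monotonicity of $\kappa$ from $M_2'=-2M_1$, $M_1'=-\Pr(Z>\tau)$ and Cauchy--Schwarz, and the indicator reweighting for saturation. Your remark that any root of $\kappa(\tau)=\rho$ gives the same value means this theorem does not need the strict monotonicity the paper proves.

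There is one slip in the concavity step. Minkowski gives the budget $(\lambda\sqrt{\rho_0}+(1-\lambda)\sqrt{\rho_1})^2$, not $(\sqrt{\lambda\rho_0}+\sqrt{(1-\lambda)\rho_1})^2$, and the latter does not yield concavity. The paper avoids this with the pointwise bound $\mathbb E[(r_\lambda-1)^2]\le\lambda\rho_0+(1-\lambda)\rho_1$, which gives concavity in $\rho$ directly.
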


The closed form is the $\chi^2$ case of the Cressie--Read robust dual
\citep[Lemma~1]{duchi2021learning}, a higher-moment risk measure
\citep{krokhmal2007higher}; we state it for its regimes. The first ends exactly when
the affine optimiser would put negative weight on the hardest inputs, the condition
of \citet[inequality~(9)]{duchi2019variance} for their objective to equal its
mean-plus-standard-deviation form.

\begin{corollary}[Local law]
\label{cor:local}
$\mathcal F^{\pm}_s(\rho)=\sqrt{\rho\,G_P(s)}$ holds \emph{with equality on the
closed interval} $[0,\rho_\pm]$, not merely asymptotically; hence
$\lim_{\rho\downarrow 0}\mathcal F^{\pm}_s(\rho)^2/\rho = G_P(s)$.
\end{corollary}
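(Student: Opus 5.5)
The corollary follows from item 1 of Theorem~\ref{thm:profile} together with a direct Cauchy--Schwarz argument. I would use the theorem itself only to confirm that the interval is closed.

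\textbf{Upper bound, for all $\rho\ge0$.} For any $r\in\mathcal A_s(\rho)$ we have $\mathbb E[Z\mid S=s]=0$. Conditional Cauchy--Schwarz then gives
\[
\mathbb E[(r-1)Z\mid S=s]\le \sqrt{\mathbb E[(r-1)^2\mid S=s]}\,\sqrt{G_P(s)}\le\sqrt{\rho\,G_P(s)} .
\]
Hence $\mathcal F^{+}_s(\rho)\le\sqrt{\rho G_P(s)}$ for every $\rho$. The case $G_P(s)=0$ is covered trivially: then $Z=0$ a.s., both sides vanish, and $\rho_\pm=+\infty$ under the stated convention.

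\textbf{Attainment on the closed interval.} For $G_P(s)>0$, take the affine reweighting $r^\ast=1+\sqrt{\rho/G_P(s)}\,Z$.
\begin{itemize}
\item It has conditional mean $1$, since $\mathbb E Z=0$.
\item Its $\chi^2$ cost is exactly $\rho$, since $\mathbb E Z^2=G_P(s)$.
\item Its objective value is $\sqrt{\rho/G_P(s)}\,\mathbb E Z^2=\sqrt{\rho G_P(s)}$.
\end{itemize}
The only remaining requirement is $r^\ast\ge0$ a.s. Because $z_-\le0$, this is equivalent to $1+\sqrt{\rho/G_P(s)}\,z_-\ge0$, i.e.\ $\rho\le G_P(s)/z_-^2=\rho_+$.

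The inequality is non-strict, so the endpoint $\rho=\rho_+$ is included. At that endpoint $r^\ast$ vanishes exactly on $\{Z=z_-\}$, which is still admissible. This is the point to stress: equality holds on $[0,\rho_+]$ and not only on $[0,\rho_+)$, and this matches item 1 of Theorem~\ref{thm:profile}.

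The downward case follows by replacing $Z$ with $-Z$, which exchanges $z_-$ and $-z_+$ and gives $\rho_-=G_P(s)/z_+^2$.

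\textbf{The limit.} We need $\rho_\pm>0$, so that the equality region contains a right-neighbourhood of $0$. This holds because $Z$ takes values in $[-1,1]$, which gives $|z_\mp|\le1$ and hence $\rho_\pm\ge G_P(s)>0$. Therefore $\mathcal F^{\pm}_s(\rho)^2/\rho=G_P(s)$ identically for $0<\rho\le\rho_\pm$, and the limit follows immediately.

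\textbf{Main obstacle.} I expect the only delicate step to be degenerate edge cases. One is $z_-=0$, which together with $\mathbb E Z=0$ forces $Z\equiv0$ and so reduces to $G_P(s)=0$. The other is confirming that the essential infimum, rather than the infimum, is what governs a.s.\ non-negativity. Both are settled by the remarks above.
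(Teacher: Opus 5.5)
Your proposal is correct. It is essentially the paper's own argument for regime~(i) of Theorem~\ref{thm:profile}: drop $r\ge0$, bound $\mathbb E[(r-1)Z\mid S=s]$ by Cauchy--Schwarz, and note that the equality case $r^\ast=1+\sqrt{\rho/G_P(s)}\,Z$ is non-negative exactly when $\rho\le G_P(s)/z_-^2$, which gives the closed interval and then the limit. One small slip: when $G_P(s)=0$ one has $z_\mp=0$, so $\rho_\pm=G_P(s)/z_\mp^2$ is $0/0$ rather than $+\infty$ under the $1/0$ convention, which is why the paper assumes $G_P(s)>0$; the identity $\mathcal F^{\pm}_s\equiv0=\sqrt{\rho\,G_P(s)}$ in that case still holds trivially.
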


\noindent
The grouping heterogeneity of a static decomposition is thus the exact
coefficient of reliability drift at small budgets, on an interval computable from
the source. Figure~\ref{fig:profile} (left) shows the three regimes for one law.
Either of the last two can be empty: atomless $Z$ never saturates, and a
two-valued $Z$, such as~\eqref{eq:deterministic}, passes directly from the
square-root law to saturation. Two consequences need no tail analysis. First,
$\mathcal F^{\pm}_s>0$ exactly where $G_P(s)>0$:

\begin{proposition}[Score-marginal-invisible drift]
\label{prop:invisible}
Let $A$ be a measurable set of confidence values with $P_S(A)>0$. If $G_P>0$
holds $P_S$-a.e.\ on $A$, then for every $\rho>0$ there is a $Q$
satisfying~\eqref{eq:set} with $Q_S=P_S$ exactly and $\mu_Q>\mu_P$ $P_S$-a.e.\ on
$A$. If instead $G_P=0$ holds $P_S$-a.e.\ on $A$, then every such $Q$ has
$\mu_Q=\mu_P$ $P_S$-a.e.\ on $A$.
\end{proposition}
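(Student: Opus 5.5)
The plan is to build the reweighting globally from the affine optimiser of Theorem~\ref{thm:profile}, made measurable in $X$, and then to read off $\mu_Q-\mu_P$ through Proposition~\ref{prop:transport}. The second claim is immediate from transport. On $A$, $G_P=0$ a.e. means $Z=0$ $P(\cdot\mid S=s)$-a.s. for a.e.\ $s\in A$. Equivalently, $\mathbb E_P[Z^2\mathbf 1\{S\in A\}]=\int_A G_P\,dP_S=0$, so $Z\mathbf 1\{S\in A\}=0$ $P$-a.s. Hence for any admissible $r$ we get $\mathbb E_P[(r-1)Z\mid S=s]=0$ for a.e.\ $s\in A$, and Proposition~\ref{prop:transport} gives $\mu_Q=\mu_P$ there. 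Integrability of $(r-1)Z$ is automatic because $|Z|\le1$ and $\mathbb E[(r-1)^2\mid S]\le\rho$.

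For the first claim, fix $\rho>0$. The affine weight $1+cZ$ may go negative, so I will use a level-dependent step size
\[
c(s)=\min\Big\{\sqrt{\rho/G_P(s)},\;1/|z_-(s)|\Big\},
\]
taking $c(s)=0$ off $A$. The candidate is $r(X)=1+c(S)Z$ on $\{S\in A\}$ and $r=1$ elsewhere. Here $z_-(s)$ is the conditional essential infimum of $Z$ given $S=s$. It is negative whenever $G_P(s)>0$, because $\mathbb EZ=0$ and $Z$ is not a.s.\ zero. For the three checks below I will work with a regular conditional distribution of $X$ given $S$.
\begin{enumerate}
\item Nonnegativity: $1+cZ\ge 1+c\,z_-\ge0$ a.s.
\item Mean one: $\mathbb E[r\mid S]=1+c\,\mathbb E[Z\mid S]=1$. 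By Lemma~\ref{lem:preserve} this gives $Q_S=P_S$ exactly, with $dQ=r\,dP$ a probability measure.
\item Budget: $\mathbb E[(r-1)^2\mid S]=c^2G_P\le\rho$.
\end{enumerate}
Transport then yields $\mu_Q(s)-\mu_P(s)=c(s)\,G_P(s)$. This is strictly positive wherever $G_P(s)>0$, since $c(s)>0$ there ($G_P\le1/4$ is finite and $|z_-|\le1$). So $\mu_Q>\mu_P$ a.e.\ on $A$. The same construction is the regime-1 optimiser of Theorem~\ref{thm:profile} when $\rho\le\rho_+(s)$; otherwise it is a feasible point attaining $\sqrt{\rho'G_P}$ with $\rho'=\rho_+(s)<\rho$.

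The main obstacle is measurability of $s\mapsto c(s)$, and hence that $r$ is a genuine function of $X$. $G_P(s)$ is a conditional variance and is measurable in the usual way. The conditional essential infimum $z_-(s)$ needs more care. I would define it through the regular conditional law $\kappa(s,\cdot)$ of $Z$ given $S=s$:
\[
z_-(s)=\sup\{q\in\mathbb Q:\kappa(s,(-\infty,q))=0\}.
\]
This is a countable supremum of measurable functions and so is measurable. If that becomes delicate, a fallback is to drop the $z_-$ cap entirely and use the clipped weight $r=1+c\,(Z\vee(-1/c))$, recentred by its conditional mean. A simpler alternative is the crude bound $|Z|\le1$: $c(s)=\min\{\sqrt{\rho/G_P(s)},1\}$ already ensures $1+cZ\ge0$ while keeping $c>0$. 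I will adopt this second choice, which removes the obstacle altogether.
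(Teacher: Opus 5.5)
Your proof is correct and is essentially the paper's: both use the affine reweighting $r=1+c\,Z\,\mathbf 1\{S\in A\}$ with $c\le 1$, so that $|Z|\le 1$ gives $r\ge 0$, and then read off $\mu_Q-\mu_P=c\,G_P>0$ from Proposition~\ref{prop:transport}; part (ii) rests on the same observation that $Z=0$ conditionally almost surely. The only difference is that the paper takes a constant $\varepsilon\in(0,\min\{1,\sqrt\rho\}]$, with budget $\varepsilon^2G_P\le\varepsilon^2\le\rho$, which avoids any measurability question about a level-dependent step, whereas your $c(s)=\min\{\sqrt{\rho/G_P(s)},1\}$ spends more of the budget but needs a measurable version of $G_P$ (standard); note also that your closing remark about coinciding with the regime-(i) optimiser holds for the $z_-$-capped step you abandoned, not for the step you adopted.
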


\noindent
(Sets of positive measure are needed because $\mu_P$ and $G_P$ are defined only
$P_S$-a.e.; Appendix~\ref{app:invisible}.) Second, a threshold policy inherits the
drift at fixed coverage. For \emph{predict iff $S\ge t$}
\citep{elyaniv2010foundations,geifman2017selective}, with selective risk
$\mathcal R(t)=\Pr(C=0\mid S\ge t)$:

\begin{proposition}[Selective-risk transport]
\label{prop:selective}
For $t$ with $P(S\ge t)>0$, under a score-preserving shift coverage is exactly preserved,
$Q(S\ge t)=P(S\ge t)$, and
\begin{equation}
  \mathcal R_Q(t)-\mathcal R_P(t)= -\,\frac{\mathbb E_{P_S}\!\big[\delta(S)\mathbf 1\{S\ge t\}\big]}
                        {\Pr(S\ge t)},
  \qquad \delta(s)=\mu_Q(s)-\mu_P(s),
\end{equation}
so the profile bounds the movement in each direction by the tail-averaged
$\mathcal F^{\mp}$ above $t$ (Appendix~\ref{app:selective}).
\end{proposition}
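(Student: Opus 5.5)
The plan is to reduce everything to conditioning on $S$ and then apply Proposition~\ref{prop:transport} pointwise. The argument has three parts: coverage preservation, the selective-risk identity, and the bound by the fragility profile.

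\textbf{Coverage.} The event $\{S\ge t\}$ is a function of $S$. By Lemma~\ref{lem:preserve}, the condition $\mathbb E_P[r\mid S]=1$ gives $Q_S=P_S$. Directly, $Q(S\ge t)=\mathbb E_P[r\,\mathbf 1\{S\ge t\}]=\mathbb E_P[\mathbf 1\{S\ge t\}\,\mathbb E_P[r\mid S]]=P(S\ge t)$ by the tower property. This quantity is positive by assumption, so both selective risks are well defined.

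\textbf{Selective-risk identity.} Write $\mathcal R_Q(t)=1-Q(C=1, S\ge t)/Q(S\ge t)$. Because $r$ depends only on $X$, $Q(C=1\mid X)=\eta(X)$. Then
\[
Q(C=1,S\ge t)=\mathbb E_Q[\eta\,\mathbf 1\{S\ge t\}]=\mathbb E_{Q_S}\big[\mu_Q(S)\mathbf 1\{S\ge t\}\big].
\]
Replacing $Q_S$ by $P_S$ turns this into $\mathbb E_{P_S}[\mu_Q(S)\mathbf 1\{S\ge t\}]$. The analogous expression holds under $P$. Subtracting the two and dividing by the common denominator $P(S\ge t)$ gives the displayed formula with $\delta=\mu_Q-\mu_P$.

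One point needs care. $\mu_Q$ is defined only $Q_S$-a.e., which is the same as $P_S$-a.e. because the marginals coincide, and Proposition~\ref{prop:transport} gives $\delta(s)=\mathbb E_P[(r-1)Z\mid S=s]$ for $P_S$-a.e.\ $s$. So the integrand is defined almost everywhere, and it is integrable since $|\delta|\le1$.

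\textbf{Bounding by the profile.} The constraints in~\eqref{eq:set} hold for $P_S$-a.e.\ $s$, so $r$ restricted to each level lies in $\mathcal A_s(\rho)$. By the definition~\eqref{eq:fragility} and Proposition~\ref{prop:transport}, this gives $-\mathcal F^-_s(\rho)\le\delta(s)\le\mathcal F^+_s(\rho)$ for a.e.\ $s$. Integrating over $\{S\ge t\}$ against $P_S$ and normalizing yields
\[
-\frac{\mathbb E_{P_S}[\mathcal F^+_S(\rho)\mathbf 1\{S\ge t\}]}{P(S\ge t)}\le\mathcal R_Q(t)-\mathcal R_P(t)\le\frac{\mathbb E_{P_S}[\mathcal F^-_S(\rho)\mathbf 1\{S\ge t\}]}{P(S\ge t)}.
\]
The sign flip comes from the minus in the identity, which is the "$\mathcal F^\mp$" in the statement.

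\textbf{Main obstacle.} The step I expect to be most delicate is measurability of $s\mapsto\mathcal F^\pm_s(\rho)$, which is needed to integrate it. I would handle this through Theorem~\ref{thm:profile}: it expresses $\mathcal F^+_s$ as a function of the conditional law of $Z$ given $S=s$, via $M_1$, $M_2$, $z_\pm$ and $G_P$. Using a regular conditional distribution, each of these ingredients is measurable in $s$.

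As a fallback, I would note that the bound is only needed for a fixed admissible $r$. In that case one can integrate $\delta$ itself and state the tail-averaged profile as an outer integral, so that measurability of the profile is not required.
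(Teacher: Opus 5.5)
Your proposal is correct and follows the paper's proof essentially step for step: coverage from $Q_S=P_S$, the identity by conditioning on $\sigma(S)$ and replacing $Q_S$ by $P_S$ in numerator and denominator, and the two-sided bound by integrating the a.e.\ pointwise bounds $-\mathcal F^-_s(\rho)\le\delta(s)\le\mathcal F^+_s(\rho)$ over $\{S\ge t\}$ against $P_S$. Your remark on measurability of $s\mapsto\mathcal F^\pm_s(\rho)$ covers a point the paper leaves implicit, and it does not change the argument.
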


\subsection{The whole profile identifies the centred law}

The variance fixes the profile only near the origin; the whole curve fixes the
centred law.

\begin{theorem}[Profile identifiability]
\label{thm:ident}
Let $Z$ be bounded with $\mathbb E Z=0$ and $G_P(s)>0$. The map
$\mathrm{Law}(Z)\mapsto\mathcal F^{+}_s$ is injective. Explicitly,
$\mathcal F^+_s$ is differentiable on $(0,\rho_{\mathrm{sat}})$ with
\begin{equation}
  \tfrac{\mathrm d}{\mathrm d\rho}\mathcal F^{+}_s(\rho)
   = \tfrac12 M_1(\tau_\rho),
  \qquad
  \tau_\rho = \mathcal F^{+}_s(\rho)-2(\rho+1)\,
              \tfrac{\mathrm d}{\mathrm d\rho}\mathcal F^{+}_s(\rho),
  \label{eq:recovery}
\end{equation}
and $\Pr(Z>\tau)$ is the negative right derivative of $M_1$; the saturation
branch supplies
$z_+=\lim_{\rho\to\infty}\mathcal F^+_s(\rho)$ and
$p_+=1/(1+\rho_{\mathrm{sat}})$. Proof in Appendix~\ref{app:ident}.
\end{theorem}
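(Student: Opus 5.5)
The plan is to reduce everything to the $\chi^2$ dual of Theorem~\ref{thm:profile} and to read the stop-loss transform $M_1$ off the profile by an envelope argument. Under $P(\cdot\mid S=s)$ the constraint set $\mathcal A_s(\rho)$ is $\{r\ge0:\ \mathbb E r=1,\ \mathbb E r^2\le1+\rho\}$. The Cressie--Read dual \citep[Lemma~1]{duchi2021learning}, or a direct Lagrangian computation that reproduces the truncated-affine optimiser of Theorem~\ref{thm:profile}, should give
\[
\mathcal F^+_s(\rho)=\inf_{\tau\in\mathbb R}\Phi(\rho,\tau),\qquad \Phi(\rho,\tau)=\tau+\sqrt{(1+\rho)\,M_2(\tau)},
\]
with the infimum attained at the $\tau_\rho$ of Theorem~\ref{thm:profile}.
\begin{itemize}\itemsep0pt
\item In the tail regime, $\tau_\rho$ solves $\kappa(\tau)=\rho$.
\item In the variance regime, $\tau_\rho=-\sqrt{G_P(s)/\rho}\le z_-$. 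There $M_1(\tau)=-\tau$ because $\mathbb EZ=0$, and $M_2(\tau)=G_P(s)+\tau^2$, so the same first-order condition holds.
\end{itemize}
I would first verify that on both regimes the stationarity condition $\partial_\tau\Phi=0$ reads $\sqrt{M_2(\tau)}=\sqrt{1+\rho}\,M_1(\tau)$. This uses $M_2'=-2M_1$, which holds because $M_2$ is $C^1$ even when $Z$ has atoms.

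Next comes differentiability and \eqref{eq:recovery}. I would apply a Danskin/envelope argument: $\mathcal F^+_s$ is an infimum of functions that are concave and differentiable in $\rho$, with $\partial_\rho\Phi=\sqrt{M_2(\tau)}/(2\sqrt{1+\rho})$. If the minimiser $\tau_\rho$ is unique and depends continuously on $\rho$, the envelope theorem gives
\[
\tfrac{\mathrm d}{\mathrm d\rho}\mathcal F^+_s(\rho)=\frac{\sqrt{M_2(\tau_\rho)}}{2\sqrt{1+\rho}}=\tfrac12M_1(\tau_\rho),
\]
where the second equality substitutes the stationarity condition. Substituting the same condition into $\Phi$ gives $\mathcal F^+_s(\rho)=\tau_\rho+(1+\rho)M_1(\tau_\rho)$, and eliminating $M_1$ yields $\tau_\rho=\mathcal F^+_s-2(\rho+1)\,\mathrm d\mathcal F^+_s/\mathrm d\rho$.

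\textbf{Main obstacle.} The step I expect to be hardest is uniqueness and continuity of $\tau_\rho$. I would prove that $\kappa$ is continuous and strictly increasing on $(-\infty,z_+)$, and that $\kappa(\tau)\to0$ as $\tau\to-\infty$ and $\kappa(\tau)\to1/p_+-1$ as $\tau\uparrow z_+$. Strict increase is equivalent to $\tau\mapsto M_2/M_1^2$ increasing. Its derivative has the sign of $2M_2\,\Pr(Z>\tau)-2M_1^2$, and by Cauchy--Schwarz on the event $\{Z>\tau\}$ this is non-negative. Equality forces $(Z-\tau)_+$ to be constant on $\{Z>\tau\}$, which happens only when $Z$ has no mass in $(\tau,z_+)$ beyond the atom at $z_+$. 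That degenerate case must be handled separately: on such intervals the profile has a kink or is linear. I expect this bookkeeping, together with the boundary points $\rho_+$ and $\rho_{\mathrm{sat}}$ where the regimes meet, to be where most of the care goes.

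With this in hand, $\rho\mapsto\tau_\rho$ is a continuous increasing bijection from $(0,\rho_{\mathrm{sat}})$ onto $(-\infty,z_+)$. Hence the profile determines the pairs $(\tau_\rho,M_1(\tau_\rho))$, that is, $M_1$ on all of $(-\infty,z_+)$.

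The saturation branch supplies the remaining data. Since $\mathcal F^+_s$ is non-decreasing and constant at $z_+$ for $\rho\ge\rho_{\mathrm{sat}}$ (Theorem~\ref{thm:profile}), we get $z_+=\lim_{\rho\to\infty}\mathcal F^+_s(\rho)$ and $p_+=1/(1+\rho_{\mathrm{sat}})$. Also $M_1\equiv0$ on $[z_+,\infty)$, so $M_1$ is known on all of $\mathbb R$. Finally, $M_1(\tau)=\int_\tau^\infty\Pr(Z>u)\,\mathrm du$ is convex with right derivative $-\Pr(Z>\tau)$. Thus $M_1$ determines the right-continuous survival function and hence $\mathrm{Law}(Z)$, the stop-loss uniqueness of \citet{gushchin2018integrated}. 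This proves that $\mathrm{Law}(Z)\mapsto\mathcal F^+_s$ is injective.
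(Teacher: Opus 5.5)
The step that fails as written is your claim that $\rho\mapsto\tau_\rho$ is a bijection from $(0,\rho_{\mathrm{sat}})$ onto $(-\infty,z_+)$, so that the trace $(\tau_\rho,M_1(\tau_\rho))$ gives $M_1$ on all of $(-\infty,z_+)$. You flag the Cauchy--Schwarz equality case but never carry it through.

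Put $\tau_{\max}=\sup\{\tau:\Pr(Z>\tau)>p_+\}$. Your equality analysis gives $\kappa\equiv1/p_+-1=\rho_{\mathrm{sat}}$ on $[\tau_{\max},z_+)$. This interval is nonempty whenever the top atom is isolated, in particular for every finitely supported law. The whole interval is sent to the single radius $\rho_{\mathrm{sat}}$, so the image of $(0,\rho_{\mathrm{sat}})$ is only $(-\infty,\tau_{\max})$.

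Your remark that the profile "has a kink or is linear" there is also off. Since $\tfrac{\mathrm d}{\mathrm d\rho}\mathcal F^+_s=\tfrac12M_1(\tau_\rho)$ is strictly decreasing on $(0,\rho_{\mathrm{sat}})$, there is no linear piece. The collapsed interval appears only as a kink at $\rho_{\mathrm{sat}}$, with left slope $\tfrac12p_+(z_+-\tau_{\max})$ and right slope $0$.

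This is not a corner case. For a two-point law $\tau_{\max}=z_-$, so the trace covers only $\tau<z_-$. There $M_1(\tau)=-\tau$ for every centred law, so the traced values of $M_1$ distinguish nothing. The identification then rests entirely on the piece you omit.

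The repair uses data you already extract. On $[\tau_{\max},z_+)$ the survival function equals $p_+$, i.e.\ $M_1(\tau)=p_+(z_+-\tau)$. Here $(z_+,p_+)$ come from the saturation branch, and $\tau_{\max}=\lim_{\rho\uparrow\rho_{\mathrm{sat}}}\tau_\rho$ is read off the curve. This is the paper's Step~4.

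Relatedly, your justification of $z_+=\lim_{\rho\to\infty}\mathcal F^+_s(\rho)$ through saturation covers only $p_+>0$. When $p_+=0$, use $\tau_\rho\le\mathcal F^+_s(\rho)\le z_+$ with $\tau_\rho\uparrow z_+$, or the paper's reweighting uniform on $\{Z>z_+-\varepsilon\}$.

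Apart from this, your argument is sound, but it reaches the derivative formula by a different route. You differentiate the dual $\inf_\tau\{\tau+\sqrt{(1+\rho)M_2(\tau)}\}$ by Danskin's theorem. Together with the stationarity condition, this needs only $M_2\in C^1$, a unique minimiser depending continuously on $\rho$, and a routine localisation of the infimum to a compact set of $\tau$.

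The paper instead differentiates the explicit parametrisation $\rho\mapsto g(\kappa^{-1}(\rho))$, with $g(\tau)=\tau+M_2(\tau)/M_1(\tau)$. It computes $g'/\kappa'=\tfrac12M_1$ separately with $\Pr(Z>\tau)$ and $\Pr(Z\ge\tau)$, so the two one-sided derivatives are seen to agree at atoms of $Z$. The paper mentions your Danskin route as an equivalent alternative.

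Your version never differentiates the non-smooth $M_1$ in the derivative step; the paper's makes the behaviour at atoms explicit. Both need the same uniqueness input: strict monotonicity of $\kappa$ on $(-\infty,\tau_{\max})$, not on $(-\infty,z_+)$.
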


As $\rho\uparrow\rho_{\mathrm{sat}}$, $\tau_\rho$ sweeps the support of $Z$ from below, so the curve and its slope trace out $M_1$, which determines the law
\citep{gushchin2018integrated}. The ingredients are classical; the theorem concerns
the exact curve: we prove no stability of the inverse and never invert an
estimated curve (Appendix~\ref{app:ident}). At a level, the reliability relation
supplies the mean, grouping heterogeneity the variance, and the profile the centred
law of $\eta$ (not of the label); over general laws neither the mean nor the profile
determines the other, and a scalar calibration error, an average over $S$, cannot
replace the mean.

\begin{proposition}[Calibration residual does not determine fragility in general]
\label{prop:witness}
Write $e_P(s)=\mu_P(s)-s$. Over general propensity laws neither $e_P$ nor the
profile determines the other: $\eta\equiv s$ makes both vanish, and the two-point
laws below give every other combination. They are not unrelated, since
$\eta\in[0,1]$ forces $G_P(s)\le\mu_P(s)(1-\mu_P(s))$.
(i) $\eta\in\{0.2,1.0\}$ w.p.\ $\tfrac12$ at $s=0.6$ gives $e_P=0$ and
$G_P=0.16$, perfectly calibrated yet fragile;
(ii) $\eta\equiv 0.9$ at $s=0.6$ gives $e_P=0.3$, $G_P=0$, badly calibrated and
not fragile;
(iii) $\eta\equiv 0.6$ versus $\eta\in\{0.2,1.0\}$ at $s=0.5$ share $e_P=0.1$
with $G_P\in\{0,0.16\}$;
(iv) two laws with $\mu_P=0.6$ and $G_P=0.08$ but $z_+\in\{0.3,0.4\}$ share
$e_P$ \emph{and} $G_P$, hence agree exactly for $\rho\le1.125$, yet separate by
$0.100$ at $\rho\ge2$.
\end{proposition}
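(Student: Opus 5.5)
The plan is to use only Theorem~\ref{thm:profile} and its two-point specialisation; every item then reduces to a short computation. I would first fix a level $s$ and note that $\mu_P(s)$ and $G_P(s)$ depend only on the conditional law of $\eta$ given $S=s$. By Theorem~\ref{thm:profile} (or Proposition~\ref{prop:invisible}), the profile at $s$ vanishes identically iff $G_P(s)=0$ and is strictly positive for every $\rho>0$ otherwise.

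\textbf{Degenerate case and the constraint.} The degenerate case $\eta\equiv s$ gives $e_P(s)=0$ and $G_P(s)=0$, so both vanish. For the stated constraint, $\eta\in[0,1]$ gives $\eta^2\le\eta$. Hence $\mathbb E[\eta^2\mid S=s]\le\mu_P(s)$, and therefore $G_P(s)\le\mu_P(s)-\mu_P(s)^2$.

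\textbf{Items (i)--(iii).} These are arithmetic.
\begin{itemize}
\item For (i), $\mu=0.6=s$, so $e_P=0$. The variance is $\tfrac12(0.4^2)+\tfrac12(0.4^2)=0.16$, which is positive, so the law is fragile.
\item For (ii), $e_P=0.9-0.6=0.3$ and $G_P=0$, so the law is not fragile.
\item For (iii), both laws have mean $0.6$ at $s=0.5$, so $e_P=0.1$, with variances $0$ and $0.16$.
\end{itemize}
Together with the degenerate case, these realise all four combinations of (zero/nonzero $e_P$) $\times$ (zero/nonzero profile). This shows that neither quantity determines the other.

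\textbf{Item (iv).} This is the only substantive part. I would take two-point centred laws $Z\in\{a,-b\}$ with $\Pr(Z=a)=p=b/(a+b)$, so that $\mathbb EZ=0$ and $\mathbb EZ^2=ab$. Setting $ab=0.08$ gives two laws.
\begin{itemize}
\item With $a=0.4$: $b=0.2$ and $p=1/3$, so $\eta\in\{1.0,0.4\}$.
\item With $a=0.3$: $b=4/15$ and $p=8/17$, so $\eta\in\{0.9,1/3\}$.
\end{itemize}
Both have $\eta\in[0,1]$, $\mu_P=0.6$ and $G_P=0.08$, and they share $e_P$ at any common $s$.

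For a two-valued $Z$, Theorem~\ref{thm:profile} passes directly from the square-root regime to saturation. I would check this by computing $\rho_+=G/z_-^2=ab/b^2=a/b$ and $\rho_{\mathrm{sat}}=1/p-1=a/b$, so the tail regime is empty. This gives $\rho_+=\rho_{\mathrm{sat}}=2$ for the first law and $\rho_+=\rho_{\mathrm{sat}}=1.125$ for the second.

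Hence both profiles equal $\sqrt{0.08\rho}$ on $[0,1.125]$. For $\rho\ge2$ both are saturated at $z_+$, namely $0.4$ versus $0.3$, which differ by $0.100$.

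\textbf{Main obstacle.} The only delicate step is confirming the regime boundaries, specifically that the tail-controlled regime is empty for a two-point law. Since $\kappa(\tau)$ is constant for $\tau\in(-b,a)$ when $Z$ is two-valued, and by the theorem's formulas equals $a/b$ there, the interval $(\rho_+,\rho_{\mathrm{sat}})$ is empty, as claimed. The second-law curve beyond $1.125$ is then just the constant $0.3$, which settles the comparison.
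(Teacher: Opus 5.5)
Your proof is correct and follows the paper's own argument in Appendix~\ref{app:witness}. It uses the same witnesses: your parametrisation $Z\in\{a,-b\}$ with $ab=0.08$ reproduces exactly the laws (iv)a and (iv)b. It also uses the same arithmetic for (i)--(iii), and the same two-point case of Theorem~\ref{thm:profile}, where $\rho_+=\rho_{\mathrm{sat}}=a/b$ gives $2$ and $9/8$.

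As in the paper, the agreement on $[0,1.125]$ in (iv) is a statement about $\mathcal F^{+}$ only. The downward profiles of these two laws already separate for $\rho>1/2$, since $\rho_-=\rho^-_{\mathrm{sat}}=b/a$ equals $1/2$ for (iv)a and $8/9$ for (iv)b.
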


\noindent
Case (iv) matches residual and variance and still leaves the profile undetermined
(Appendix~\ref{app:witness}); with deterministic labels $e_P(s)$ fixes it
through~\eqref{eq:deterministic}.

\begin{figure}[tb]
\centering
\includegraphics[width=0.49\textwidth]{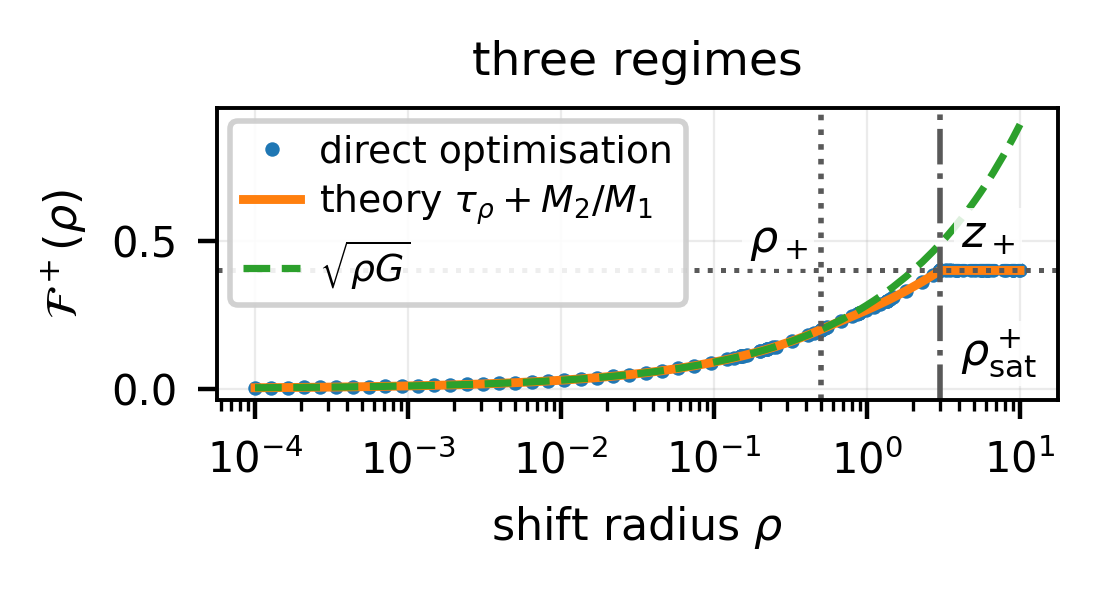}
\hfill
\includegraphics[width=0.49\textwidth]{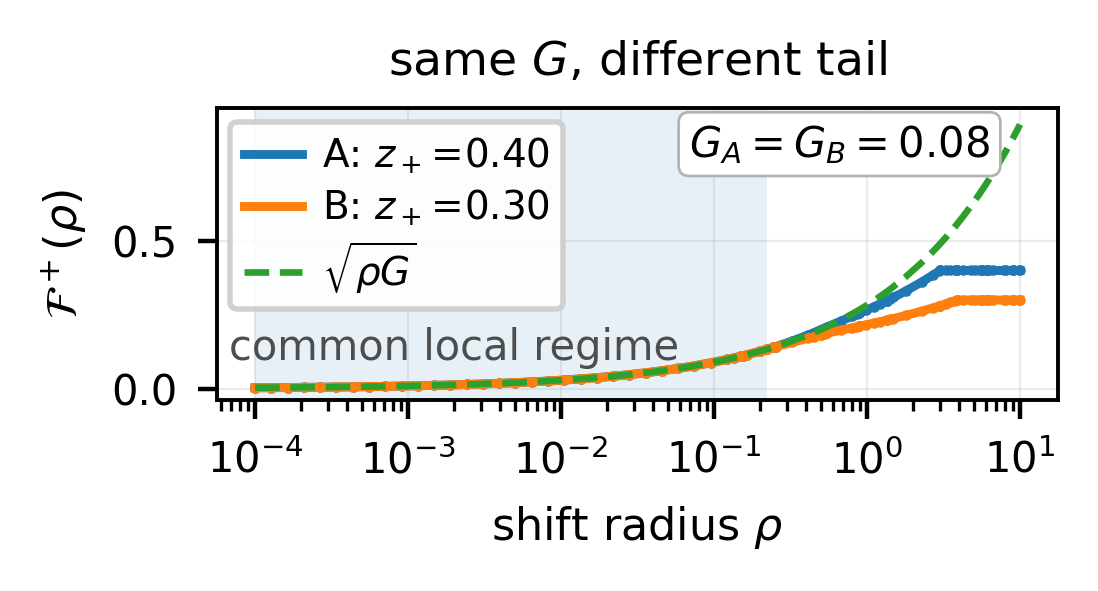}
\caption{\textbf{The profile beyond its variance-controlled regime.} Left: the
three regimes of Theorem~\ref{thm:profile} for $\eta\in\{0.2,0.6,1.0\}$ with
probabilities $(\tfrac14,\tfrac12,\tfrac14)$ ($\rho_+=0.5$, $\rho_{\mathrm{sat}}=3$).
Right: that law and a four-point law on $\{0,0.4,0.7,0.9\}$ with the same mean and
variance ($0.6$, $0.08$) agree for $\rho\le0.22$ and separate by $0.119$ at $\rho=3$.
The two-point witnesses of Proposition~\ref{prop:witness}(iv) have no tail-controlled
regime and separate by $0.100$ once both saturate (Appendix~\ref{app:witness}).}
\label{fig:profile}
\end{figure}

\section{Restricted fragility: estimation and a lower confidence bound}
\label{sec:certificate}

Section~\ref{sec:theory} is defined through the unobserved $\eta(X)$, and
estimating $G_P(s)$ itself is hard \citep{perezlebel2023grouping}. We therefore
change the object: reweightings are restricted to what a finite readout of the
input can express, and the score is replaced by a fixed partition into bins.

\paragraph{The population restricted profile.}
Let $R=\pi(X)$ take finitely many values and let $B$ be a fixed partition of the
confidence range. With $p_b=P_B(b)$ and $p_{bk}=P(R=k\mid B=b)$, define the cell
rates and the restricted heterogeneity
\begin{equation}
  \theta_P(b,k)=\Pr(C=1\mid B=b,R=k),\qquad
  G_R(b)=\operatorname{Var}_P\!\big(\theta_P(B,R)\mid B=b\big),
\end{equation}
and let $\mathcal F^{\pm}_{B,R}(b,\rho)$ be the profile~\eqref{eq:fragility} with
$B=b$ for $S=s$ and $(B,R)$-measurable reweightings. Write $\mu_P(b)$, $G_P(b)$ and
$\mathcal F^{\pm}_B(b,\rho)$ for the binned oracle mean, variance and profile of
$\eta$ given $B=b$.

\begin{proposition}[Readout hierarchy]
\label{prop:hierarchy}
If $\sigma(R_1)\subseteq\sigma(R_2)$, then $G_{R_1}(b)\le G_{R_2}(b)\le G_P(b)$ and
$\mathcal F^{\pm}_{B,R_1}\le\mathcal F^{\pm}_{B,R_2}\le\mathcal F^{\pm}_{B}$ pointwise
in $(b,\rho)$.
\end{proposition}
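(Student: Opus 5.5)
Both orderings follow from one observation: restricting reweightings to a coarser $\sigma$-field is the same as optimising over the conditional expectation of $\eta$ given that field. Write $\mathcal G_j=\sigma(B,R_j)$ for $j=1,2$, and let $\mathcal G_3=\sigma(X)$ be the field of the oracle binned profile. The nesting $\mathcal G_1\subseteq\mathcal G_2\subseteq\mathcal G_3$ holds because $\sigma(R_1)\subseteq\sigma(R_2)$ and $R_2=\pi_2(X)$.

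For each $j$, set $\theta_j=\mathbb E_P[\eta\mid\mathcal G_j]$. Then $\theta_1=\theta_P(B,R_1)$, $\theta_2=\theta_P(B,R_2)$ and $\theta_3=\eta$. Conditionally on $B=b$, all three have the same mean $\mu_P(b)$ by the tower property.

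\emph{Variances.} By the tower property and the conditional Jensen inequality (equivalently, the law of total variance) applied inside $\{B=b\}$, we have $\theta_1=\mathbb E[\theta_2\mid\mathcal G_1]$. This gives $\operatorname{Var}(\theta_1\mid B=b)\le\operatorname{Var}(\theta_2\mid B=b)$. The same argument applied to $\theta_2=\mathbb E[\eta\mid\mathcal G_2]$ gives $\operatorname{Var}(\theta_2\mid B=b)\le\operatorname{Var}(\eta\mid B=b)$. These are exactly $G_{R_1}(b)\le G_{R_2}(b)\le G_P(b)$. Since $B$ is a fixed finite partition, conditioning on $B=b$ is elementary for every $b$ with $p_b>0$, so there are no a.e.\ issues.

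\emph{Profiles.} The plan is to show that $\mathcal F^{+}_{B,R_j}(b,\rho)$ is the supremum of $\mathbb E[(r-1)(\eta-\mu_P(b))\mid B=b]$ over $\mathcal G_j$-measurable $r\ge0$ with $\mathbb E[r\mid B=b]=1$ and $\mathbb E[(r-1)^2\mid B=b]\le\rho$. For $j=3$ this is the definition of $\mathcal F^{+}_B$. The constraint set is monotone in $j$, since a $\mathcal G_1$-measurable $r$ is also $\mathcal G_2$-measurable and $X$-measurable. The constraints depend only on $r$ and $B$, not on the field, so a feasible $r$ for the smaller field stays feasible for the larger one. Hence the suprema are ordered, $\mathcal F^{+}_{B,R_1}\le\mathcal F^{+}_{B,R_2}\le\mathcal F^{+}_B$ pointwise in $(b,\rho)$. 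The downward profiles follow by replacing $Z$ with $-Z$.

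The one step needing care is the reconciliation of definitions. The restricted profile is defined with the centred cell rate $\theta_P(B,R)-\mu_P(b)$ in place of $Z$. For $\mathcal G_j$-measurable $r$, however, the tower property gives
\[
\mathbb E[(r-1)(\eta-\mu_P(b))\mid B=b]=\mathbb E[(r-1)(\theta_j-\mu_P(b))\mid B=b],
\]
because $(r-1)$ is $\mathcal G_j$-measurable and $\{B=b\}\in\mathcal G_j$. So the two objectives agree on the restricted feasible set, and the comparison above is legitimate. I expect this identification to be the main obstacle, mainly in stating the conditioning cleanly. Once it is in place, the ordering is pure inclusion of feasible sets; it does not need Theorem~\ref{thm:profile}.
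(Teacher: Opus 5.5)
Your proof is correct and follows the paper's argument essentially step for step. The variance ordering comes from the tower property plus conditional Jensen, and the profile ordering comes from the nested feasible sets $\mathcal A_{R_1}(\rho)\subseteq\mathcal A_{R_2}(\rho)\subseteq\mathcal A_X(\rho)$, together with the tower-property identity showing that the $\eta$-based and cell-rate objectives agree on $\sigma(B,R)$-measurable reweightings. One small point: the paper defines $\mathcal F^{\pm}_{B,R}$ with the original $Z=\eta-\mu_P(b)$ and only $r$ restricted, so your identification step proves the cell-rate form equivalent rather than reconciling a different definition; your argument covers both readings.
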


\noindent
The proof is conditional Jensen and nested feasible sets
(Appendix~\ref{app:hierarchy}). For fixed bins, the restricted profile is thus a
population lower bound on the binned oracle profile. It orders nested readouts only
--- a learned readout and quantile groups of the confidence are not ordered --- and
the binned oracle is not the level-set object:
$\operatorname{Var}(\eta\mid B)=\mathbb E[\operatorname{Var}(\eta\mid S)\mid B]
+\operatorname{Var}(\mu_P(S)\mid B)$, so a bin can be heterogeneous because the
reliability relation varies across it. Our reweightings preserve the binned
marginal exactly ($\max_b|Q_B(b)-P_B(b)|<10^{-14}$), not the continuous one, and
threshold rules keep their coverage exactly only at bin boundaries
(Appendix~\ref{app:diagnostics}).

\begin{proposition}[Readout approximation]
\label{prop:approx}
Let $H_R(b)=\mathbb E_P[(\eta-\theta_P(B,R))^2\mid B=b]=G_P(b)-G_R(b)$. For $\rho\ge0$,
$\mathcal F^{\pm}_{B}(b,\rho)\le\sup_{0\le u\le\rho}\{\mathcal F^{\pm}_{B,R}(b,u)
+\sqrt{(\rho-u)H_R(b)}\}\le\mathcal F^{\pm}_{B,R}(b,\rho)+\sqrt{\rho H_R(b)}$; the
middle bound is attained on the variance-controlled interval of the binned oracle.
If $\sigma(R_1)\subseteq\sigma(R_2)$, then
$0\le\mathcal F^{\pm}_{B,R_2}-\mathcal F^{\pm}_{B,R_1}\le\sqrt{\rho\,[G_{R_2}(b)-G_{R_1}(b)]}$.
\end{proposition}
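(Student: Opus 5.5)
The plan is to prove all three claims with a single orthogonal decomposition inside bin $b$, carried out under $P(\cdot\mid B=b)$. Set $Z=\eta-\mu_P(b)$, $Z_R=\theta_P(B,R)-\mu_P(b)$ and $W=\eta-\theta_P(B,R)$, so that $Z=Z_R+W$. Since $\theta_P(B,R)=\mathbb E[\eta\mid B,R]$, we have $\mathbb E[W\mid B,R]=0$. This gives $\mathbb E[Z_R^2\mid B=b]=G_R(b)$ and $\mathbb E[W^2\mid B=b]=H_R(b)$, and therefore $G_P(b)=G_R(b)+H_R(b)$ by Pythagoras.

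\emph{First inequality.} Take an admissible $X$-measurable $r$ for the binned oracle at budget $\rho$. Project it as $g=\mathbb E[r\mid B,R]$ and set $h=r-g$. Then $g\ge0$ and $\mathbb E[g\mid B=b]=1$, so $g$ is a feasible $(B,R)$-measurable reweighting. Because $\mathbb E[h\mid B,R]=0$, the budget splits as $\mathbb E[(r-1)^2\mid b]=u+\mathbb E[h^2\mid b]$ with $u:=\mathbb E[(g-1)^2\mid b]\le\rho$. For the objective:
\begin{itemize}
\item $Z_R$ is $(B,R)$-measurable, so $\mathbb E[(r-1)Z_R\mid b]=\mathbb E[(g-1)Z_R\mid b]=\mathbb E[(g-1)Z\mid b]$, which is at most $\mathcal F^{+}_{B,R}(b,u)$.
\item $g-1$ is $(B,R)$-measurable and $\mathbb E[W\mid B,R]=0$, so $\mathbb E[(r-1)W\mid b]=\mathbb E[hW\mid b]$. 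By Cauchy--Schwarz this is at most $\sqrt{(\rho-u)H_R(b)}$.
\end{itemize}
Taking the supremum over $r$ gives the middle bound. The right-hand bound follows because $\mathcal F^{+}_{B,R}(b,\cdot)$ is non-decreasing (the feasible sets are nested) and $\sqrt{(\rho-u)H_R}\le\sqrt{\rho H_R}$. The downward case replaces $Z$ by $-Z$.

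\emph{Attainment.} Cauchy--Schwarz on the restricted problem gives $\mathcal F^{\pm}_{B,R}(b,u)\le\sqrt{u\,G_R(b)}$ for every $u$. Hence the middle expression is at most
\[
\sup_{0\le u\le\rho}\Big\{\sqrt{uG_R(b)}+\sqrt{(\rho-u)H_R(b)}\Big\}=\sqrt{\rho\,(G_R(b)+H_R(b))}=\sqrt{\rho\,G_P(b)},
\]
where the supremum is attained at $u=\rho G_R/G_P$. On the oracle's variance-controlled interval, Theorem~\ref{thm:profile} (applied to the binned law) gives $\mathcal F^{\pm}_B(b,\rho)=\sqrt{\rho G_P(b)}$. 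The chain therefore closes with equality. This step is the one I expect to need care, mainly in the degenerate cases $G_P(b)=0$ and $G_R(b)=0$.

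\emph{Nested readouts.} The lower bound $0\le\mathcal F^{\pm}_{B,R_2}-\mathcal F^{\pm}_{B,R_1}$ is Proposition~\ref{prop:hierarchy}. For the upper bound, repeat the decomposition one level down. Restrict to $(B,R_2)$-measurable $r$, project onto $\sigma(B,R_1)$, and use the residual $W'=\theta_{R_2}-\theta_{R_1}$. This residual satisfies $\mathbb E[W'\mid B,R_1]=0$ by the tower property, and $\mathbb E[W'^2\mid b]=G_{R_2}(b)-G_{R_1}(b)$, again by Pythagoras. The argument of the first inequality, with $(R_2,R_1)$ in place of $(X,R)$, then yields
\[
\mathcal F^{\pm}_{B,R_2}(b,\rho)\le\mathcal F^{\pm}_{B,R_1}(b,\rho)+\sqrt{\rho\,[G_{R_2}(b)-G_{R_1}(b)]}.
\]
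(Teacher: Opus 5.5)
Your proposal is correct and follows essentially the same route as the paper. The shared steps are: project an oracle reweighting onto $\sigma(B,R)$; split the $\chi^2$ budget orthogonally as $u+(\rho-u)$; bound the projected part by the restricted profile and the residual by Cauchy--Schwarz against $H_R(b)$; get attainment from $\mathcal F^{\pm}_{B,R}(b,u)\le\sqrt{uG_R(b)}$ together with $\sup_u\{\sqrt{uG_R}+\sqrt{(\rho-u)H_R}\}=\sqrt{\rho G_P}$; and treat nested readouts by rerunning the argument with $\theta_P(B,R_2)$ in place of $\eta$.

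The degenerate cases you flag are harmless. When $G_P(b)=0$ every quantity in the chain vanishes. When $G_P(b)>0$, Cauchy--Schwarz bounds the supremum over $u$ by $\sqrt{\rho G_P}$ even if $G_R(b)=0$.
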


\noindent
The proof averages a reweighting over readout cells and applies Cauchy--Schwarz to
the remainder (Appendix~\ref{app:approx}), as \citet[App.~B]{duchi2023distributionally}
do with a bounded residual. $H_R$ is the residual grouping loss of
\citet[Thm.~4.1]{perezlebel2023grouping}, which single labels do not identify: the
bound limits what a finer readout could add, and is neither an estimate nor a
confidence statement.

\paragraph{The plug-in estimate.}
Within each of five outer folds the training portion is halved: $D_{\mathrm{fit}}$
($20{,}000$ images) fits the temperature, $J=20$ equal-mass bins, the readout and
its $K=8$ within-bin quantile groups; $D_{\mathrm{rate}}$ ($20{,}000$) estimates
raw cell rates on these frozen cells; the held-out fold $D_{\mathrm{eval}}$
($10{,}000$) supplies group proportions and held-out correctness. The readout is a
low-capacity logistic regression on confidence, margin, entropy and a
$50$-dimensional principal-component projection of the penultimate
representation. Separating $D_{\mathrm{rate}}$ from $D_{\mathrm{fit}}$ keeps rate
labels from choosing their own cells, a selection effect we measured by label
permutation (Appendix~\ref{app:estimators}; \citealp[\S4.4]{perezlebel2023grouping}).
It does not make the plug-in unbiased: at fixed partition and weights the profile is
convex in the cell rates, so unbiased rates give an upward bias in expectation,
while a realisation can fall on either side of the target. The plug-in is neither an upper nor a lower confidence bound.

For model comparisons we summarise it by the fragility area under the curve (FAUC): per fold, per-bin plug-in profiles are combined into
$F^{+}(\rho)=(\sum_b w_b\,\widehat{\mathcal F}^{+}_{B,R}(b,\rho)^2)^{1/2}$ with $w_b$
the bin's share of evaluation examples, averaged over folds and integrated in
$\log(1+\rho)$ over $\rho\in\{0,0.01,0.03,0.1,0.3,1,3\}$. FAUC is a lossy ranking
summary; Theorem~\ref{thm:ident} is never applied to it. We also report $G_R$ with
a binomial-noise correction (Proposition~\ref{prop:gunbiased}), conditionally
unbiased for an empirical-weight version of $G_R$; FAUC uses uncorrected rates.

\paragraph{A split-sample lower confidence bound.}
Moving mass $d$ from group $u$ to group $v$ in a bin is feasible while $d\le p_u$,
costs $d^2(1/p_v+1/p_u)$ in conditional $\chi^2$ and moves reliability by
$d(\theta_v-\theta_u)$; the largest step
$d^\ast=\min\{p_u,\sqrt{\rho\,p_vp_u/(p_v+p_u)}\}$ (zero when either mass is zero) is
non-decreasing in both masses,
so simultaneous confidence bounds can be substituted into it.

\begin{proposition}[Split-sample lower confidence bound]
\label{prop:fscert}
If $\theta_P(b,k)\in[L_{bk},U_{bk}]$ and $p_{bk}\ge l_{bk}$ hold simultaneously
then, on that event and for every $\rho$,
\begin{equation}
  \mathcal F^{+}_{B,R}(b,\rho)\;\ge\;
  \max_{u\neq v}\,[L_{bv}-U_{bu}]_+\,
  \min\Big\{l_{bu},\sqrt{\rho\,l_{bv}l_{bu}/(l_{bv}+l_{bu})}\Big\}
  \;=:\;\mathrm{LCB}^{+}_b(\rho),
  \label{eq:lcb}
\end{equation}
where the square-root factor is read as $0$ when $l_{bv}l_{bu}=0$; with bin-mass bounds
$p_b\ge l_b$ the quadratic means obey
$(\sum_b l_b\,\mathrm{LCB}^{+}_b(\rho)^2)^{1/2}\le(\sum_b p_b\,\mathcal F^{+}_{B,R}(b,\rho)^2)^{1/2}$.
Reversing the correctness comparison gives the downward bound.
\end{proposition}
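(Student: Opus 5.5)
The argument is constructive and works one bin at a time. Fix $b$ and $\rho$, and work on the event where the simultaneous bounds hold. For any ordered pair $u\neq v$ of readout cells with $[L_{bv}-U_{bu}]_+>0$, I will build an explicit $(B,R)$-measurable reweighting that is admissible. It moves reliability in bin $b$ by at least the corresponding term of the maximum. Since $\mathcal F^+_{B,R}(b,\rho)$ is a supremum over admissible reweightings, the lower bound then follows by taking the best pair. When every term is $0$ the claim is trivial, because the identity reweighting $r\equiv1$ is feasible and gives $\mathcal F^+_{B,R}\ge0$.

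\textbf{The two-cell move.} Take $d=\min\{l_{bu},\sqrt{\rho\,l_{bv}l_{bu}/(l_{bv}+l_{bu})}\}$, which is $0$ if $l_{bv}l_{bu}=0$. Within bin $b$, set $r=1+d/p_{bv}$ on cell $v$, $r=1-d/p_{bu}$ on cell $u$, and $r=1$ elsewhere; outside $b$, set $r\equiv1$.

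\begin{itemize}
\item Nonnegativity holds because $d\le l_{bu}\le p_{bu}$.
\item The conditional mean of $r$ is $1$, since $p_{bv}(d/p_{bv})-p_{bu}(d/p_{bu})=0$. So the binned marginal is preserved.
\item The conditional $\chi^2$ cost is $d^2(1/p_{bv}+1/p_{bu})$.
\end{itemize}

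By the monotonicity noted before the proposition, $d^2\le\rho\,l_{bv}l_{bu}/(l_{bv}+l_{bu})\le\rho\,p_{bv}p_{bu}/(p_{bv}+p_{bu})$. This holds because $xy/(x+y)$ is non-decreasing in each argument and $p\ge l$. Hence the cost is at most $\rho$.

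\textbf{The drift it produces.} Apply the binned analogue of Proposition~\ref{prop:transport}, whose proof only uses that $r$ is a function of $X$ with conditional mean one. It gives drift $\mathbb E[(r-1)Z\mid B=b]=d\,\theta_P(b,v)-d\,\theta_P(b,u)$, since $Z$ averages to $\theta_P(b,k)-\mu_P(b)$ on cell $k$ and the $\mu_P(b)$ terms cancel. On the event, this is at least $d(L_{bv}-U_{bu})$. That is the required term when it is positive; otherwise the identity handles it.

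The only delicate point is the degenerate case $p_{bv}=0$ or $l=0$. There $d=0$ and the move is read as the identity, so no division by zero occurs.

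\textbf{Aggregated statement.} Termwise, $l_b\,\mathrm{LCB}^+_b(\rho)^2\le p_b\,\mathcal F^+_{B,R}(b,\rho)^2$. This uses $0\le l_b\le p_b$ together with $0\le\mathrm{LCB}^+_b\le\mathcal F^+_{B,R}$. Summing over $b$ and taking square roots, which is monotone, gives the inequality.

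\textbf{Downward bound.} Repeat the argument with the roles reversed: use $[L_{bu}-U_{bv}]_+$, i.e.\ $Z\mapsto-Z$ as in the definition of $\mathcal F^-$.

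I expect no real obstacle. The step needing the most care is checking that the binned transport identity holds for $(B,R)$-measurable $r$, and that plugging lower mass bounds into $d^\ast$ keeps both feasibility constraints, namely nonnegativity and the budget.
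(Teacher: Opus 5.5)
Your proof is correct and follows essentially the paper's argument. The paper uses a two-cell mass transfer inside bin $b$ with $\chi^2$ cost $d^2(1/p_{bv}+1/p_{bu})$ and reliability movement $d\,(\theta_P(b,v)-\theta_P(b,u))$, passes to the simultaneous lower bounds via the monotonicity of $xy/(x+y)$, and finishes with the same termwise comparison $l_b\,\mathrm{LCB}^+_b(\rho)^2\le p_b\,\mathcal F^+_{B,R}(b,\rho)^2$. The only difference is cosmetic: you take the step $d^\ast(l_{bv},l_{bu},\rho)$ directly and check its feasibility at the true masses, whereas the paper first bounds the profile by $(\theta_v-\theta_u)_+\,d^\ast(p_{bv},p_{bu},\rho)$ and then substitutes the lower bounds using the coordinatewise monotonicity of $d^\ast$, which is the same fact.
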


\noindent
We fit $(B,R)$ on one half of the $50{,}000$ validation images and take exact
one-sided Clopper--Pearson bounds on the other, Bonferroni-corrected over the
$2JK+JK+J=500$ statements of one analysis, so one event covers every pair,
direction and radius (Appendix~\ref{app:fscert}): with probability at least $0.95$
the model-level bound lies below the restricted profile of the partition learned on
the fitting half. Swapping halves gives a second analysis with a different partition
and target; the two \emph{split arms} are reported separately. The bound is a
bin-mass root mean square of per-bin worst-case movements: a value of $0.01$ says
that some admissible reweighting raises bin-level reliability by at least $0.01$ in
quadratic mean, not that overall accuracy moves that much. A positive bound shows
that restricted fragility is not zero; a zero bound does not show it is small.

\section{Experiments}
\label{sec:cannotsee}

We ask whether the plug-in profile orders held-out drift and how much of that
ordering estimation noise could produce, what the full profile adds to its variance
and whether that survives estimation, how fragility and calibration error rank
models, and where the lower confidence bound is positive.

The primary cohort is six frozen ImageNet-1K checkpoints \citep{deng2009imagenet}
(Table~\ref{tab:models}). Twelve more, one from each of twelve further families, were
selected by rules fixed before any of their metrics was computed
(Appendix~\ref{app:breadth}). Each is
evaluated as released ($T{=}1$) and with a temperature fitted on $D_{\mathrm{fit}}$
($T{=}\hat T$), with $T\in\{0.5,2\}$ as stress states. All analyses use the data
roles of Section~\ref{sec:certificate} except the lower confidence bound and the
monotone recalibration, which keep their preregistered protocols. The twelve
additions and every analysis marked post-hoc were designed after the primary
results were known (Appendix~\ref{app:claims}); none re-scores a preregistered
outcome.

\begin{table}[b]
\caption{Primary cohort. ECE: $20$-bin equal-mass out-of-fold estimate; $G_R$ with
the binomial-noise correction of Proposition~\ref{prop:gunbiased}; FAUC of the upward
plug-in profile. Neither $G_R$ nor FAUC is a confidence bound. Additional
checkpoints: Table~\ref{tab:breadth}.}
\label{tab:models}
\vspace{3pt}
\centering\small
\begin{tabular}{lccccccc}
\toprule
& & \multicolumn{3}{c}{$T=1$} & \multicolumn{3}{c}{$T=\hat T$} \\
\cmidrule(lr){3-5}\cmidrule(lr){6-8}
Model & Acc. & ECE & $G_R$ & FAUC & ECE & $G_R$ & FAUC \\
\midrule
ResNet-50 V1    & 0.761 & 0.038 & 0.0004 & 0.042 & 0.024 & 0.0009 & 0.049 \\
ResNet-50 V2    & 0.809 & 0.412 & 0.0133 & 0.135 & 0.032 & 0.0009 & 0.046 \\
ConvNeXt-T      & 0.825 & 0.169 & 0.0081 & 0.112 & 0.031 & 0.0011 & 0.048 \\
ViT-B/16        & 0.811 & 0.057 & 0.0011 & 0.048 & 0.042 & 0.0005 & 0.042 \\
Swin-T          & 0.815 & 0.068 & 0.0015 & 0.056 & 0.032 & 0.0004 & 0.038 \\
EffNet-B0       & 0.777 & 0.069 & 0.0032 & 0.078 & 0.024 & 0.0010 & 0.049 \\
\bottomrule
\end{tabular}
\end{table}

\paragraph{Held-out drift under optimised reweightings.}
We solve the restricted problem on $D_{\mathrm{rate}}$ rates, freeze the maximising
weights and measure their drift on $D_{\mathrm{eval}}$. Across bins, predicted and
realised drift have a median Spearman correlation of \CtrlObsTwoState{} over the $36$
as-released and temperature-scaled checkpoint--state combinations
(\RankTwoStatePrim{} primary, \RankTwoStateBr{} additional; range
\RankTwoStateLo{}--\RankTwoStateHi{}; Appendix~\ref{app:estimators}). Agreement is
weakest where $G_R$ is smallest (Figure~\ref{fig:breadth-gr}) and higher in the stress
states (\RankStateHalf{} at $T=0.5$, \RankStateTwo{} at $T=2$; over all four states
\SpearmanMedian{} and \BrRankMedian{}). The weights are optimised against the
estimated rates, so this checks the fitted construction out of sample, not
natural-shift prediction.

\paragraph{Separating the ordering from estimation noise.}
Binomial noise in cell rates scales with bin accuracy, so statistics of estimated
rates vary across bins even without heterogeneity. Random
bin-preserving directions drawn without correctness ($100$ per bin, radius, fold,
state and checkpoint; Appendix~\ref{app:ext:random}) give a Spearman correlation
between the two-sided plug-in profile and the median held-out movement of
\RsPooledPrim{} on the primary six and \RsPooledAll{} on all eighteen
(\RsFixedLo{}--\RsFixedHi{} at a fixed radius). A within-bin label permutation, applied
separately to $D_{\mathrm{rate}}$ and $D_{\mathrm{eval}}$, removes any association
between correctness and the readout while keeping bin accuracies, cell counts,
proportions and directions, and it largely reproduces these values (post-hoc;
Appendix~\ref{app:r7}): \RsNullPooledMean{} pooled and \RsNullFixedMean{} averaged over
radii on all eighteen (largest replicates \RsNullPooledMax{} and \RsNullFixedMax{}).
The observed values exceed every replicate, but the primary cohort's median
correlation within one checkpoint, state, fold and radius, \RsUnitObsPrim{}, lies
inside the null range: for unsigned random movements the estimated profile mainly
tracks the scale of estimation noise.

After inspecting this random-shift diagnostic, we added an exploratory permutation
check for the signed drift under optimised reweightings: its median agreement is
\CtrlNullMean{} under the null (largest of $100$: \CtrlNullMax{}) against
\CtrlObsTwoState{} observed. The check was not prespecified; permutations are
conditional on fold, so it ignores cross-fold dependence and supports no formal
$p$-value. It indicates that rate noise alone does not produce this agreement; it
concerns fixed bins and readout, says nothing about natural shifts, and cannot
separate level-set heterogeneity from within-bin variation of the reliability relation.

\paragraph{What the full profile adds to its variance.}
On identical cells we compared the plug-in profile with $\sqrt{\rho\hat G}$ and with
its cap at the support endpoint, $\min\{\sqrt{\rho\hat G},\hat z_+\}$ (post-hoc;
Table~\ref{tab:r7-profile}). Up to $\rho=0.1$ every bin is in the
variance-controlled regime and the three coincide; at $\rho=0.3$ \RegTailThree{}
of bin--radius units have left it, and at $\rho\ge1$ almost all. There the variance
law overstates the profile by a median \VexcessThree{} at $\rho=3$ and the capped
version by \VcapExcessThree{}, and bins with matched $\hat G$ differ in profile by a
median \MatchedVarThree{}: the distinction of Theorem~\ref{thm:ident} appears in
the estimated laws. These differences did not materially change rankings here: FAUC from the capped variance orders the $36$ checkpoint--state
combinations almost as FAUC does (Spearman \FaucVcapRank{}), and the two track
optimised and random shifts equally well. The full profile thus changes the size of
large-budget movements, not model rankings.

\paragraph{Whether the difference survives estimation.}
Real data give no ground truth, so we simulated bins with known masses and
propensities, including laws that share mean, variance and endpoints but not tails
(Appendix~\ref{app:r9}), and computed the full plug-in and the capped variance from
the same rates. In population the two coincide in the variance-controlled regime and
for constant or two-point laws, but their noisy estimates need not; beyond that
regime the cap keeps its slack while the plug-in error falls with $n$. For a
four-point law at $\rho=3$, where the slack is \MagSlackB{}, the mean absolute errors
are \MagFullBsmall{} (plug-in) and \MagCapBsmall{} (cap) with $250$ rate labels and
\MagFullBlarge{} and \MagCapBlarge{} with $5000$ (Table~\ref{tab:r9-magnitude}). In
the primary arm the plug-in met the specified practical-advantage criterion in
\MagCondFull{} of \MagCondTotal{} conditions and the cap in none, the plug-in was worse
by at most \MagMaxWorse{}, and no control met the criterion. Both are unreliable under
weak heterogeneity at $n=250$, and filtering rare cells can remove the practical
advantage. On the $32$-atom laws the induced gap bound of
Proposition~\ref{prop:approx} was a median \ResBratioMedOne{} times the true gap at
$\rho=1$, versus \ResAratioMedOne{} for the simple gap bound; at a fixed label budget
estimation error can grow with readout size.

\paragraph{Calibration error and fragility.}
Proposition~\ref{prop:witness} settles determination; here we ask how the two rank
models. As released, ECE and FAUC order the primary six identically (Spearman
\SpEceFaucRawPrim{}) and all eighteen similarly (\SpEceFaucRawAll{}); after
temperature scaling the association reverses (\SpEceFaucTsPrim{}, \SpEceFaucTsAll{};
\LofoTsLo{} to \LofoTsHi{} when one family is deleted at a time), while $G_R$ and
FAUC stay positively associated (\SpGrFaucTsPrim{}, \SpGrFaucTsAll{}). Scaling can
also reorder the top-label score, so it is not a pure recalibration. No primary
pair matches in ECE within the preregistered $5\times10^{-4}$; across all eighteen
\MatchedTs{} scaled and \MatchedRaw{} as-released pairs do, with FAUC ratios up to
\MatchedMaxRatio{}, but also differing $G_R$ (post-hoc; Table~\ref{tab:r7-matched}). A cleaner comparison holds the partition fixed: a
fold-wise strictly increasing map $g(s)=\mathrm{sigmoid}(a\,\mathrm{logit}\,s+c)$,
$a>0$, leaves bins and the frozen readout unchanged within each fold, so the
restricted profile, FAUC and selective-risk fragility are unchanged by
Lemma~\ref{lem:reparam}, while cross-fitted ECE falls by a median $75.1\%$ on the
primary six and \BrMonoMedian{}\% on the ten additional checkpoints where the map
can be fitted (Appendices~\ref{app:monotone},~\ref{app:breadth}). Refitting the
readout afterwards changes FAUC by up to $27\%$.

\paragraph{Where the lower confidence bound is nonzero.}
At $\rho=0.3$ the model-level bound is positive in both split arms for four of the
six primary checkpoints as released --- ResNet-50 V2 ($0.0158$, $0.0171$),
ConvNeXt-Tiny ($0.0096$, $0.0114$), Swin-T and EfficientNet-B0 --- and for
\BrCertBothRaw{} of the twelve additional ones; after temperature scaling, for none
of the six and \BrCertBothTemp{} of the twelve; scaling refits the partition, so the
positive set can change either way. The additional values are at most $0.0042$
except ConvMixer-768/32 ($0.0092$--$0.0110$). Each entry carries a $95\%$ guarantee
for its own arm's target; the counts imply no joint coverage
(Appendix~\ref{app:fscert}, Table~\ref{tab:breadth-cert}).

\section{Natural shift: what the source profile bounds}
\label{sec:natural}

Natural benchmarks need not be reweightings of the source: the propensity of an
input group can itself change. With the source bins and readout frozen and applied
to a target domain, write $p_k,q_k$ for the source and target group proportions in
a bin and $\theta_P,\theta_Q$ for the cell correctness rates.

\begin{proposition}[Composition--conditional decomposition]
\label{prop:decomp}
$\delta_{\mathrm{total}}(b)=\delta_{\mathrm{comp}}(b)+\delta_{\mathrm{cond}}(b)$
exactly, where
$\delta_{\mathrm{comp}}(b)=\sum_k (q_k-p_k)\,[\theta_P(b,k)-\mu_P(b)]$ and
$\delta_{\mathrm{cond}}(b)=\sum_k q_k\,[\theta_Q(b,k)-\theta_P(b,k)]$.
If $q_k=0$ whenever $p_k=0$, then
$|\delta_{\mathrm{comp}}(b)|\le\mathcal F^{\pm}_{B,R}\big(b,D_R(b)\big)$ at the
observed composition radius $D_R(b)=\sum_{k:\,p_k>0} p_k(q_k/p_k-1)^2$, the branch chosen by
the sign of $\delta_{\mathrm{comp}}(b)$.
\end{proposition}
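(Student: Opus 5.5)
The identity is pure algebra, and the bound comes from exhibiting a feasible restricted reweighting whose drift equals $\delta_{\mathrm{comp}}(b)$.

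\textbf{Step 1: the identity.} Here $\delta_{\mathrm{total}}(b)=\mu_Q(b)-\mu_P(b)$, with $\mu_Q(b)=\sum_k q_k\theta_Q(b,k)$ and $\mu_P(b)=\sum_k p_k\theta_P(b,k)$, the sums running over cells of positive mass. Add and subtract $\sum_k q_k\theta_P(b,k)$. The second difference is $\delta_{\mathrm{cond}}(b)$, and the first is $\sum_k(q_k-p_k)\theta_P(b,k)$. Since $\sum_k q_k=\sum_k p_k=1$, we may subtract the constant $\mu_P(b)$ inside without changing this sum, which gives $\delta_{\mathrm{comp}}(b)$.

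For cells with $p_k=0$, $\theta_P(b,k)$ is undefined. The hypothesis $q_k=0$ whenever $p_k=0$ makes those terms vanish in both sums, so the decomposition is well defined. Absent that hypothesis, I would state the identity with any fixed convention for $\theta_P$ on empty cells.

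\textbf{Step 2: the reweighting.} Define the $(B,R)$-measurable density $r(b,k)=q_k/p_k$ for $p_k>0$. Its conditional mean given $B=b$ is $\sum_k p_k\,q_k/p_k=\sum_{k:p_k>0}q_k=1$, where the last equality uses the absolute-continuity hypothesis. Its conditional $\chi^2$ cost is $\sum_k p_k(q_k/p_k-1)^2=D_R(b)$. Hence $r$ lies in the restricted feasible set at radius $D_R(b)$.

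Applying Proposition~\ref{prop:transport} with $(B,R)$ in place of $(S,X)$, with centred variable $Z_R=\theta_P(b,R)-\mu_P(b)$, gives
\[
\mathbb E_P[(r-1)Z_R\mid B=b]=\sum_k p_k\Big(\frac{q_k}{p_k}-1\Big)\big(\theta_P(b,k)-\mu_P(b)\big)=\delta_{\mathrm{comp}}(b).
\]
The restricted profile is defined with $(B,R)$-measurable reweightings and, equivalently, the centred restricted propensity, because $\mathbb E[\eta\mid B,R]=\theta_P$ and $r$ is $(B,R)$-measurable, so the tower property applies.

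\textbf{Step 3: the bound.} If $\delta_{\mathrm{comp}}(b)\ge0$, it is the objective at a feasible point, so it is at most $\mathcal F^+_{B,R}(b,D_R(b))$. If $\delta_{\mathrm{comp}}(b)<0$, the same $r$ is feasible for the problem with $-Z_R$, giving $-\delta_{\mathrm{comp}}(b)\le\mathcal F^-_{B,R}(b,D_R(b))$.

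\textbf{Main obstacle.} The only delicate point is bookkeeping, not analysis. One must check that the profile supremum ranges over densities relative to $P(\cdot\mid B=b)$ that are measurable in the readout. Under the hypothesis, $Q(\cdot\mid B=b)$ restricted to the readout is absolutely continuous with respect to $P$, so $q/p$ is a legitimate density. One must also confirm that the bin mass itself need not be preserved: the reweighting is applied within the bin conditionally, and the profile constraint is only $\mathbb E[r\mid B=b]=1$.
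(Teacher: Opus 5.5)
Your proof is correct and follows the paper's own argument. The identity comes from adding and subtracting $\sum_k q_k\theta_P(b,k)$ and using $\sum_k(q_k-p_k)=0$. The bound comes from noting that $w_k=q_k/p_k$ is a feasible point of the programme defining $\mathcal F^{\pm}_{B,R}(b,D_R(b))$, with objective value exactly $\delta_{\mathrm{comp}}(b)$: it is non-negative, has conditional mean one by the support condition, and has $\chi^2$ cost $D_R(b)$.

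Two small remarks. The appeal to Proposition~\ref{prop:transport} is unnecessary, since you compute the objective directly. Your observation that the identity holds under any finite convention for $\theta_P$ on empty cells is right, because the $q_k\theta_P(b,k)$ terms cancel between the two components.
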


\noindent
The identity is the classical composition-versus-rate decomposition
\citep{kitagawa1955,oaxaca1973}, related to the separation of input shift, outcome
change and poorly supported inputs by \citet{cai2023diagnosing}; here it is taken at a
frozen readout within bins, and the bound holds because $q_k/p_k$ is a feasible
reweighting at radius $D_R(b)$ (Appendix~\ref{app:decomp}). The source profile does
not control $\delta_{\mathrm{cond}}$, which at a finite readout can reflect a
changed labelling mechanism or within-cell variation the partition does not
resolve; under pure covariate shift with common support it vanishes as the readout
approaches $\sigma(X)$ (Appendix~\ref{app:hierarchy}). The bound is a population
statement evaluated with plug-in source rates, without coverage guarantee;
Proposition~\ref{prop:fscert} bounds fragility from below, not this drift from above.

\paragraph{Summaries.}
Per domain we take target-mass-weighted quadratic means
$L_x=(\sum_b q_b\,\delta_x(b)^2)^{1/2}$. These are norms, so
$L_{\mathrm{total}}\le L_{\mathrm{comp}}+L_{\mathrm{cond}}$ only, and
$L_{\mathrm{comp}}/L_{\mathrm{total}}$ is a ratio of norms, not an additive share.
The utilisation $U(b)=|\delta_{\mathrm{comp}}(b)|/\widehat{\mathcal F}^{\pm}_{B,R}(b,D_R(b))$
is undefined, and excluded, when a target group has no source mass or the
denominator vanishes; numerator and denominator use the same plug-in rates, so
$U\le1$ by construction.

\paragraph{Results.}
This analysis uses the original protocol's frozen partition, fitted before the
estimator revision, on ImageNet-C \citep{hendrycks2019benchmarking}, ImageNetV2
\citep{recht2019imagenet} and ImageNet-Sketch \citep{wang2019learning} for the six
primary classifiers. The conditional term is the larger: the mean per-(model,
domain) ratio $L_{\mathrm{comp}}/L_{\mathrm{total}}$ is $41.8\%$, $11.1\%$ and
$9.6\%$, and on ImageNet-C it falls from $50.6\%$ at severity~1 to $33.0\%$ at~5
(Figure~\ref{fig:severity}, Table~\ref{tab:natural}). The median utilisation over
the $9{,}177$ fold-pooled bin records, $8{,}937$ of them from ImageNet-C, is $0.64$
(per-benchmark medians of domain medians $0.63$, $0.37$, $0.89$). Re-estimating
source rates on held-out data inside the same partition moves the ratio by at most
\NatShareShiftDisjoint{} per checkpoint, a check of the rate estimator only. Rerunning the revised
pipeline end to end on ImageNetV2 and ImageNet-Sketch moves the ratio from $12.2\%$
to $12.0\%$ and from $9.8\%$ to $10.0\%$ at matched aggregation, with the conditional
term larger in all $30$ model--fold units of each (Appendix~\ref{app:ext:natural}). ImageNet-C was not rerun, so its severity trend and
the association below remain original-partition results. Removing bins flagged by
the original target-count rule within each domain (post-hoc;
Table~\ref{tab:r8-lowcount}) keeps \LcRetainedMass{} of target mass
on average and gives a mean ratio of \LcRatioRet{} instead of \LcRatioAll{}, a change
of estimand rather than a revised-pipeline rerun. Across its $75$ domains a
preregistered analysis found source FAUC associated with the composition term
(mean within-domain Spearman $+0.63$, interval $[+0.54,+0.72]$) and the ECE interval
including zero, with neither associated with the total or conditional term; the
single-domain benchmarks disagree, so the preregistered outcome was mixed
(Appendix~\ref{app:phase7}). Part of the FAUC association is structural, since both
use source cell-rate contrasts, but the target proportions are observed, so it is
not an identity; we read it as consistent with the decomposition, not as a
validated forecast.

\section{Related work}
\label{sec:related}

\paragraph{Grouping loss.}
Calibration--refinement decompositions separate miscalibration from the
heterogeneity a score cannot express \citep{degroot1983comparison,kull2015novel};
\citet{perezlebel2023grouping} estimate this grouping loss for neural networks and
show that binning inflates it, and later work relates it to decisions and
epistemic uncertainty \citep{perezlebel2025decision,melo2026epistemic}. These
analyses are static; we ask what the heterogeneity permits a shift to do.

\paragraph{Distributionally robust evaluation.}
We use robust expectations over $\varphi$-divergence balls, the exact $\chi^2$
dual and its non-negativity condition as established tools
\citep{bental2013robust,duchi2019variance,duchi2021learning,lam2016robust,gotoh2018robust}.
Closest in purpose, \citet{li2021evaluating} evaluate the worst performance over
subpopulations of a given size defined by core attributes, and
\citet{duchi2023distributionally} define such risks through conditional risks. Our
set instead fixes the score marginal, so the shift is invisible to score-only
monitoring, and budgets the reweighting within each confidence level; a learned
readout plays the role of the attributes inside bins. \citet{namkoong2026local}
restricts admissible transports and obtains first-order Wasserstein sensitivities;
our local law holds exactly on an interval. \citet{subbaswamy2021evaluating}
evaluate worst-case performance when selected conditional distributions shift; here
the preserved marginal is the confidence.

\paragraph{Calibration and performance estimation under shift.}
Temperature scaling recalibrates reported probabilities \citep{guo2017calibration},
and calibration degrades under shift \citep{ovadia2019trust}. Covariate-shift
methods use target inputs to reweight calibration, conformal sets or risk control
\citep{tibshirani2019conformal,park2020calibrated,almeida2025high}, where a
score-preserving shift can be visible; estimators that see only target confidences,
such as average thresholded confidence \citep{garg2022leveraging}, return the same
estimate for every such shift. Multicalibration \citep{hebertjohnson2018multicalibration}
controls calibration on specified groups; fragility measures heterogeneity left
within a level.

\section{Limitations and conclusion}
\label{sec:conclusion}

The theory concerns exact level sets and $\eta$; the empirical objects condition on
bins and a learned readout, and the binned oracle includes within-bin variation of
$\mu_P$, so nothing we report isolates level-set heterogeneity. Plug-in profiles
and FAUC have no coverage guarantee; the lower confidence bound has one per analysis but is
weak after temperature scaling. Our finite-sample comparison with the capped variance
uses synthetic laws. CIFAR-10H
(Appendix~\ref{app:ext:human}) measures a repeated-annotation proxy for $\eta$, not
$\eta$. The eighteen checkpoints are not a sample of models; the twelve additions
and all post-hoc analyses were designed with earlier results known; and temperature
scaling can reorder the top-label score. Nothing here says that natural shifts
attain the worst case or that a source quantity bounds total natural drift.

A fixed confidence distribution does not fix reliability. How far it can move under
a score-preserving shift is governed by the within-level heterogeneity of the
correctness propensity: its variance gives the exact local rate and its centred
law, which the profile identifies, the rest. With a finite
readout the exposure becomes a restricted quantity that can be estimated and
bounded below; held-out drift under optimised reweightings follows the estimate, which an
exploratory rate-noise null does not reproduce, and on natural shifts most reliability change
occurs within cells, outside the source profile's control. The open problem is
resolution: in the one dataset with a proxy for $\eta$, a learned readout exposes a
few percent of the within-bin variance, and Proposition~\ref{prop:approx} bounds what
the rest could add only through an unidentified quantity.
\label{endofmaintext}

\label{startofstatements}
\subsection*{Reproducibility statement}

Proofs, assumptions and degenerate cases are in Appendices~\ref{app:conventions}--\ref{app:witness},
the split-sample lower confidence bound in Appendix~\ref{app:fscert} and the
reparameterisation lemma in Appendix~\ref{app:monotone}. Appendix~\ref{app:estimators}
specifies the data roles, estimators, retention rules and the label-permutation
test of the rate estimator. Six analyses were governed by protocols written and
hashed before their results were computed: the pilot, the multi-architecture study,
the natural-shift study, the natural-target analysis (Appendix~\ref{app:phase7}),
the lower confidence bound (Appendix~\ref{app:fscert}) and the monotone
reparameterisation (Appendix~\ref{app:monotone}). After those results were frozen,
an internal audit found that the original rate estimator selected cells and fitted
their rates on the same labels. The main tables are therefore recomputed under
separated data roles and reported as post-audit, not preregistered, analyses.
Appendix~\ref{app:claims} records, analysis by analysis, what was recomputed and
what was not. The later extensions (Appendices~\ref{app:ext} and~\ref{app:breadth}),
the diagnostics of Appendix~\ref{app:r7} and the known-propensity study of
Appendix~\ref{app:r9} are post-hoc; their protocols were hashed before the
corresponding results were computed, except where Appendix~\ref{app:r7} states
otherwise, and Appendix~\ref{app:r9} records one technical amendment. The released hashes let a reader verify
that files are unchanged; they are not independent timestamps. A supplementary
package containing the protocols, code, checkpoint identifiers and checksums,
frozen result files, the generators of every table, macro and figure in the
appendices, and verification scripts accompanies this work and will be released
publicly. Verification from the frozen results
needs no data; an end-to-end rerun needs the public datasets and feature
extraction. All models are public pretrained checkpoints; none was trained or
fine-tuned by us.

\subsection*{AI use statement}

Generative AI tools were used as research assistants in the development of this
work, including conceptual and theoretical refinement; formulating, refining and
checking mathematical claims and proofs; feedback on experimental methodology;
implementation and debugging of research code; analysis and interpretation of
results; literature assistance; preparation of figures and tables; manuscript
drafting and editing; and adversarial review of claims. All AI-assisted content was
reviewed and verified
by the authors. In particular, theoretical claims were checked against their
stated assumptions and accompanying proofs, experimental claims against frozen
outputs and the corresponding analysis specifications, including recorded
post-hoc amendments, and the reported leakage controls and
numerical identities through explicit computational tests. The authors take full
responsibility for the final manuscript, mathematical claims, experimental
results, and associated artifacts.

\bibliographystyle{preprint}
\label{startofbib}
\bibliography{refs}

\appendix
\clearpage

\section{Notation index}\label{app:notation}

The same letters recur at four levels --- population, binned, restricted and
estimated --- and the table below fixes which object each symbol names. Unadorned
$G_R(b)$ and $\mathcal F^{\pm}_{B,R}(b,\rho)$ are population restricted targets in
the statements of results; tildes and hats mark estimates. In tables and in the
reporting of empirical values, $G_R$ and FAUC denote the reported estimates, as
their captions state, and a bin argument is suppressed only where the surrounding
text has fixed the bin.

\begin{center}\footnotesize\setlength{\tabcolsep}{4pt}
\begin{tabular}{p{0.30\textwidth}p{0.50\textwidth}p{0.13\textwidth}}
\toprule
symbol & meaning & level \\
\midrule
$X$, $C$, $S=s(X)$, $s$ & input; correctness $\mathbf 1\{\hat Y=Y\}$; confidence random variable; its value & population \\
$\eta(X)=\Pr(C=1\mid X)$ & correctness propensity & population \\
$\mu_P(s)$, $e_P(s)=\mu_P(s)-s$ & reliability relation; calibration residual & population \\
$G_P(s)$, $Z=\eta-\mu_P(s)$ & within-level grouping heterogeneity; centred propensity & population \\
$r=\mathrm dQ/\mathrm dP$, $\rho$, $\mathcal A_s(\rho)$ & score-preserving reweighting; conditional $\chi^2$ budget; feasible set & population \\
$\mathcal F^{\pm}_s(\rho)$ & fragility profile at level $s$ & population \\
$z_\pm$, $p_+$, $\rho_\pm$, $\rho_{\mathrm{sat}}$, $M_1,M_2,\kappa$ & extrema and top atom of $Z$; regime boundaries; stop-loss transforms & population \\
\midrule
$B$, $b$, $J$; $R$, $k$, $K$ & confidence partition, bin index, number of bins; readout, group index, number of groups & binned \\
$p_b$, $q_b$ & source and target bin mass & binned \\
$p_{bk}$ (or $p_k$ at fixed $b$), $q_{bk}$, $q_k$ & source and target within-bin group mass & binned \\
$\mu_P(b)$, $G_P(b)$, $\mathcal F^{\pm}_B(b,\rho)$ & binned oracle mean, heterogeneity and profile & binned oracle \\
$\theta_P(b,k)$, $\theta_Q(b,k)$ & source and target cell correctness rates & restricted \\
$G_R(b)$, $\mathcal F^{\pm}_{B,R}(b,\rho)$ & restricted heterogeneity and profile (population lower bounds on binned oracle quantities) & restricted \\
\midrule
$D_{\mathrm{fit}}$, $D_{\mathrm{rate}}$, $D_{\mathrm{eval}}$ & data roles of the revised pipeline & estimation \\
$\tilde\theta$, $\hat\theta$, $\hat p$ & raw binomial cell proportions; plug-in cell rates; evaluation proportions & estimation \\
$\widehat G_w$, $\widehat G_R$ & debiased restricted heterogeneity (Proposition~\ref{prop:gunbiased}) & estimation \\
$\widehat{\mathcal F}^{\pm}_{B,R}(b,\rho)$, $F^{+}(\rho)$, FAUC & plug-in profile; its bin-mass quadratic mean; its log-radius integral & estimation \\
\midrule
$D_R(b)$ & observed within-bin composition radius $\sum_{k:\,p_k>0}p_k(q_k/p_k-1)^2$ & natural shift \\
$\delta_{\mathrm{total}}(b)$, $\delta_{\mathrm{comp}}(b)$, $\delta_{\mathrm{cond}}(b)$ & binwise drift and its exact composition/conditional split & natural shift \\
$L_{\mathrm{total}}$, $L_{\mathrm{comp}}$, $L_{\mathrm{cond}}$; $L_{\mathrm{comp}}/L_{\mathrm{total}}$ & target-mass quadratic means of binwise drift; composition-to-total norm ratio (not a share) & natural shift \\
$U(b)$ & envelope utilisation $|\delta_{\mathrm{comp}}(b)|/\mathcal F^{\pm}_{B,R}(b,D_R(b))$ & natural shift \\
\midrule
$L_{bk}$, $U_{bk}$, $l_{bk}$, $l_b$ & simultaneous lower/upper bounds on $\theta_P(b,k)$, $p_{bk}$, $p_b$ & inference \\
$d$, $d^\ast$; $u$, $v$ & mass moved and largest feasible step; source and destination group & inference \\
$\mathrm{LCB}^{\pm}_b(\rho)$, $\mathrm{LCB}^{+}_{\mathrm{model}}(\rho)$ & split-sample lower confidence bound, per bin and aggregated & inference \\
\midrule
$T$, $\hat T$; $g$, $g_f$, $S'=g(S)$ & temperature and its fit; fold-wise monotone score map and transformed score & interventions \\
$\eta_H$, $\eta_H^{A}$, $\eta_H^{B}$ & human-label proxy for $\eta$ and its two annotator-half versions & Appendix~\ref{app:ext} \\
$G_H(b)$, $G_H^{\times}(b)$ & human-proxy heterogeneity; its cross-half (vote-noise-corrected) form & Appendix~\ref{app:ext} \\
$G_R^{H}(b)$, $\mathcal F^{H}_{B,R}$, $\mathcal F^{H}_{B}$ & restricted and binned quantities built on $\eta_H$ (same-proxy) & Appendix~\ref{app:ext} \\
$a_k$, $t$, $r_k$, $\delta_{\mathrm{random}}$ & random score-preserving direction, step, weights and realised movement & Appendix~\ref{app:ext} \\
\bottomrule
\end{tabular}
\end{center}

\section{Conventions, measurability and what ``at a score level'' means}
\label{app:conventions}

The score $S=s(X)$ may be continuously distributed, so $\{S=s\}$ is typically
$P$-null and every level-set quantity below is defined through a regular
conditional distribution $P(\cdot\mid S=s)$, which exists because the underlying
spaces are standard Borel. Regular conditional distributions are unique only up
to a $P_S$-null set of $s$, so the functions
\[
  \mu_P(s)=\mathbb E_P[\eta\mid S=s],\qquad
  G_P(s)=\operatorname{Var}_P(\eta\mid S=s),\qquad
  \mathcal F^{\pm}_s
\]
are defined $P_S$-almost everywhere and no statement about a single fixed $s$
carries meaning unless $P_S(\{s\})>0$: any version may be edited on a null set.
We therefore fix one version of each throughout, and every level-set assertion in
this paper is quantified either \emph{$P_S$-almost everywhere} or \emph{on a
measurable set of positive $P_S$-measure}. Proposition~\ref{prop:invisible} is
stated in the second form for exactly this reason.

\paragraph{Score-preserving shifts are covariate shifts.}
Throughout, $Q\ll P$ and its density $r=\mathrm dQ/\mathrm dP$ is
\textbf{$\sigma(X)$-measurable}. This is a genuine restriction and it is used in
every transport statement: it says the shift reweights inputs and leaves the
labelling mechanism $P(Y\mid X)$ intact, so that $\eta_Q=\eta_P=\eta$. A density
depending on $Y$ would change $\eta$ itself and Proposition~\ref{prop:transport}
would fail. We record the elementary equivalence used repeatedly below.

\begin{lemma}[Score preservation]\label{lem:preserve}
Let $Q\ll P$ with density $r\ge0$. Then $Q_S=P_S$ if and only if
$\mathbb E_P[r\mid S]=1$ $P$-a.s.
\end{lemma}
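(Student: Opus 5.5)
The plan is to prove both directions through the characterisation of $Q_S$ by its integrals against bounded measurable functions of $S$. For any bounded Borel $h$ on $[0,1]$,
\[
\mathbb E_Q[h(S)]=\mathbb E_P[r\,h(S)]=\mathbb E_P\big[h(S)\,\mathbb E_P[r\mid S]\big],
\]
by the change-of-measure identity and the tower property. The conditional expectation exists because $r\ge0$ and $\mathbb E_P r=Q(\Omega)=1$, so $r\in L^1(P)$.

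\emph{If.} Suppose $\mathbb E_P[r\mid S]=1$ $P$-a.s. Then the display gives $\mathbb E_Q[h(S)]=\mathbb E_P[h(S)]$ for every bounded Borel $h$. Taking $h=\mathbf 1_A$ yields $Q_S(A)=P_S(A)$ for all Borel $A$, so $Q_S=P_S$.

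\emph{Only if.} Suppose $Q_S=P_S$. Write $\mathbb E_P[r\mid S]=g(S)$ for a Borel $g\ge0$ (Doob--Dynkin). The display with $h=\mathbf 1_A$ gives $\int_A g\,\mathrm dP_S=Q_S(A)=P_S(A)=\int_A 1\,\mathrm dP_S$ for every Borel $A$. Two $P_S$-integrable functions with equal integrals over all Borel sets agree $P_S$-a.e. To see this, take $A=\{g>1\}$ and $A=\{g<1\}$: in each case the integral of $g-1$ over $A$ is zero while the integrand has a fixed strict sign on $A$, so $A$ is $P_S$-null. Hence $g=1$ $P_S$-a.e., that is, $\mathbb E_P[r\mid S]=1$ $P$-a.s.

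There is no real obstacle here. The only points needing care are integrability of $r$, which guarantees the conditional expectation exists, and the null-set bookkeeping. The latter matters because the statement is an a.s. statement about a version of the conditional expectation, consistent with the conventions above. Finally, Radon--Nikodym uniqueness of $\mathrm dQ_S/\mathrm dP_S=g$ offers an equivalent one-line phrasing of the converse: $g$ is a version of that density, and $Q_S=P_S$ forces it to be $1$.
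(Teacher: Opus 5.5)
Your proposal is correct and follows the paper's proof: the paper uses the same identity $Q(S\in A)=\mathbb E_P\big[\mathbf 1_A(S)\,\mathbb E_P[r\mid S]\big]$ and concludes the equivalence directly. Your $\{g>1\}$, $\{g<1\}$ argument (equivalently, Radon--Nikodym uniqueness) simply writes out the ``only if'' step that the paper leaves implicit.
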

\begin{proof}
For measurable $A$, $Q(S\in A)=\mathbb E_P[r\mathbf 1_A(S)]
=\mathbb E_P[\mathbf 1_A(S)\,\mathbb E_P[r\mid S]]$. This equals
$P(S\in A)=\mathbb E_P[\mathbf 1_A(S)]$ for every $A$ iff
$\mathbb E_P[r\mid S]=1$ a.s.
\end{proof}

Consequently the constraint $\mathbb E_P[r\mid S=s]=1$ in~\eqref{eq:set} is not
an extra modelling assumption layered on $Q_S=P_S$: it is the same condition.

\section{Proof of the reliability transport identity}\label{app:transport}

\begin{restate}
Let $Q\ll P$ with $\sigma(X)$-measurable density $r=\mathrm dQ/\mathrm dP$
satisfying $\mathbb E_P[r\mid S=s]=1$ for $P_S$-a.e.\ $s$. Then
$\mu_Q(s)-\mu_P(s)=\mathbb E_P[(r-1)Z\mid S=s]$ for $P_S$-a.e.\ $s$, where
$Z=\eta-\mu_P(s)$.
\end{restate}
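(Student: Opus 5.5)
The plan is to compute $\mu_Q$ through the regular conditional distributions and show it is a reweighted version of $\mu_P$. The first step is to check that the propensity is the same under both laws. Since $r$ is $\sigma(X)$-measurable, for any bounded measurable $h(X)$ we have
\[
\mathbb E_Q[C\,h(X)]=\mathbb E_P[r\,C\,h(X)]=\mathbb E_P[r\,h(X)\,\eta(X)]=\mathbb E_Q[\eta(X)h(X)],
\]
where the middle equality conditions on $X$ under $P$ and pulls out $r\,h(X)$. Hence $\Pr_Q(C=1\mid X)=\eta$ $Q$-a.s., and therefore $\mu_Q(s)=\mathbb E_Q[\eta\mid S=s]$.

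The second step expresses $\mathbb E_Q[\eta\mid S]$ under $P$ by the abstract Bayes formula. For measurable $A$,
\[
\mathbb E_Q[\eta\,\mathbf 1_A(S)]=\mathbb E_P[r\,\eta\,\mathbf 1_A(S)]=\mathbb E_P\big[\mathbf 1_A(S)\,\mathbb E_P[r\eta\mid S]\big].
\]
Lemma~\ref{lem:preserve} gives $Q_S=P_S$, which is equivalent to $\mathbb E_P[r\mid S]=1$. So the right-hand side equals $\mathbb E_Q\big[\mathbf 1_A(S)\,\mathbb E_P[r\eta\mid S]\big]$, because a function of $S$ has the same integral under $Q_S$ and $P_S$. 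This identifies $\mu_Q(s)=\mathbb E_P[r\eta\mid S=s]$ for $Q_S$-a.e. $s$, which is the same as $P_S$-a.e. $s$.

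The third step is algebra. Since $\mathbb E_P[r\mid S=s]=1$,
\[
\mathbb E_P[r\eta\mid S=s]-\mu_P(s)=\mathbb E_P[(r-1)\eta\mid S=s]=\mathbb E_P[(r-1)(\eta-\mu_P(s))\mid S=s].
\]
The last equality holds because $\mu_P(S)$ is $\sigma(S)$-measurable and $\mathbb E_P[r-1\mid S]=0$, so subtracting $\mu_P(S)$ adds nothing.

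The main obstacle is integrability together with the null-set bookkeeping. $\eta\in[0,1]$ is bounded, but $r$ is only $P$-integrable. This is enough: $r\eta$ is integrable, so every conditional expectation above exists, and the conditional $\chi^2$ bound is not even needed. The equalities hold as versions of conditional expectations, so each identity holds $P_S$-a.e. To make this precise I would work throughout with $\mathbb E[\cdot\mid S]$ as a $\sigma(S)$-measurable random variable and only at the end read off versions through the regular conditional distribution fixed in Appendix~\ref{app:conventions}. Any two versions then agree $P_S$-a.e., and $Q_S=P_S$ makes ``$Q_S$-a.e.'' and ``$P_S$-a.e.'' coincide.
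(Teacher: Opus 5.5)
Your proof is correct and follows essentially the paper's route. Both arguments use the change-of-measure formula for conditional expectations with denominator $\mathbb E_P[r\mid S]=1$, the tower property over $\sigma(X)$ (valid because $r$ is $\sigma(X)$-measurable) to replace $C$ by $\eta$, and the subtraction of $\mu_P(S)$ using $\mathbb E_P[r-1\mid S]=0$. The only difference is the order of steps: you first show $\Pr_Q(C=1\mid X)=\eta$ and then prove the formula for $W=\eta$ by test sets, whereas the paper applies it to $W=C$ and then conditions on $X$ under $P$; nothing of substance changes.
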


\begin{proof}
Change of measure for conditional expectations gives, for bounded measurable $W$,
$\mathbb E_Q[W\mid S]=\mathbb E_P[rW\mid S]/\mathbb E_P[r\mid S]
=\mathbb E_P[rW\mid S]$, the denominator being $1$ by hypothesis. Take $W=C$.
Since $r$ is $\sigma(X)$-measurable and $\sigma(S)\subseteq\sigma(X)$, the tower
property over $\sigma(X)$ gives
$\mathbb E_P[rC\mid S]=\mathbb E_P\big[r\,\mathbb E_P[C\mid X]\mid S\big]
=\mathbb E_P[r\eta\mid S]$; all interchanges are justified because $r\ge0$ is
$P(\cdot\mid S=s)$-integrable with integral $1$ and $\eta\in[0,1]$ is bounded.
Hence
$\mu_Q(s)-\mu_P(s)=\mathbb E_P[(r-1)\eta\mid S=s]$, and since
$\mathbb E_P[r-1\mid S=s]=0$ the constant $\mu_P(s)$ may be subtracted from
$\eta$ without changing the value, giving $\mathbb E_P[(r-1)Z\mid S=s]$.
\end{proof}

No continuity of $s\mapsto\mu_P(s)$ is used, and no assumption is made on the
$\eta$-marginal beyond boundedness.

\section{Proof of score-marginal-invisible drift
(Proposition~\ref{prop:invisible})}\label{app:invisible}

We restate the proposition in the quantified form used in the main text.

\begin{restateinv}
Let $A$ be measurable with $P_S(A)>0$.
\emph{(i)} If $G_P>0$ $P_S$-a.e.\ on $A$, then for every $\rho>0$ there exists
$Q\ll P$ with $\sigma(X)$-measurable density satisfying~\eqref{eq:set} such that
$Q_S=P_S$ exactly, $\mu_Q>\mu_P$ $P_S$-a.e.\ on $A$, and $\mu_Q=\mu_P$
$P_S$-a.e.\ off $A$.
\emph{(ii)} If $G_P=0$ $P_S$-a.e.\ on $A$, then every such $Q$ satisfies
$\mu_Q=\mu_P$ $P_S$-a.e.\ on $A$.
\end{restateinv}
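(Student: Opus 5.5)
For part~(i) I would build one global affine density directly, rather than glue level-wise optimisers across uncountably many levels. Define
\[
  r(X)=1+c(S)\,Z,\qquad Z=\eta(X)-\mu_P(S),\qquad
  c(s)=\mathbf 1_A(s)\,\min\Big\{\sqrt{\rho/G_P(s)},\;1/|z_-(s)|\Big\},
\]
with $c(s)=0$ wherever $G_P(s)=0$ or off $A$, and with $1/|z_-(s)|:=+\infty$ when $z_-(s)=0$. Here $z_-(s)$ is the essential infimum of $Z$ under a fixed regular version of $P(\cdot\mid S=s)$.

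The first step is measurability of $c$ and hence of $r$. The maps $s\mapsto\mu_P(s)$ and $s\mapsto G_P(s)$ are measurable by construction of the regular conditional distribution. For the essential infimum I would write $z_-(s)=\sup\{q\in\mathbb Q:\ P(Z<q\mid S=s)=0\}$, which is a countable supremum of measurable functions of $s$. Thus $r$ is $\sigma(X)$-measurable, since $\eta$ and $S$ are.

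The second step checks the constraints~\eqref{eq:set}. Nonnegativity holds $P$-a.s.: on $A$ we have $c\,Z\ge c\,z_-\ge -1$, and elsewhere $r=1$. Because $\mathbb E[Z\mid S]=0$, we get $\mathbb E_P[r\mid S]=1$. By Lemma~\ref{lem:preserve} this gives $Q_S=P_S$ exactly, and also $\mathbb E_P[r]=1$, so $Q$ is a probability measure. The conditional $\chi^2$ cost is $c(s)^2G_P(s)\le\rho$.

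The third step computes the drift with Proposition~\ref{prop:transport}:
\[
  \mu_Q(s)-\mu_P(s)=\mathbb E_P[(r-1)Z\mid S=s]=c(s)\,G_P(s).
\]
This is strictly positive a.e.\ on $A$, because $G_P>0$ there and $c>0$. The only point to check is that $z_-(s)\in[-1,0]$ is finite, which holds since $|Z|\le 1$. Off $A$ the drift is $0$. Note that on the variance-controlled interval of Theorem~\ref{thm:profile} this is exactly the optimiser $r^\ast$.

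For part~(ii), on a full-measure subset of $A$ we have $G_P(s)=\mathbb E[Z^2\mid S=s]=0$, so $Z=0$ $P(\cdot\mid S=s)$-a.s. Then Proposition~\ref{prop:transport} gives $\mu_Q(s)-\mu_P(s)=\mathbb E_P[(r-1)Z\mid S=s]=0$; the integrability of $r$ under the conditional law is part of~\eqref{eq:set}. To translate ``a.e.\ $s$ in $A$ with $Z=0$ conditionally'' into the product statement, I would use $\mathbb E_P[\mathbf 1_A(S)Z^2]=\int_A G_P\,\mathrm dP_S=0$.

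The main obstacle is the measurability and version bookkeeping: $z_-(s)$ depends on the chosen version, and all statements must survive null-set edits. The explicit countable representation of $z_-$ and the a.e.\ quantification handle this. An alternative that avoids $z_-$ altogether is the truncation $r=1+c(S)\,\max(Z,-1/2)$ recentred within each level, but the capped affine choice above is simpler.
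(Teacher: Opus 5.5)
Your proof is correct and follows the paper's route: an affine reweighting $1+c\,Z\,\mathbf 1_A(S)$ whose drift is $c\,G_P$ by the transport identity, and, for (ii), $Z=0$ conditionally on levels where $G_P=0$. The only difference is the coefficient. The paper takes a constant $c=\varepsilon\in(0,\min\{1,\sqrt\rho\}]$, so $|Z|\le1$ and $G_P\le1$ give nonnegativity and the budget at once, and the measurability of $z_-(s)$, which you single out as the main obstacle, never arises. Your level-dependent $c(s)$ pays for that bookkeeping and in return gives a single $Q$ that attains $\mathcal F^{+}_s(\rho)$ at every level $s\in A$ with $\rho\le\rho_+(s)$.
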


\begin{proof}
\emph{(i)} Fix $\varepsilon\in(0,\min\{1,\sqrt\rho\}]$ and set
\[
  r \;=\; 1+\varepsilon\,\big(\eta-\mu_P(S)\big)\,\mathbf 1\{S\in A\} .
\]
Choosing measurable versions of $\mu_P$ makes $r$ a measurable function of $X$.
Since $\eta\in[0,1]$ and $\mu_P\in[0,1]$ we have $|\eta-\mu_P(S)|\le1$, so
$r\ge1-\varepsilon\ge0$. For $P_S$-a.e.\ $s$, $\mathbb E_P[\eta-\mu_P(s)\mid
S=s]=0$, hence $\mathbb E_P[r\mid S=s]=1$; by Lemma~\ref{lem:preserve} the
measure $\mathrm dQ=r\,\mathrm dP$ is a probability measure with $Q_S=P_S$
exactly. Its conditional budget is
$\mathbb E_P[(r-1)^2\mid S=s]=\varepsilon^2G_P(s)\le\varepsilon^2\le\rho$ for
$s\in A$ and $0$ otherwise, so $r\in\mathcal A_s(\rho)$ for a.e.\ $s$. Finally
Proposition~\ref{prop:transport} gives
\[
  \mu_Q(s)-\mu_P(s)\;=\;\varepsilon\,\mathbb E_P[Z^2\mid S=s]\;=\;\varepsilon\,G_P(s),
\]
which is strictly positive for $P_S$-a.e.\ $s\in A$ and zero off $A$.

\emph{(ii)} If $G_P(s)=0$ then $Z=0$ $P(\cdot\mid S=s)$-a.s., so
$\mathbb E_P[(r-1)Z\mid S=s]=0$ for \emph{every} admissible $r$, and
Proposition~\ref{prop:transport} gives $\mu_Q(s)=\mu_P(s)$. This holds for
$P_S$-a.e.\ $s\in A$.
\end{proof}

\paragraph{Why the quantifiers matter.}
The pointwise reading ``there exists $Q$ with $\mu_Q(s)\neq\mu_P(s)$ at this
particular $s$'' is not a well-posed statement when $P_S(\{s\})=0$, because
$\mu_Q$ and $\mu_P$ are only determined up to $P_S$-null sets and can be edited
at $s$ at will. Part~(ii) also genuinely requires $Q\ll P$: a $Q$ charging inputs
outside the support of $P$ can change reliability even where $G_P=0$, and such a
$Q$ is not a reweighting of the source at all. This is one population-level
mechanism the conditional term of Section~\ref{sec:natural} can reflect; at a
finite readout that term can equally arise from partition coarseness, and the two
are not separable from the data.

\paragraph{What is and is not new here.}
Given Theorem~\ref{thm:profile} the mathematical content is immediate
($\mathcal F^\pm_s>0\iff G_P(s)>0$), and in static form the equivalence
``$G_P\equiv0$ iff the score is sufficient for correctness'' is standard in the
grouping-loss literature. Its role is to provide the dynamic interpretation used in
Section~\ref{sec:theory}.

\section{Proof of the full fragility profile (Theorem~\ref{thm:profile})}
\label{app:profile}

Throughout we work under $P(\cdot\mid S=s)$ and suppress $s$. Recall
$Z\in[-1,1]$, $\mathbb E Z=0$, $G=\mathbb E Z^2>0$, $z_\pm$ the essential
extrema, $p_+=\Pr(Z=z_+)$, and $M_1,M_2,\kappa$ as in~\eqref{eq:stoploss}. Since
$\mathbb EZ=0$ and $G>0$ we have $z_-<0<z_+$, so $\rho_+=G/z_-^2\in(0,\infty)$.
We use the convention $1/0:=+\infty$, so $p_+=0$ gives
$\rho_{\mathrm{sat}}=+\infty$.

We must evaluate
$\mathcal F^+(\rho)=\sup\{\mathbb E[rZ]:r\ge0,\ \mathbb E[r]=1,\
\mathbb E[(r-1)^2]\le\rho\}$.

\begin{lemma}[Concavity and monotonicity]\label{lem:concave}
$\rho\mapsto\mathcal F^+(\rho)$ is non-decreasing and concave on $[0,\infty)$,
with $\mathcal F^+(0)=0$.
\end{lemma}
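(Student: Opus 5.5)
The plan is to read $\mathcal F^+$ as the value function of a family of convex programs whose feasible sets are nested and convex in $\rho$. Write $\mathcal A(\rho)=\{r\ge0:\mathbb E r=1,\ \mathbb E(r-1)^2\le\rho\}$ under $P(\cdot\mid S=s)$.

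\emph{Value at the origin.} At $\rho=0$ the constraint $\mathbb E(r-1)^2\le0$ forces $r=1$ almost surely. Hence $\mathcal F^+(0)=\mathbb E Z=0$.

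\emph{Monotonicity.} If $\rho\le\rho'$ then $\mathcal A(\rho)\subseteq\mathcal A(\rho')$, so the supremum can only grow.

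\emph{Finiteness.} Cauchy--Schwarz gives $\mathbb E[(r-1)Z]\le\sqrt{\rho G}$. So $\mathcal F^+$ is finite everywhere. Note also $r=1\in\mathcal A(\rho)$ for every $\rho$, so $\mathcal F^+\ge0$.

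\emph{Concavity.} This is the substantive step. Fix $\rho_0,\rho_1\ge0$, $\lambda\in[0,1]$ and $\varepsilon>0$. Choose $r_i\in\mathcal A(\rho_i)$ with $\mathbb E[r_iZ]\ge\mathcal F^+(\rho_i)-\varepsilon$, and set $r_\lambda=(1-\lambda)r_0+\lambda r_1$.
\begin{itemize}
\item Nonnegativity and $\mathbb E r_\lambda=1$ are preserved under convex combination.
\item The map $r\mapsto\mathbb E(r-1)^2$ is convex, so $\mathbb E(r_\lambda-1)^2\le(1-\lambda)\rho_0+\lambda\rho_1$. Thus $r_\lambda\in\mathcal A((1-\lambda)\rho_0+\lambda\rho_1)$.
\item The objective is linear, so
\[
\mathcal F^+\big((1-\lambda)\rho_0+\lambda\rho_1\big)\ \ge\ \mathbb E[r_\lambda Z]\ \ge\ (1-\lambda)\mathcal F^+(\rho_0)+\lambda\mathcal F^+(\rho_1)-\varepsilon .
\]
\end{itemize}
Letting $\varepsilon\downarrow0$ gives concavity.

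\emph{Where care is needed.} There is no real obstacle, only the integrability bookkeeping. Every admissible $r$ lies in $L^2$ because $\mathbb E r^2=1+\mathbb E(r-1)^2<\infty$, and $Z$ is bounded, so all expectations used above are finite. I use near-optimal $r_i$ rather than maximisers because attainment has not been established at this point; the $\varepsilon$-argument avoids needing it.
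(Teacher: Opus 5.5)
Your proof is correct and follows the paper's argument essentially verbatim: nested feasible sets give monotonicity, and a convex combination of $\varepsilon$-optimal reweightings, feasible at the averaged budget by convexity of $t\mapsto t^2$, gives concavity. Your explicit checks that $\mathcal F^+(0)=0$ and that $\mathcal F^+$ is finite (via Cauchy--Schwarz) are small additions that the paper leaves implicit.
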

\begin{proof}
Monotonicity is immediate since the feasible sets are nested. For concavity, let
$r_i$ be feasible at $\rho_i$ with $\mathbb E[r_iZ]\ge\mathcal
F^+(\rho_i)-\epsilon$ and $\lambda\in[0,1]$. Then
$r_\lambda=\lambda r_1+(1-\lambda)r_2$ satisfies $r_\lambda\ge0$ and
$\mathbb E[r_\lambda]=1$, and by convexity of $t\mapsto t^2$,
$\mathbb E[(r_\lambda-1)^2]\le\lambda\rho_1+(1-\lambda)\rho_2$; it is therefore
feasible at $\lambda\rho_1+(1-\lambda)\rho_2$ and attains at least
$\lambda\mathcal F^+(\rho_1)+(1-\lambda)\mathcal F^+(\rho_2)-\epsilon$.
\end{proof}

\subsection{A verification argument}

We prove optimality by direct verification. The following bound holds for every
feasible reweighting and is attained by the proposed construction, which proves
optimality and attainment together.

\begin{lemma}[Verification bound]\label{lem:verify}
Let $\tau<z_+$ so that $M_1(\tau)>0$, and let $r$ be any $r\ge0$ with
$\mathbb E[r]=1$ and $\mathbb E[(r-1)^2]\le\kappa(\tau)$. Then
\[
   \mathbb E[rZ]\;\le\;\tau+\frac{M_2(\tau)}{M_1(\tau)},
\]
with equality for $r^\ast=(Z-\tau)_+/M_1(\tau)$, which is feasible and spends the
budget exactly.
\end{lemma}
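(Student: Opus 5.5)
The plan has two parts: compute the moments of $r^\ast$ to get feasibility and the equality case, then prove the upper bound for arbitrary feasible $r$ by a Lagrangian (weak duality) inequality whose multipliers are read off from $r^\ast$.

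\emph{Feasibility and the value of $r^\ast$.} Because $M_1(\tau)>0$, the reweighting $r^\ast=(Z-\tau)_+/M_1(\tau)$ is well defined and nonnegative. It has $\mathbb E[r^\ast]=1$ and $\mathbb E[(r^\ast)^2]=M_2(\tau)/M_1(\tau)^2$, so $\mathbb E[(r^\ast-1)^2]=\kappa(\tau)$ and the budget is spent exactly. For its value, write $Z=\tau+(Z-\tau)$ on $\{Z>\tau\}$, where $r^\ast$ lives. This gives
\[
\mathbb E[r^\ast Z]=\tau+\frac{\mathbb E[(Z-\tau)_+^2]}{M_1(\tau)}=\tau+\frac{M_2(\tau)}{M_1(\tau)} .
\]

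\emph{Upper bound for any feasible $r$.} Put $c=M_1(\tau)$. The pointwise inequality I will use is
\[
rZ\le \tau r+\frac{c}{2}r^2+\frac{1}{2c}(Z-\tau)_+^2 ,
\]
valid for all $r\ge0$. To check it, write $rZ=\tau r+r(Z-\tau)$ and use $r(Z-\tau)\le r(Z-\tau)_+$ (this step needs $r\ge0$). Then apply AM--GM in the form $r(Z-\tau)_+\le\frac c2 r^2+\frac1{2c}(Z-\tau)_+^2$. Taking expectations and using $\mathbb E r=1$ gives
\[
\mathbb E[rZ]\le\tau+\frac c2\,\mathbb E[r^2]+\frac{M_2}{2c}.
\]
The budget constraint gives $\mathbb E[r^2]=1+\mathbb E[(r-1)^2]\le 1+\kappa(\tau)=M_2/c^2$. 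Substituting, the right-hand side becomes $\tau+\frac{M_2}{2c}+\frac{M_2}{2c}=\tau+M_2/M_1$, which is the claimed bound.

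\emph{Equality.} Both inequalities above are tight at $r^\ast$: $r^\ast$ vanishes where $Z\le\tau$, and on $\{Z>\tau\}$ it equals $(Z-\tau)_+/c$, which is exactly the equality case of AM--GM. This agrees with the direct computation in the first paragraph.

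The main obstacle is choosing the weight $c$ in AM--GM so that the bound closes exactly rather than leaving slack. Taking $c=M_1(\tau)$, read off from the form of $r^\ast$, is what makes the budget term and the stop-loss term each contribute half of $M_2/M_1$.
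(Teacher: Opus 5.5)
Your proof is correct and follows essentially the paper's argument: the same first step $rZ\le\tau r+r(Z-\tau)_+$ (using $r\ge0$), then a quadratic bound on $\mathbb E[r(Z-\tau)_+]$ closed with $\mathbb E[r^2]\le 1+\kappa(\tau)=M_2(\tau)/M_1(\tau)^2$, and equality at $r^\ast$. The only difference is that the paper applies Cauchy--Schwarz, $\mathbb E[r(Z-\tau)_+]\le\|r\|_2\sqrt{M_2(\tau)}$, where you use pointwise AM--GM with weight $c=M_1(\tau)$; that weight is exactly the Cauchy--Schwarz-optimal one when the budget binds, so the two steps are interchangeable, and your form has the small bonus of exhibiting the Lagrange multipliers $\tau$ and $M_1(\tau)/2$ explicitly.
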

\begin{proof}
Write $Z=\tau+(Z-\tau)_+-(\tau-Z)_+$. Since $r\ge0$ and $(\tau-Z)_+\ge0$,
\[
  \mathbb E[rZ]\;=\;\tau+\mathbb E\big[r(Z-\tau)_+\big]-\mathbb E\big[r(\tau-Z)_+\big]
  \;\le\;\tau+\mathbb E\big[r(Z-\tau)_+\big].
\]
By Cauchy--Schwarz, $\mathbb E[r(Z-\tau)_+]\le\|r\|_2\sqrt{M_2(\tau)}$, and
$\|r\|_2^2=\mathbb E[r^2]=1+\mathbb E[(r-1)^2]\le1+\kappa(\tau)
=M_2(\tau)/M_1(\tau)^2$. Combining,
$\mathbb E[r(Z-\tau)_+]\le M_2(\tau)/M_1(\tau)$, which is the claim.
For $r^\ast$: it is non-negative, $\mathbb E[r^\ast]=1$ by construction,
$\mathbb E[(r^\ast-1)^2]=\mathbb E[(r^\ast)^2]-1=M_2/M_1^2-1=\kappa(\tau)$, and
both inequalities above are equalities because $r^\ast(\tau-Z)_+=0$ pointwise and
$r^\ast$ is proportional to $(Z-\tau)_+$. Its value is
$\mathbb E[r^\ast Z]=\big(M_2(\tau)+\tau M_1(\tau)\big)/M_1(\tau)
=\tau+M_2(\tau)/M_1(\tau)$.
\end{proof}

\begin{lemma}[Monotonicity of $\kappa$]\label{lem:kappa}
$\kappa$ is continuous and non-decreasing on $(-\infty,z_+)$, with
$\kappa(\tau)\to0$ as $\tau\to-\infty$. It is \emph{strictly} increasing on
$(-\infty,\tau_{\max})$, where $\tau_{\max}=\sup\{\tau:\Pr(Z>\tau)>p_+\}$, and is
constant equal to $1/p_+-1$ on $[\tau_{\max},z_+)$. Hence
$\kappa:(-\infty,\tau_{\max})\to(0,\rho_{\mathrm{sat}})$ is a continuous
strictly increasing bijection.
\end{lemma}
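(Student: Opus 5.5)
The plan is to differentiate $M_1$ and $M_2$ and reduce everything to one ratio. On $(-\infty,z_+)$, $M_1(\tau)=\int_\tau^\infty \bar F(t)\,\mathrm dt$ with $\bar F(t)=\Pr(Z>t)$. Likewise $M_2(\tau)=2\int_\tau^\infty M_1(t)\,\mathrm dt$. Both are continuous, and $M_1>0$ there, so $\kappa=M_2/M_1^2-1$ is continuous. Both functions are absolutely continuous, with right derivatives $-\bar F(\tau)$ and $-2M_1(\tau)$. Hence the right derivative of $\log(1+\kappa)$ is
\[
-\frac{2M_1}{M_2}+\frac{2\bar F}{M_1}
=\frac{2\big(\bar F(\tau)M_2(\tau)-M_1(\tau)^2\big)}{M_1(\tau)M_2(\tau)} .
\]
Monotonicity therefore reduces to the sign of $D(\tau):=\bar F(\tau)M_2(\tau)-M_1(\tau)^2$. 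A continuous function with nonnegative right derivative everywhere (Dini) is non-decreasing, so continuity plus a pointwise sign suffices.

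\textbf{Sign of $D$.} Cauchy--Schwarz applied to $(Z-\tau)_+=(Z-\tau)_+\mathbf 1\{Z>\tau\}$ gives $M_1^2\le \bar F\,M_2$, so $D\ge0$. Equality holds iff $(Z-\tau)_+$ is a.s.\ constant on $\{Z>\tau\}$. That means the conditional law of $Z$ given $Z>\tau$ is a single atom, and that atom must be at $z_+$. This happens exactly when $\bar F(\tau)=p_+$, i.e.\ $\tau\ge\tau_{\max}$. For $\tau<\tau_{\max}$ we have $\bar F(\tau)>p_+$, so $D>0$. Thus the right derivative of $\kappa$ is strictly positive on $(-\infty,\tau_{\max})$, which gives strict increase there. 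A continuous function with positive right Dini derivative is strictly increasing; I would cite this or argue it via a small $\epsilon t$ perturbation.

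\textbf{Constant part and limits.} On $[\tau_{\max},z_+)$ the law above $\tau$ is the atom $p_+$ at $z_+$. So $M_1=p_+(z_+-\tau)$ and $M_2=p_+(z_+-\tau)^2$, giving $\kappa=1/p_+-1$. If $p_+=0$, then $\tau_{\max}=z_+$ and this interval is empty; this is consistent with the convention, since $\rho_{\mathrm{sat}}=\infty$.

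For $\tau\to-\infty$, $Z$ is bounded, so eventually $(Z-\tau)_+=Z-\tau$. Then $M_1=-\tau$ (using $\mathbb EZ=0$) and $M_2=G+\tau^2$, so $\kappa=G/\tau^2\to0$. This also shows $\kappa>0$ everywhere on $(-\infty,\tau_{\max})$.

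\textbf{Upper limit and bijection.} As $\tau\uparrow\tau_{\max}$, continuity gives $\kappa\to1/p_+-1$ when $\tau_{\max}<z_+$. When $p_+=0$ and $\tau\uparrow z_+$, I need $\kappa\to\infty$. This is the main obstacle.

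For that case I would argue with a truncation. For $\tau<z_+$, pick $\tau<t<z_+$. Then $M_2(\tau)\ge (t-\tau)^2\,\bar F(t)$. Also
\[
M_1(\tau)\le (t-\tau)\bar F(\tau)+M_1(t)\le (t-\tau)\bar F(\tau)+(z_+-t)\bar F(t).
\]
Choose $t$ at the midpoint between $\tau$ and $z_+$. Then
\[
\frac{M_2}{M_1^2}\ \ge\ \frac{c\,\bar F(t)}{\bar F(\tau)^2}\ \ge\ \frac{c\,\bar F(t)}{\bar F(\tau)}\cdot\frac{1}{\bar F(\tau)},
\]
which would diverge since $\bar F(\tau)\to p_+=0$. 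Securing the constant $c$ may need the ratio $\bar F(t)/\bar F(\tau)$ to stay bounded below. If it does not, I would instead pick $t$ adaptively so that $\bar F(t)=\bar F(\tau)/2$, quantile-wise.

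With the limits at both ends established, continuity and strict monotonicity give the bijection onto $(0,\rho_{\mathrm{sat}})$ by the intermediate value theorem.
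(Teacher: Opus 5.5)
The steps up to the upper endpoint follow the paper's argument: one-sided derivatives of $M_1,M_2$, Cauchy--Schwarz for $M_1^2\le\bar F M_2$ with its equality case, the constant regime, and $\kappa=G/\tau^2$ for $\tau\le z_-$. Working with $\log(1+\kappa)$, right derivatives only and a Dini-type monotonicity lemma is a clean variant of it.

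The gap is at the upper end of the interval, which you need for surjectivity onto $(0,\rho_{\mathrm{sat}})$. Your midpoint truncation gives only $M_2/M_1^2\ge\bar F(t)/(4\bar F(\tau)^2)$, and this need not diverge. Take $\Pr(Z>z_+-\epsilon)=e^{-1/\epsilon^2}$ for small $\epsilon$, with $\tau=z_+-\epsilon$ and $t=z_+-\epsilon/2$. Then the bound equals $\tfrac{1}{4}e^{-2/\epsilon^2}\to0$. The adaptive-quantile fallback is left unexecuted.

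Separately, your case split ($\tau_{\max}<z_+$ versus $p_+=0$) misses the case $p_+>0$ with $\tau_{\max}=z_+$, i.e.\ an atom at $z_+$ with further mass accumulating just below it. There $[\tau_{\max},z_+)$ is empty and $\kappa$ is undefined at $z_+$, so continuity gives you no limit.

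Both cases follow from a two-sided bound, and you have already proved the lower half.
\begin{itemize}
\item \textbf{Lower bound.} Your Cauchy--Schwarz inequality $M_1^2\le\bar F M_2$ is exactly $1+\kappa(\tau)=M_2/M_1^2\ge 1/\bar F(\tau)$.
\item \textbf{Upper bound.} Since $0\le(Z-\tau)_+\le z_+-\tau$, you have $M_2\le(z_+-\tau)M_1$ and $M_1\ge p_+(z_+-\tau)$. Hence $1+\kappa(\tau)\le 1/p_+$ on all of $(-\infty,z_+)$.
\item \textbf{Limit.} As $\tau\uparrow z_+$, $\{Z>\tau\}\downarrow\{Z\ge z_+\}$, so $\bar F(\tau)\downarrow\Pr(Z\ge z_+)=p_+$. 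Therefore $\kappa(\tau)\to 1/p_+-1=\rho_{\mathrm{sat}}$, which is $+\infty$ when $p_+=0$.
\end{itemize}
Together with strict increase below $\tau_{\max}$ and $\kappa\to0$ at $-\infty$, this gives the bijection in every case. The paper's proof passes over this endpoint with its closing ``Hence'', so you were right to treat it. The repair is these few lines rather than a truncation argument.
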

\begin{proof}
$M_2$ is continuously differentiable with $M_2'=-2M_1$ everywhere (dominated
convergence; the integrand $(Z-\tau)_+^2$ has the bounded $\tau$-derivative
$-2(Z-\tau)_+$). $M_1$ is convex with one-sided derivatives
$M_1'{}_+(\tau)=-\Pr(Z>\tau)$ and $M_1'{}_-(\tau)=-\Pr(Z\ge\tau)$, equal off the
countable set of atoms. Writing $\bar F$ for either one-sided survival value,
\[
  \kappa'(\tau)=\frac{M_2'M_1-2M_2M_1'}{M_1^{3}}
  =\frac{2\big(M_2\bar F-M_1^{2}\big)}{M_1^{3}}\;\ge\;0 ,
\]
because Cauchy--Schwarz on $\{Z>\tau\}$ (resp.\ $\{Z\ge\tau\}$) gives
$M_1=\mathbb E[(Z-\tau)\mathbf 1]\le\sqrt{M_2\bar F}$. Equality forces $Z-\tau$
to be a.s.\ constant on that event. On the open event this means that only one
value of $Z$ lies above $\tau$; on the closed event, if $Z$ has an atom at $\tau$,
it additionally forces $\Pr(Z>\tau)=0$, and if it has none the two events coincide.
Either way, equality is impossible while more than the top atom survives above
$\tau$, so $\kappa'>0$ strictly for $\tau<\tau_{\max}$, and for $\tau\ge\tau_{\max}$ we have
$M_1=p_+(z_+-\tau)$, $M_2=p_+(z_+-\tau)^2$ and $\kappa\equiv1/p_+-1$. For
$\tau\le z_-$, $M_1=-\tau$ and $M_2=G+\tau^2$, so $\kappa=G/\tau^2\to0$.
\end{proof}

\subsection{The three regimes}

\emph{Regime (i).} Dropping $r\ge0$, Cauchy--Schwarz gives
$\mathbb E[(r-1)Z]\le\|r-1\|_2\|Z\|_2\le\sqrt{\rho G}$ with equality iff
$r-1=\sqrt{\rho/G}\,Z$. That candidate satisfies $r\ge0$ iff
$1+\sqrt{\rho/G}\,z_-\ge0$, i.e.\ iff $\rho\le G/z_-^2=\rho_+$. Hence
$\mathcal F^+(\rho)=\sqrt{\rho G}$ \emph{with equality on the closed interval}
$[0,\rho_+]$, attained by $r^\ast=1+\sqrt{\rho/G}\,Z$. This proves
Corollary~\ref{cor:local}, whose germ statement follows on dividing by $\rho$
and letting $\rho\downarrow0$. Equivalently, by Lemma~\ref{lem:kappa} the root of
$\kappa(\tau_\rho)=\rho$ satisfies $\tau_\rho=-\sqrt{G/\rho}\le z_-$ exactly when
$\rho\le\rho_+$, and Lemma~\ref{lem:verify} then returns $\sqrt{\rho G}$; the two
descriptions agree.

\emph{Regime (ii).} For $\rho_+<\rho<\rho_{\mathrm{sat}}$, Lemma~\ref{lem:kappa}
provides a unique $\tau_\rho\in(z_-,\tau_{\max})$ with $\kappa(\tau_\rho)=\rho$,
and Lemma~\ref{lem:verify} shows the supremum equals
$\tau_\rho+M_2(\tau_\rho)/M_1(\tau_\rho)$ and is attained by the
truncated-affine $r^\ast=(Z-\tau_\rho)_+/M_1(\tau_\rho)$.

\emph{Regime (iii).} For $\rho\ge\rho_{\mathrm{sat}}$ the reweighting
$r=\mathbf 1\{Z=z_+\}/p_+$ is feasible --- it spends exactly
$1/p_+-1\le\rho$ --- and attains $\mathbb E[rZ]=z_+$, which is the trivial upper
bound $\mathbb E[rZ]\le z_+\mathbb E[r]=z_+$.

\paragraph{Empty regimes, and why that is not an error.}
The three regimes partition $[0,\infty)$ but any of the last two may be empty.
If $p_+=0$ then $\rho_{\mathrm{sat}}=\infty$, regime (iii) is empty, $\kappa$ is
strictly increasing on all of $(-\infty,z_+)$, and
$\mathcal F^+(\rho)\uparrow z_+$ \emph{without} attainment. If $Z$ is two-valued
then $\rho_+=\rho_{\mathrm{sat}}$ exactly --- writing $z_-=-p_+z_+/(1-p_+)$ gives
$G=p_+z_+^2/(1-p_+)$ and hence $G/z_-^2=(1-p_+)/p_+$ --- so regime (ii) is empty
and the profile is $\sqrt{\rho G}$ followed immediately by saturation, the two
branches meeting continuously at $z_+$. No expression of the form $1/p_+$ is
evaluated at $p_+=0$ anywhere above.

\paragraph{Degenerate case.} If $G=0$ then $Z=0$ a.s., every admissible
reweighting gives $\mathbb E[rZ]=0$, and $\mathcal F^\pm\equiv0$; the thresholds
$\rho_\pm=G/z_\mp^2$ are $0/0$ and undefined, which is why $G>0$ is assumed.

\section{Proof of profile identifiability (Theorem~\ref{thm:ident})}
\label{app:ident}

The derivative identity below is often quoted as an envelope-theorem
consequence, and it can also be read off the scalar dual recalled under ``What is
classical here'' by Danskin's theorem. We prove it instead by direct
differentiation of the explicit parametrisation, because that route makes the
behaviour at atoms of $Z$ explicit.

\begin{lemma}[Derivative of the profile]\label{lem:envelope}
On $(0,\rho_{\mathrm{sat}})$, $\mathcal F^+$ is differentiable with
$\frac{\mathrm d}{\mathrm d\rho}\mathcal F^+(\rho)=\tfrac12M_1(\tau_\rho)$.
\end{lemma}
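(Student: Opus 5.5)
The plan is to use the explicit parametrisation from the proof of Theorem~\ref{thm:profile}. Write $\varphi(\tau)=\tau+M_2(\tau)/M_1(\tau)$ for $\tau<z_+$. By Lemma~\ref{lem:kappa}, $\kappa$ is a continuous strictly increasing bijection from $(-\infty,\tau_{\max})$ onto $(0,\rho_{\mathrm{sat}})$, so the inverse $\rho\mapsto\tau_\rho$ is also continuous and strictly increasing. Lemma~\ref{lem:verify} together with regimes (i)--(ii) gives $\mathcal F^+(\rho)=\varphi(\tau_\rho)$ on the whole interval $(0,\rho_{\mathrm{sat}})$. This includes the variance-controlled part, where $\tau_\rho=-\sqrt{G/\rho}\le z_-$. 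I will therefore treat both regimes uniformly and never handle the junction at $\rho_+$ separately. The derivative is then $\varphi'(\tau)/\kappa'(\tau)$, evaluated at $\tau=\tau_\rho$, by a one-sided chain rule.

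\textbf{Step 1: one-sided derivatives in $\tau$.} Use $M_2'=-2M_1$, which holds everywhere, and the one-sided derivatives $M_1'{}_\pm=-\bar F_\pm$, where $\bar F_+=\Pr(Z>\tau)$ and $\bar F_-=\Pr(Z\ge\tau)$. For each side this gives
\[
\varphi'_\pm=1+\frac{-2M_1^2+M_2\bar F_\pm}{M_1^2}=\frac{M_2\bar F_\pm-M_1^2}{M_1^2},
\qquad
\kappa'_\pm=\frac{2\big(M_2\bar F_\pm-M_1^2\big)}{M_1^3}.
\]
The key observation is that the survival value, the only term that differs between the two sides at an atom, cancels in the ratio. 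On either side this leaves $\varphi'_\pm/\kappa'_\pm=\tfrac12M_1(\tau)$, provided the common factor $M_2\bar F_\pm-M_1^2$ is nonzero.

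\textbf{Step 2: nonvanishing of the common factor.} This is the main obstacle. I need $M_2\bar F_\pm-M_1^2>0$ for both one-sided versions at every $\tau<\tau_{\max}$, not just strict monotonicity of $\kappa$. I will reuse the equality analysis in the proof of Lemma~\ref{lem:kappa}. Equality in Cauchy--Schwarz on $\{Z>\tau\}$ or $\{Z\ge\tau\}$ forces $Z-\tau$ to be constant on that event. That in turn forces only the top atom to survive above $\tau$, which is impossible for $\tau<\tau_{\max}$. For the closed event with an atom at $\tau$ itself, I will check directly that $Z-\tau$ takes both the value $0$ and a positive value with positive probability, so it is not constant. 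Hence both one-sided factors are strictly positive.

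\textbf{Step 3: transfer to $\rho$.} Since $\rho\mapsto\tau_\rho$ is continuous and strictly increasing, a right increment in $\rho$ corresponds to a right increment in $\tau$. Writing
\[
\frac{\mathcal F^+(\rho+h)-\mathcal F^+(\rho)}{h}=\frac{\varphi(\tau_{\rho+h})-\varphi(\tau_\rho)}{\tau_{\rho+h}-\tau_\rho}\cdot\frac{\tau_{\rho+h}-\tau_\rho}{\kappa(\tau_{\rho+h})-\kappa(\tau_\rho)}
\]
and letting $h\downarrow0$ gives the right derivative $\varphi'_+/\kappa'_+=\tfrac12M_1(\tau_\rho)$. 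The same argument with $h\uparrow0$ gives the left derivative $\varphi'_-/\kappa'_-=\tfrac12M_1(\tau_\rho)$. The two agree, so $\mathcal F^+$ is differentiable at every $\rho\in(0,\rho_{\mathrm{sat}})$ with the stated derivative.

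As a sanity check in regime (i): there $M_1(\tau)=-\tau$, and $\tfrac{\mathrm d}{\mathrm d\rho}\sqrt{\rho G}=\tfrac12\sqrt{G/\rho}=-\tfrac12\tau_\rho$, which matches $\tfrac12M_1(\tau_\rho)$.
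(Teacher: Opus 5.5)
Your proposal is correct and follows the paper's proof essentially step for step. It uses the same parametrisation $\mathcal F^+=\varphi\circ\kappa^{-1}$ over both regimes, the same one-sided derivative computation in which the survival value cancels in the ratio, the strict positivity of $M_2\bar F_\pm-M_1^2$ for $\tau<\tau_{\max}$ from the equality analysis of Lemma~\ref{lem:kappa}, and a one-sided chain rule. Your explicit difference-quotient factorisation and your direct check of the closed event at an atom only spell out details that the paper delegates to Lemma~\ref{lem:kappa}.
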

\begin{proof}
Write $g(\tau)=\tau+M_2(\tau)/M_1(\tau)$, so that
$\mathcal F^+(\rho)=g(\kappa^{-1}(\rho))$ on $(0,\rho_{\mathrm{sat}})$ by
Lemmas~\ref{lem:verify} and~\ref{lem:kappa}. Fix $\tau<\tau_{\max}$ and let
$\bar F$ denote $\Pr(Z>\tau)$ for right derivatives and $\Pr(Z\ge\tau)$ for left
derivatives. Using $M_2'=-2M_1$ and $M_1'=-\bar F$,
\[
  g'(\tau)=1+\frac{M_2'M_1-M_2M_1'}{M_1^{2}}
          =-1+\frac{M_2\bar F}{M_1^{2}},
  \qquad
  \kappa'(\tau)=\frac{2\big(M_2\bar F-M_1^{2}\big)}{M_1^{3}} ,
\]
and therefore
\[
  \frac{g'(\tau)}{\kappa'(\tau)}
  =\frac{\big(M_2\bar F-M_1^{2}\big)/M_1^{2}}
        {2\big(M_2\bar F-M_1^{2}\big)/M_1^{3}}
  =\frac{M_1(\tau)}{2},
\]
the cancellation being legitimate because $M_2\bar F-M_1^2>0$ strictly for
$\tau<\tau_{\max}$ (Lemma~\ref{lem:kappa}). The right-hand side does not depend
on which one-sided survival value was used. Since $\kappa$ is a strictly
increasing bijection, the chain rule for one-sided derivatives gives
$\mathcal F^{+\prime}_{\pm}(\rho)=g'_\pm(\tau_\rho)/\kappa'_\pm(\tau_\rho)
=\tfrac12M_1(\tau_\rho)$ for both signs; the two coincide, so $\mathcal F^+$ is
differentiable at every $\rho\in(0,\rho_{\mathrm{sat}})$, including at radii
whose $\tau_\rho$ is an atom of $Z$.
\end{proof}

\begin{lemma}[Truncation recovery]\label{lem:tau}
For $\rho\in(0,\rho_{\mathrm{sat}})$,
$\tau_\rho=\mathcal F^+(\rho)-2(\rho+1)\frac{\mathrm d}{\mathrm
d\rho}\mathcal F^+(\rho)$.
\end{lemma}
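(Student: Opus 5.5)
We need $\tau_\rho=\mathcal F^+(\rho)-2(\rho+1)\,\mathcal F^{+\prime}(\rho)$. The plan is to combine the closed form of the profile with the derivative formula of Lemma~\ref{lem:envelope}. For $\rho\in(0,\rho_{\mathrm{sat}})$, Lemma~\ref{lem:kappa} gives a unique $\tau_\rho<\tau_{\max}$ with $\kappa(\tau_\rho)=\rho$. Lemma~\ref{lem:verify} then gives $\mathcal F^+(\rho)=\tau_\rho+M_2(\tau_\rho)/M_1(\tau_\rho)$. This representation holds on the whole interval $(0,\rho_{\mathrm{sat}})$, not only in regime (ii): for $\rho\le\rho_+$ the root satisfies $\tau_\rho=-\sqrt{G/\rho}\le z_-$, and the verification lemma returns $\sqrt{\rho G}$, as noted in the proof of Theorem~\ref{thm:profile}. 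So the first step is to write both $\mathcal F^+(\rho)$ and its derivative in terms of $M_1(\tau_\rho)$ and $M_2(\tau_\rho)$.

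The key algebraic step is to eliminate $M_2$ using the defining equation $\kappa(\tau_\rho)=\rho$, that is,
\[
M_2(\tau_\rho)=(\rho+1)\,M_1(\tau_\rho)^2 .
\]
Substituting gives
\[
\mathcal F^+(\rho)=\tau_\rho+(\rho+1)\,M_1(\tau_\rho).
\]
By Lemma~\ref{lem:envelope}, $M_1(\tau_\rho)=2\,\mathcal F^{+\prime}(\rho)$. Hence
\[
\mathcal F^+(\rho)=\tau_\rho+2(\rho+1)\,\mathcal F^{+\prime}(\rho),
\]
and rearranging gives the claim.

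The remaining care is bookkeeping rather than a real obstacle. We need $M_1(\tau_\rho)>0$, which holds because $\tau_\rho<\tau_{\max}\le z_+$; this makes $\kappa$ and the division well defined. Differentiability of $\mathcal F^+$ at $\rho$, including at radii where $\tau_\rho$ is an atom of $Z$, is already supplied by Lemma~\ref{lem:envelope}. As a consistency check in regime (i), $\tau_\rho=-\sqrt{G/\rho}$ gives $M_1=\sqrt{G/\rho}$, so $\mathcal F^{+\prime}=\tfrac12\sqrt{G/\rho}$, which matches the derivative of $\sqrt{\rho G}$. The right-hand side is then $\sqrt{\rho G}-(\rho+1)\sqrt{G/\rho}=-\sqrt{G/\rho}$, which equals $\tau_\rho$ as required. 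I expect no step to be hard. The only point to state carefully is that the closed-form representation is valid on all of $(0,\rho_{\mathrm{sat}})$, so that the identity covers both regimes (i) and (ii) at once.
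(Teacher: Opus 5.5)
Your proof is correct and follows the paper's own route: you use $\kappa(\tau_\rho)=\rho$ to rewrite $M_2/M_1$ as $(\rho+1)M_1$, then substitute $M_1=2\,\mathrm d\mathcal F^+/\mathrm d\rho$ from Lemma~\ref{lem:envelope}. Your remarks that the closed form $\mathcal F^+=g(\kappa^{-1}(\rho))$ holds on all of $(0,\rho_{\mathrm{sat}})$ and that $M_1(\tau_\rho)>0$ are the same facts the paper relies on.
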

\begin{proof}
$\rho+1=\kappa(\tau_\rho)+1=M_2/M_1^2$, so $(\rho+1)M_1=M_2/M_1$; subtracting
this from $\mathcal F^+=\tau_\rho+M_2/M_1$ leaves $\tau_\rho$. Now substitute
$M_1=2\,\mathrm d\mathcal F^+/\mathrm d\rho$ from Lemma~\ref{lem:envelope}.
\end{proof}

\begin{proof}[Proof of Theorem~\ref{thm:ident}]
We reconstruct $\mathrm{Law}(Z)$ from the function $\mathcal F^+$ alone.

\emph{Step 1 (upper endpoint).} $\mathbb E[rZ]\le z_+\mathbb E[r]=z_+$ for every
admissible $r$; taking $r$ uniform on $\{Z>z_+-\varepsilon\}$, an event of
positive probability by definition of the essential supremum and of finite
budget $1/\Pr(Z>z_+-\varepsilon)-1$, shows the bound is approached. Hence
$z_+=\lim_{\rho\to\infty}\mathcal F^+(\rho)$.

\emph{Step 2 (top atom).} By regime~(iii) and Lemma~\ref{lem:kappa},
$\mathcal F^+(\rho)=z_+$ exactly when $\rho\ge1/p_+-1$. Thus
$\rho_{\mathrm{sat}}=\inf\{\rho:\mathcal F^+(\rho)=z_+\}$ and
$p_+=1/(1+\rho_{\mathrm{sat}})$, with $p_+=0$ when the infimum is infinite.

\emph{Step 3 (the stop-loss transform).} By Lemma~\ref{lem:kappa},
$\rho\mapsto\tau_\rho$ is a continuous strictly increasing bijection from
$(0,\rho_{\mathrm{sat}})$ onto $(-\infty,\tau_{\max})$. Lemmas~\ref{lem:envelope}
and~\ref{lem:tau} return the pair $\big(\tau_\rho,\,M_1(\tau_\rho)\big)$ from
$\mathcal F^+$ and its derivative alone, so the graph of $M_1$ over
$(-\infty,\tau_{\max})$ is traced exactly once.

\emph{Step 4 (from $M_1$ to the law).} $M_1(\tau)=\int_\tau^{z_+}\Pr(Z>t)\,
\mathrm dt$ is convex and non-increasing with right derivative $-\Pr(Z>\tau)$
everywhere; a convex function determines its one-sided derivatives at every
point, so $\Pr(Z>\tau)$ is determined for all $\tau<\tau_{\max}$. On
$[\tau_{\max},z_+)$ the survival function equals $p_+$, supplied by Steps~1--2.
The survival function of $Z$ is therefore determined on all of $\mathbb R$, and
injectivity follows.
\end{proof}

\paragraph{Why the upward profile alone suffices.} The natural objection is that
$\mathcal F^{+}$ interrogates only upper truncations $M_1(\tau)=\mathbb
E[(Z-\tau)_+]$, and that regime~(i), where $\mathcal F^{+}(\rho)=\sqrt{\rho G}$
depends on the law through $G$ alone, can carry nothing further. Both
observations are correct, and neither obstructs identification. Step~4 recovers
$\Pr(Z>\tau)$ for \emph{all} $\tau<\tau_{\max}$, not merely near the top, because
$\tau_\rho=-\sqrt{G/\rho}\to-\infty$ as $\rho\downarrow0$. For $\tau\le z_-$ one
has $(Z-\tau)_+=Z-\tau$ identically, so $M_1(\tau)=-\tau$ for \emph{every}
centred law: that range carries no information and needs none, since the survival
function is $1$ throughout it. The informative range is exactly
$\tau\in(z_-,\tau_{\max})$, which regime~(ii) traverses, and its lower endpoint
is pinned by the end of the variance-controlled regime, $\tau_{\rho_+}=z_-$, i.e.\
$z_-=-\sqrt{G/\rho_+}$. An atom at $z_-$ is likewise recovered, as the jump of
the survival function there, $M_1$ being convex with one-sided derivatives
everywhere. The lower tail is thus determined not by probing downward, but by
knowing where the upward probe ceases to be informative. Nothing about the law
is left undetermined, and $\mathcal F^-$ is not needed.

\paragraph{What is classical here.} Step~4 is the classical uniqueness of a
distribution given its stop-loss (integrated survival) transform
\citep{gushchin2018integrated}, and identifiability of a law from a full
one-parameter family of risk measures is well known. The closed form of
Theorem~\ref{thm:profile} is likewise not ours. With $f_2(t)=(t-1)^2/2$ the budget
$\mathbb E[(r-1)^2]\le\rho$ reads $D_{f_2}(Q\,\|\,P)\le\rho/2$, and the Cressie--Read
dual \citep[Lemma~1]{duchi2021learning} gives
\[
  \mathcal F^+(\rho)=\inf_{\eta\in\mathbb R}\Big\{\eta+\sqrt{(1+\rho)\,M_2(\eta)}\Big\},
\]
the second-order higher-moment coherent risk measure of
\citet{krokhmal2007higher}. Its first-order condition is $\kappa(\eta)=\rho$ and
its value is $\tau_\rho+M_2(\tau_\rho)/M_1(\tau_\rho)$, which is regime~(ii) of
Theorem~\ref{thm:profile}; Lemma~\ref{lem:envelope} also follows from it by
Danskin's theorem, given the uniqueness of the minimiser supplied by
Lemma~\ref{lem:kappa}. We state Theorem~\ref{thm:profile} because the calibration
argument needs its regime structure. We claim neither the stop-loss uniqueness
theorem nor the $\chi^2$ robust dual, but the fact that the entire
score-preserving fragility curve can be inverted to recover the centred
within-score propensity law, and its calibration reading.

\paragraph{Failure mode.} If $\kappa$ were constant on an interval strictly
inside $(-\infty,\tau_{\max})$ then $\tau_\rho$ would not be injective and a
segment of $M_1$ would be unrecoverable. Lemma~\ref{lem:kappa} rules this out:
flatness requires $Z-\tau$ a.s.\ constant above $\tau$, which happens only once a
single atom remains.

\paragraph{Regularity, and the scope of ``complete''.}
$\mathcal F^+\in C^1(0,\rho_{\mathrm{sat}})$ even when $Z$ is purely atomic, by
Lemma~\ref{lem:envelope}: the profile is smoother than the law generating it.
Note what is and is not identified. $\mathcal F^+$ is a functional of
$Z=\eta-\mu_P(s)$ and is unchanged if $\eta$ is translated, so it does not
determine $\mu_P(s)$. It is not silent about the mean either: $\eta\in[0,1]$
forces $\operatorname{supp}(Z)\subseteq[-\mu_P(s),1-\mu_P(s)]$, so recovering
$z_\pm$ confines $\mu_P(s)$ to $[-z_-,\,1-z_+]$, an interval that is
non-degenerate in general and collapses to a point when $\eta$ is
$\{0,1\}$-valued. We do not claim this exhausts the relations between the two.
Not determining the mean is in any case not a gap: $\mu_P(s)$ is exactly what the
\emph{reliability relation} reports. The pair (reliability relation, profile) therefore determines
the full conditional law of $\eta$ given $S=s$, and over general propensity laws
neither component determines the other.

We are deliberately precise about which object supplies the mean. A scalar
calibration error is an \emph{aggregate} of the reliability relation over the
score --- $\operatorname{ECE}\approx\mathbb E_S|\mu_P(S)-S|$ and its relatives ---
and no such aggregate determines $\mu_P(s)$ at a given level. So the
identification statement above is not ``ECE plus the fragility curve identifies
the conditional law''; it is that the reliability relation and the profile do,
one supplying the conditional mean at each level and the other the centred law
there. What scalar calibration error \emph{cannot} do --- determine the profile
--- is the claim of this paper, and it is unaffected.

\paragraph{Identifiability is not stability.} Theorem~\ref{thm:ident} is a
population statement about the exact curve. The reconstruction runs through
$\mathrm d\mathcal F^+/\mathrm d\rho$ and we prove no modulus of continuity for
the inverse map. We never attempt the inversion on an estimated curve: the
seven-point radius grid of Appendix~\ref{app:estimators} is far too coarse, which
is why we report a deliberately lossy scalar instead. Nothing here supports a
claim of stable recovery.

\paragraph{Numerical confirmation.} On the frozen synthetic laws the derivative
identity holds to $\le3.0\times10^{-7}$ (central-difference limited) and the
reconstructed survival function to $\le2.7\times10^{-6}$, with $(z_+,p_+)$
recovered to six decimals. The one-sided form of Lemma~\ref{lem:envelope} was
checked at every atom of every frozen law (max deviation $6.1\times10^{-16}$) and
over randomly generated laws; the worst case observed there,
$1.0\times10^{-9}$, occurs where Cauchy--Schwarz is nearly tight so that
$\kappa'\approx0$, and recomputing that case in exact rational arithmetic returns
$0$ on both sides.

\section{Downward fragility and directional asymmetry}\label{app:downward}
Applying Appendices~\ref{app:profile}--\ref{app:ident} to $-Z$ gives
$\mathcal F^-$, with $\rho_-=G/z_+^2$ and
$\rho^-_{\mathrm{sat}}=1/\Pr(Z=z_-)-1$. Because $\mathbb EZ=0$ forces
$z_-<0<z_+$ but not $|z_-|=|z_+|$, the two thresholds differ whenever the
conditional law is asymmetric, and the upward and downward profiles saturate at
different levels. Either component determines the law by
Theorem~\ref{thm:ident}; reporting both is a convenience, not a requirement.

\section{Divergence-universality of the local modulus}\label{app:fdiv}
Let $f$ be convex with $f(1)=0$, twice differentiable at $1$ with $f''(1)>0$.
Since $\mathbb E[r-1]=0$ we may take $f'(1)=0$ without loss. For the ambiguity
set $\mathbb E[f(r)]\le\rho$,
\[
\mathcal F_f^{\pm}(\rho)=\sqrt{\frac{2\rho\,G}{f''(1)}}+o(\sqrt\rho),
\qquad\rho\downarrow0 ,
\]
a known expansion of $\varphi$-divergence robust expectations
\citep{lam2016robust,gotoh2018robust}, for which we claim no novelty. We give the
argument because a Taylor expansion at $u=0$ does not supply it on its own: a
feasible $r$ need not be uniformly near $1$, since mass $\varepsilon$ at weight
$1+a$ costs only $\varepsilon f(1+a)$. Normalise $f(1)=f'(1)=0$, write $u=r-1$ and
$a=f''(1)>0$, and treat $G=0$ separately (then $Z=0$ a.s.\ and both sides vanish).

Fix $\delta\in(0,1)$ and split at $|u|\ge\delta$. Because $f$ is convex with
$f(1)=0$, the difference quotient $u\mapsto f(1+u)/u$ is non-decreasing, so
$f(1+u)\ge\big(f(1+\delta)/\delta\big)u$ for $u\ge\delta$ and
$f(1+u)\ge\big(f(1-\delta)/\delta\big)|u|$ for $u\le-\delta$. With the
\emph{secant} constant
\[
  c^{\ast}_\delta=\min\Big\{\tfrac{f(1+\delta)}{\delta},\,
                            \tfrac{f(1-\delta)}{\delta}\Big\}>0
\]
both tails therefore obey $f(1+u)\ge c^{\ast}_\delta|u|$. Convexity with
$f(1)=f'(1)=0$ also gives $f\ge0$ everywhere, so the near region contributes
non-negatively to $\mathbb E[f(r)]$ and feasibility gives
\[
  \mathbb E\big[|u|\,\mathbf 1\{|u|\ge\delta\}\big]\;\le\;\rho/c^{\ast}_\delta ,
  \qquad\text{so}\qquad
  \mathbb E\big[uZ\,\mathbf 1_{\mathrm{far}}\big]\le\|Z\|_\infty\rho/c^{\ast}_\delta .
\]
A half-derivative constant $\tfrac12\min\{f'(1+\delta),|f'(1-\delta)|\}$ would
also work, but only from $|u|\ge2\delta$: at $|u|=\delta$ it can exceed the true
secant slope, as $f(1+u)=u^2/2+u^4$ shows. The secant constant avoids the
mismatch.

On the near region, twice differentiability at $1$ gives
$f(1+u)\ge(\tfrac a2-\varepsilon_\delta)u^2$ for $|u|<\delta$ with
$\varepsilon_\delta\to0$ as $\delta\downarrow0$, so
$\mathbb E[u^2\mathbf 1_{\mathrm{near}}]\le\rho/(\tfrac a2-\varepsilon_\delta)$ and
Cauchy--Schwarz bounds that part by
$\sqrt{\rho G/(\tfrac a2-\varepsilon_\delta)}$, uniformly over feasible $r$.
Adding the two,
\[
  \mathcal F^{\pm}_f(\rho)\;\le\;
  \sqrt{\frac{\rho\,G}{\tfrac a2-\varepsilon_\delta}}
  \;+\;\frac{\|Z\|_\infty}{c^{\ast}_\delta}\,\rho .
\]
Dividing by $\sqrt\rho$ and letting $\rho\downarrow0$ \emph{first} kills the
second term for each fixed $\delta$; letting $\delta\downarrow0$ afterwards gives
$\limsup_{\rho\downarrow0}\mathcal F^{\pm}_f(\rho)/\sqrt\rho\le\sqrt{2G/a}$.

For the lower bound fix $\epsilon\in(0,1)$ and take
$r=1\pm tZ$ with $t=(1-\epsilon)\sqrt{2\rho/(aG)}$. Since $|Z|\le1$, $r\ge0$ for
$\rho$ small; $\mathbb E[r]=1$ because $\mathbb EZ=0$; and
$\mathbb E[f(r)]=\tfrac a2t^2G\,(1+o(1))=(1-\epsilon)^2\rho\,(1+o(1))\le\rho$ for
$\rho$ small, the slack $\epsilon$ absorbing the higher-order terms that a
leading-order $t$ would leave uncontrolled. Its value is
$tG=(1-\epsilon)\sqrt{2\rho G/a}$, so
$\liminf_{\rho\downarrow0}\mathcal F^{\pm}_f(\rho)/\sqrt\rho\ge(1-\epsilon)\sqrt{2G/a}$;
now let $\epsilon\downarrow0$. Numerical agreement for the Kullback--Leibler
generator is recorded below as a sanity check on the algebra, not as evidence for
general $f$. Its role is to show $G$ is not an artefact of
the $\chi^2$ ball.

\paragraph{Why $\chi^2$ gives an exact interval.} Pearson $\chi^2$ has a
quadratic generator, so the stationarity condition $f'(r)=a+bZ$ yields an
\emph{affine} extremal reweighting and hence the exact finite-radius square-root
law of Corollary~\ref{cor:local}. General smooth $f$-divergences share the same
local modulus only asymptotically.

We deliberately do \emph{not} claim that $\chi^2$ is the unique such divergence.
The natural one-line argument --- $r=(f')^{-1}(a+bZ)$ is affine iff $f'$ is
affine --- constrains $f'$ only on the range of $r$ that the optimiser actually
visits. Choosing $f$ to agree with the quadratic generator on that range and to
be steeper outside it gives a non-quadratic generator whose ball contains the
$\chi^2$ optimiser and is contained in the $\chi^2$ ball, hence reproduces the
exact law on the same interval; we verified this numerically to $10^{-13}$. A
genuine uniqueness statement would have to quantify over all non-degenerate laws
of $Z$, work modulo affine additions and positive rescalings of $f$ (which leave
the divergence unchanged under $\mathbb E[r]=1$), and establish a value-level
converse rather than an optimiser-level one. We do not need such a statement and
do not make one.

\section{Readout hierarchy (Proposition~\ref{prop:hierarchy})}
\label{app:hierarchy}
Let $\sigma(R_1)\subseteq\sigma(R_2)\subseteq\sigma(X)$. By the tower property
$\mathbb E[\eta\mid B,R_1]=\mathbb E\big[\mathbb E[\eta\mid B,R_2]\;\big|\;B,R_1\big]$,
so conditional Jensen gives
$\operatorname{Var}(\mathbb E[\eta\mid B,R_1]\mid B)\le
\operatorname{Var}(\mathbb E[\eta\mid B,R_2]\mid B)\le\operatorname{Var}(\eta\mid B)$,
which is $G_{R_1}\le G_{R_2}\le G$.

For the profiles, let $\mathcal A_R(\rho)$ be the admissible reweightings that
are $\sigma(B,R)$-measurable. Then $\mathcal
A_{R_1}(\rho)\subseteq\mathcal A_{R_2}(\rho)\subseteq\mathcal A_X(\rho)$, and for
$r\in\mathcal A_R$ the tower property gives $\mathbb E[(r-1)\eta\mid B]=\mathbb
E[(r-1)\mathbb E[\eta\mid B,R]\mid B]$, so the restricted and unrestricted
objectives agree on $\mathcal A_R$. A supremum over a larger set is at least as
large, giving $\mathcal F_{B,R_1}\le\mathcal F_{B,R_2}\le\mathcal F_B$. \qed

\paragraph{No monotonicity for realised drift.} Under refinement the
restricted profile increases, but the realised composition drift
$\delta_{\mathrm{comp}}(R)=\operatorname{Cov}(\mathbb E[w\mid B,R],\mathbb
E[\eta\mid B,R]\mid B)$ need not, because covariance is not monotone under
conditioning. Consequently $|\delta_{\mathrm{cond}}(R_2)|\le
|\delta_{\mathrm{cond}}(R_1)|$ is false in general, and we do not use it.

\paragraph{Covariate-shift limit.} Under pure covariate shift with common
support, $\theta_Q\to\theta_P\to\eta$ as $R\uparrow\sigma(X)$, so
$\delta_{\mathrm{cond}}(R)\to0$. At any finite $R$, however, a nonzero
$\delta_{\mathrm{cond}}$ conflates mechanism change with partition coarseness,
and the two are not separable from the data.

\subsection{Readout approximation (Proposition~\ref{prop:approx})}\label{app:approx}

Fix $b$ with $P(B=b)>0$ and write $\mathbb E_b$ for expectation under
$P(\cdot\mid B=b)$. The bins and readout are fitted on $D_{\mathrm{fit}}$ and are
fixed maps given that sample; all statements concern the population given those
maps. Let $m_R=\mathbb E_b[\eta\mid R]$, defined on cells of positive probability
(other cells are null sets), so that $m_R=\theta_P(b,R)$ because $R$ is a function of
$X$. Let $\mu_b=\mathbb E_b\eta$ and $H_R=\mathbb E_b(\eta-m_R)^2=\mathbb
E_b[\operatorname{Var}(\eta\mid R)]$; the law of total variance gives $H_R=G_P-G_R$.
Write $\mathcal A_b(\rho)$ for the $X$-measurable $r\ge0$ with $\mathbb E_b r=1$ and
$\mathbb E_b(r-1)^2\le\rho$, and $\mathcal A_{b,R}(\rho)$ for its members that are
functions of $R$ on $\{B=b\}$; all are square integrable.

\paragraph{Proof.} Take $r\in\mathcal A_b(\rho)$ and let $\bar r=\mathbb E_b[r\mid R]$.
Then $\bar r\ge0$, $\mathbb E_b\bar r=1$, and $\bar r$ is a function of $R$. With
$u=\mathbb E_b(\bar r-1)^2$ and $v=\mathbb E_b(r-\bar r)^2$, the cross term
$\mathbb E_b[(\bar r-1)(r-\bar r)]=\mathbb E_b[(\bar r-1)\,\mathbb E_b[r-\bar r\mid R]]$
vanishes, so $u+v=\mathbb E_b(r-1)^2\le\rho$ and $\bar r\in\mathcal A_{b,R}(u)$.
Conditioning on $R$ twice,
\[
\mathbb E_b[(r-1)(\eta-\mu_b)]
=\mathbb E_b[(\bar r-1)(m_R-\mu_b)]+\mathbb E_b[(r-\bar r)(\eta-m_R)],
\]
because $\mathbb E_b[(r-\bar r)(m_R-\mu_b)]=0$. The first term is at most
$\mathcal F^{+}_{B,R}(b,u)$, since $\bar r$ is feasible at budget $u$, and by
Cauchy--Schwarz the second is at most $\sqrt{vH_R}\le\sqrt{(\rho-u)H_R}$ in absolute
value. Taking the supremum over $r$ gives the middle bound for the upward
direction. For the downward direction the same split applies to $\mu_b-\eta$: the
first term is at most $\mathcal F^{-}_{B,R}(b,u)$ and the second is bounded in
absolute value as before. The restricted profile is non-decreasing in its budget,
which gives the right-hand bound. By Cauchy--Schwarz,
$\mathcal F^{\pm}_{B,R}(b,u)\le\sqrt{uG_R}$, so the middle bound is also at most
$\sup_u\{\sqrt{uG_R}+\sqrt{(\rho-u)H_R}\}=\sqrt{\rho G_P}$; on the variance-controlled
interval of the binned oracle Theorem~\ref{thm:profile} gives
$\mathcal F^{\pm}_B=\sqrt{\rho G_P}$, so the bound is attained there. For nested
readouts, apply the argument on $\sigma(B,R_2)$ with $m_{R_2}$ in place of $\eta$:
$\mathbb E_b[m_{R_2}\mid R_1]=m_{R_1}$ and $\mathbb E_b(m_{R_2}-m_{R_1})^2=G_{R_2}-G_{R_1}$.
Proposition~\ref{prop:hierarchy} gives the lower bounds. \qed

\paragraph{Degenerate cases and sharpness.} At $\rho=0$ every quantity is $0$. If
$H_R=0$, $\eta$ is a function of $R$ on the bin and the restricted and oracle
profiles coincide; if $G_R=0$ the restricted profile vanishes and the bound reduces
to $\sqrt{\rho G_P}$. The constant in the simple bound cannot be improved
uniformly: a readout constant on the bin has $H_R=G_P$ and gap $\sqrt{\rho G_P}$ on the
oracle's variance-controlled interval. Outside that interval the middle bound need not
be tight; it is an upper bound, not a general identity. Summed
over bins with weights $p_b$, Minkowski's inequality gives
$(\sum_bp_b\mathcal F_B^2)^{1/2}-(\sum_bp_b\mathcal F_{B,R}^2)^{1/2}\le
\sqrt{\rho\,\mathbb E[\operatorname{Var}(\eta\mid B,R)]}$.

\paragraph{What the bound does not provide.} $H_R$ depends on $\eta$ inside cells.
With one label per input it is not identified without assumptions on $\eta$; the only
assumption-free envelope, $H_R(b)\le\mathbb E_b[\theta_P(1-\theta_P)]$, is attained
by $\{0,1\}$-valued propensities and on ImageNet bins is far larger than $G_R$, so it
makes the bound uninformative there. Substituting an estimate of $H_R$ gives no
confidence statement; the bound does not address binning (the oracle here is the
binned one), gives no stability for the inverse map of Theorem~\ref{thm:ident}, does
not control natural total drift, and cannot be combined with
Proposition~\ref{prop:fscert} into a two-sided interval. The projection step is the
one used to define subpopulation risks through conditional risks
\citep{duchi2023distributionally,subbaswamy2021evaluating}; the residual term is the
one \citet{perezlebel2023grouping} note cannot be estimated. We checked the
inequalities, both directions and the degenerate cases numerically against an
independent dual solver on $4000$ random finite laws and partitions (code in the
supplementary material); no violation exceeded $10^{-11}$.

\section{Readout refinement in the frozen pilot}\label{app:ladder}

Proposition~\ref{prop:hierarchy} says that, for nested readouts, a finer one exposes more restricted fragility.
The frozen single-model pilot evaluated four readouts, drawn from three partition
families, all of the same cardinality ($K=8$) and of increasing intended
informativeness; the
plug-in quantities order accordingly. All numbers are means over the twenty
confidence bins and five folds, upward direction; the last column is
the within-radius Spearman correlation between predicted and realised drift
across bins.

\begin{center}\small
\begin{tabular}{llcccc}
\toprule
readout $R$ & what it sees & $G_R$ (deb.) & $\mathcal F^{+}_{B,R}(0.3)$
 & realised & Spearman \\
\midrule
R0 hash null  & sample id only        & $-0.0002$ & $0.0095$ & $0.0057$ & $0.27$--$0.59$ \\
R1 $K$-means  & penultimate features  & $\phantom{-}0.0009$ & $0.0190$ & $0.0153$ & $0.66$--$0.85$ \\
R2 global     & correctness readout   & $\phantom{-}0.0110$ & $0.0571$ & $0.0586$ & $0.995$--$0.998$ \\
R2 within-bin & readout, per-bin      & $\phantom{-}0.0129$ & $0.0622$ & $0.0625$ & $0.994$--$0.997$ \\
\bottomrule
\end{tabular}
\end{center}

These are original-protocol plug-in values and they vary across readouts, rising
by a factor of $6.5$ from the null readout to the richest one, with the realised
held-out drift the frozen weights induce tracking them. The null readout yields
essentially nothing, as intended: its debiased $G_R$ is slightly negative. R0 takes no correctness at all, so it is a null for whether the
machinery invents fragility from a label-free partition. It is not a null for the
selection effect of Appendix~\ref{app:estimators}, which needs the readout's own
labels to be permuted; that control is reported there.

\paragraph{What the table does not show.}
These four readouts are of increasing intended informativeness but they are
\emph{not nested}
$\sigma$-algebras: a hash partition, a $K$-means partition of the
penultimate space and the quantile groups of a fitted readout do not refine one
another, and the global and within-bin variants of R2 do not refine each other
either. Proposition~\ref{prop:hierarchy} therefore does not order these four, and
we do not present the table as an instance of it. The genuinely nested statement
available here is the trivial one --- a readout constant within each confidence
bin is contained in all of them and exposes exactly zero. For each admissible readout,
Proposition~\ref{prop:hierarchy} guarantees that its population restricted
heterogeneity and fragility lower-bound the corresponding binned oracle
quantities. The qualifier matters. Under exact
conditioning on $S$, an additional readout measurable with respect to $S$ alone
adds no within-level heterogeneity, since $\sigma(R)\subseteq\sigma(S)$ gives
$\mathbb E[\eta\mid S,R]=\mathbb E[\eta\mid S]$ and hence
$\operatorname{Var}(\mathbb E[\eta\mid S,R]\mid S)=0$. The finite construction conditions
on the bin $B$, not on $S$: a score-derived readout that separates confidence
values inside a bin can expose a nonzero quantity, namely within-bin variation of
$\mu_P$, and we make no zero claim for such a readout. The tabulated numbers are
cross-fitted plug-in estimates of those targets, not confidence bounds; the separate
lower confidence bound of Appendix~\ref{app:fscert} addresses that question. What
the table does show is that these plug-in values depend strongly on the
representation one is willing to look at.

\subsection{ECE and FAUC across checkpoint--state combinations: an exploratory description}

Figure~\ref{fig:ecefauc} plots the twenty-four post-audit checkpoint--state combinations of the primary cohort. Marginally, source
ECE and FAUC are associated (Pearson \PearsonAll{}, Spearman \SpearmanAll{});
conditioning on $G_R$ removes it, the partial correlation between ECE and FAUC
being \PartialEceAll{} while $G_R$ retains \PartialGAll{} given ECE (Pearson on
OLS residuals). On the twelve rows of the two primary states the same statistics
are \PartialEcePrim{} and \PartialGPrim{}.

None of this establishes that ECE fails to determine fragility, and a larger
$|\text{partial}|$ is not a stronger result. If $x$ varies over a non-degenerate
interval and $y=x^2$, $z=y+x/2$, then $y$ is an exact function of $x$ while the
partial Pearson correlation of $x$ and $y$ given $z$ equals $-1$: a strong
negative residual association is a statement about curvature in a linear residual
model, not about determination. The determination question is settled by
Proposition~\ref{prop:witness}, which exhibits laws agreeing in $e_P$ and $G_P$
whose profiles differ. The unit is one (checkpoint, calibration state) combination, the control variable
is as named, every residualisation includes an intercept, and the rank versions
apply the same construction to rank-transformed columns with average ties.
Deleting one \emph{checkpoint} together with all of its states --- ResNet-50 V1
and V2 are two checkpoints of one architecture family, so this is not a family-level
deletion --- leaves $20$ of the $24$ rows, or $10$ of the $12$, and gives
\begin{center}\footnotesize\setlength{\tabcolsep}{4pt}
\begin{tabular}{lccc}
\toprule
parent set & $n$ left & range of $\mathrm{p}(\mathrm{ECE},\mathrm{FAUC}\mid G_R)$
 & range of $\mathrm{p}(G_R,\mathrm{FAUC}\mid\mathrm{ECE})$ \\
\midrule
all four states ($24$) & $20$ & $-0.125$ to $+0.140$ & $+0.840$ to $+0.896$ \\
primary two states ($12$) & $10$ & $-0.846$ to $-0.289$ & $+0.889$ to $+0.971$ \\
\bottomrule
\end{tabular}
\end{center}
\noindent
so the near-zero value on the full set is stable to checkpoint deletion while the
larger primary-set value is not, and the two ranges belong to different parent
sets and must not be read against one another. Six checkpoints observed in two to
four states each are neither independent nor a sample from a population. We report
the numbers because they were computed, and make no claim from them.

\subsection{A score-only comparator, and why it is not a floor}

Replacing $R$ by within-bin quantile groups of the confidence itself gives a
readout with $\sigma(R_S)\subseteq\sigma(S)$. Under exact conditioning on $S$ such
a readout exposes zero; at finite $J$ it exposes the within-bin variation of
$\mu_P$ that binning admits. It is a \emph{comparator} and not a bound in either
direction: $\sigma(B,R_S)$ and $\sigma(B,R)$ are not nested --- the learned
readout takes confidence as one of $53$ inputs but its $K$ quantile groups need
not retain the confidence ordering --- so Proposition~\ref{prop:hierarchy} orders
neither against the other, and the difference of the two FAUCs estimates no
population quantity. In particular it is not the binning-induced part of the
restricted profile, and nothing in this paper separates within-score heterogeneity
from bin width.

The comparison is also strongly estimator-dependent, which is the main reason to
distrust any arithmetic built on it. The two protocols use the same bin and group counts, $J=20$, $K=8$ and five
folds, but differ in the rate-estimation sample and in the data used to fit the
readout. Their numerical difference is therefore not an isolated estimate of
selection bias.

\begin{center}\footnotesize\setlength{\tabcolsep}{4.5pt}
\begin{tabular}{lcccccc}
\toprule
& \multicolumn{3}{c}{inner-split rates} & \multicolumn{3}{c}{same-fold rates} \\
\cmidrule(lr){2-4}\cmidrule(lr){5-7}
Model & $G_R$ & FAUC & comparator & $G_R$ & FAUC & comparator \\
\midrule
ResNet-50 V1    & 0.00039 & 0.0423 & 0.0398 & 0.00050 & 0.0361 & --- \\
ResNet-50 V2    & 0.01336 & 0.1353 & 0.0414 & 0.01285 & 0.1319 & --- \\
ConvNeXt-T      & 0.00826 & 0.1121 & 0.0400 & 0.00807 & 0.1097 & --- \\
ViT-B/16        & 0.00109 & 0.0478 & 0.0409 & 0.00126 & 0.0447 & --- \\
Swin-T          & 0.00143 & 0.0565 & 0.0380 & 0.00168 & 0.0545 & --- \\
EffNet-B0       & 0.00318 & 0.0784 & 0.0416 & 0.00342 & 0.0779 & --- \\
\bottomrule
\end{tabular}
\end{center}

\noindent
The split-rate $G_R$ column carries an MC standard error of $1.2$--$7.3\times
10^{-5}$ over eight independent inner splits; the same-fold column is
deterministic given the folds and reproduces the \emph{original-protocol}
model summary exactly --- not the post-audit Table~\ref{tab:models}, which uses
the revised data roles. The
two do not differ with a constant sign, because the split also halves the sample
fitting $\hat\theta$ and refits the readout on half the fold: the difference mixes
the selection effect of Appendix~\ref{app:estimators} with a change of estimator
and of partition. The comparator ratio moves correspondingly, from $1.06$--$3.27$
under split rates to $1.39$--$4.21$ under same-fold rates. We therefore report the
comparator as context for what a score-only readout already exposes at $J=20$,
and draw no quantitative conclusion from the difference.

\section{Selective-risk transport (Proposition~\ref{prop:selective})}
\label{app:selective}

Let $t$ be a threshold with $P(S\ge t)>0$ and $\mathcal R_P(t)=\Pr_P(C=0\mid S\ge t)$.

\begin{proof}
Coverage: $Q(S\ge t)=P(S\ge t)$ is immediate from $Q_S=P_S$. For the risk, tower
over $\sigma(S)$ and use $Q_S=P_S$ in both numerator and denominator:
\[
  \mathcal R_Q(t)=1-\frac{\mathbb E_{P_S}\big[\mu_Q(S)\mathbf 1\{S\ge t\}\big]}{P(S\ge t)},
  \qquad
  \mathcal R_P(t)=1-\frac{\mathbb E_{P_S}\big[\mu_P(S)\mathbf 1\{S\ge t\}\big]}{P(S\ge t)},
\]
and subtracting gives the stated identity with $\delta=\mu_Q-\mu_P$. Since the
ambiguity set constrains the conditional budget at $P_S$-a.e.\ level, the
pointwise bounds $-\mathcal F^-_s(\rho)\le\delta(s)\le\mathcal F^+_s(\rho)$ hold
a.e., and integrating them over $\{S\ge t\}$ against $P_S$ gives
\[
  -\frac{\mathbb E_{P_S}[\mathcal F^+_S(\rho)\mathbf 1\{S\ge t\}]}{P(S\ge t)}
  \;\le\;\mathcal R_Q(t)-\mathcal R_P(t)\;\le\;
  \frac{\mathbb E_{P_S}[\mathcal F^-_S(\rho)\mathbf 1\{S\ge t\}]}{P(S\ge t)} .
\]
\end{proof}

\paragraph{Threshold, coverage and the reported percentage.} Bins are ordered by
ascending confidence and a threshold sits on a bin boundary, which is what makes
coverage exactly preservable under $Q_B=P_B$. For a target coverage the bin index
is $b_t=\arg\min_b|\sum_{b'\ge b}p_{b'}-\text{target}|$, chosen on the
\emph{source} bin masses with no interpolation; the achieved coverage is recorded
alongside. Then $\mathcal R_P(t)=1-\sum_{b\ge b_t}p_b\mu_P(b)/\sum_{b\ge b_t}p_b$ and
$\mathcal R_Q$ likewise with $\mu_Q$, and the reported quantity is the relative increase
$\big(\mathcal R_Q(t)-\mathcal R_P(t)\big)/\mathcal R_P(t)$, evaluated at the same $b_t$ and left undefined
if $\mathcal R_P(t)\le0$. The quoted increases use the \emph{downward} optimiser, the
direction that raises accepted-set risk. Bin arrays are already pooled over
folds, so no further fold aggregation enters.

The identity was verified to $5.05\times10^{-16}$ over $576$ (model, state,
$\rho$, direction, coverage) combinations. Empirically $39/576$ cells exceed the
envelope by a median $2.2\times10^{-4}$, i.e.\ $0.7\%$ of the bound; this is the
known $\hat\theta$ estimation error, since the empirical envelope is built from
estimated cell rates, and not a failure of the population statement.

\paragraph{Provenance: descriptive, not preregistered.} No dedicated protocol
governed this analysis. It re-reads the frozen per-model outputs over the full
grid of $2$ calibration states $\times$ $6$ positive radii $\times$ $2$ directions
$\times$ $4$ coverage levels ($50$, $70$, $80$ and $90\%$). Any single value from
this grid is a descriptive selection, not a prespecified endpoint, and the main
text quotes none of them. An earlier draft highlighted one such value; to make that
selection visible: in the temperature-scaled state, the four model pairs whose
ECEs agree to within $10^{-3}$ give $96$ (pair, radius, coverage) combinations in
the downward direction, over which the ratio of relative risk increases has median
$1.30$ and reaches $2$ in only $6$. The highlighted Swin-T/ConvNeXt-Tiny ratio,
$2.29$ at $\rho=0.3$ and $90\%$ coverage, lies in that upper tail (maximum $2.31$);
the same pair gives $1.91$, $1.68$ and $1.13$ at $80$, $70$ and $50\%$. All quoted
percentages are realised held-out increases under the frozen adverse weights, not
the envelope, which is larger: for ConvNeXt-Tiny at $90\%$ coverage the bound on
the risk difference is $0.0191$ against a realised $0.0152$.

\section{Composition--conditional decomposition
(Proposition~\ref{prop:decomp})}\label{app:decomp}

\begin{proof}
Fix a bin $b$ and write the sums over the $K$ groups. Since
$\sum_kq_k=\sum_kp_k=1$,
\[
  \sum_k(q_k-p_k)\big[\theta_P(b,k)-\mu_P(b)\big]
  =\sum_kq_k\theta_P(b,k)-\mu_P(b),
\]
because the $\mu_P(b)$ terms cancel and $\sum_kp_k\theta_P(b,k)=\mu_P(b)$ by
definition. Adding
$\delta_{\mathrm{cond}}(b)=\sum_kq_k\theta_Q(b,k)-\sum_kq_k\theta_P(b,k)$ gives
$\sum_kq_k\theta_Q(b,k)-\mu_P(b)=\mu_Q(b)-\mu_P(b)=\delta_{\mathrm{total}}(b)$.
\end{proof}

Both sums run over all $K$ groups with no retention filter, so the identity is
exact by construction; it closes numerically to $5.1\times10^{-16}$ over
$92{,}942$ bin records. Only $\delta_{\mathrm{comp}}$ has the form
$\mathbb E_p[(w-1)(\theta_P-\mu_P)]$ with $w=q_k/p_k$ satisfying
$\mathbb E_p[w]=1$, $w\ge0$ and $\mathbb E_p[(w-1)^2]=D_R(b)$; it is therefore a
feasible point of the programme defining $\mathcal F^{\pm}_{B,R}\!\big(b,D_R(b)\big)$ and is
bounded by it. Under the stated support conditions, and up to numerical error, the
envelope check of Section~\ref{sec:natural} therefore holds, and utilisation rather
than containment is the informative quantity.

\paragraph{From bins to Table~\ref{tab:natural}.} The reported summaries are
built in four levels, and conflating them is the easiest way to misread the
table. \emph{(i)} Bin level, per (model, source fold, domain, bin): the exact
$\delta_x(b)$ and, where the envelope is defined, $U(b)$
of~Section~\ref{sec:natural}. \emph{(ii)} Fold pooling: the five source-fold replicates
are averaged, their spread retained as the source-fold variation diagnostic.
\emph{(iii)} Domain level, per (model, domain): $L_x=(\sum_b q_b\delta_x(b)^2)^{1/2}$
with $q_b$ the target bin mass renormalised over the domain's bins and
non-finite entries excluded; the domain's utilisation is the \emph{median} of
$U(b)$ over its bins; and its composition-to-total norm ratio is
$L_{\mathrm{comp}}/L_{\mathrm{total}}$.
\emph{(iv)} Benchmark level: $L_{\mathrm{total}},L_{\mathrm{comp}},
L_{\mathrm{cond}}$ are \emph{means} over the (model, domain) pairs, the reported
median $U$ is the \emph{median} of the per-domain medians, and the
composition-to-total norm ratio quoted in Section~\ref{sec:natural} is the
\emph{mean of the per-domain ratios}. That last convention is why that ratio and
the tabulated norms are not in agreement: for ImageNet-C the mean of ratios is $41.8\%$ while the ratio of
means is $32.8\%$. $U$ is undefined, and excluded, wherever the envelope is
undefined --- a support violation ($q_k>0$ with $p_k=0$), or a vanishing
denominator; no value is clipped, and none exceeds $1$, as it cannot when
numerator and envelope use the same plug-in rates. In the fold-pooled records of the
three primary benchmarks neither exclusion occurs: there is no support violation
and no vanishing denominator among the $9{,}177$ records. Bins with no target
example do not enter at all ($63$ of the $9{,}000$ ImageNet-C (model, domain, bin)
combinations). Under the preregistered rule, bins with fewer than $200$ target
examples in a fold are flagged rather than dropped. The pooled table retains the
low-count flag from the first available fold, whereas its reported target count is
averaged over folds; $5{,}508$ of the $8{,}937$ ImageNet-C
records are flagged, none of the ImageNetV2 or ImageNet-Sketch records, so many
ImageNet-C bin-level values rest on small target samples.

Each $L_x$ is the $q$-weighted $L^2$ norm $\|\delta_x\|_Q$, so the binwise
identity of Proposition~\ref{prop:decomp} gives
$L_{\mathrm{total}}=\|\delta_{\mathrm{comp}}+\delta_{\mathrm{cond}}\|_Q$ and hence
only the triangle inequality $L_{\mathrm{total}}\le L_{\mathrm{comp}}+
L_{\mathrm{cond}}$ at the domain level, with equality only when one component is a
non-negative multiple of the other. The exact additivity therefore holds bin by
bin and not among the tabulated norms.

\paragraph{Low-target-count sensitivity (post-hoc).} After the count above was
reported, a diagnostic was specified in writing before it was computed: within each
ImageNet-C (model, domain), drop the flagged bins, renormalise the target bin mass
over the remaining ones and recompute the three norms. This diagnostic uses that
frozen flag, not a threshold applied to the averaged count, and no other threshold. Table~\ref{tab:r8-lowcount} gives the result. The
flagged bins carry little mass: \LcRetainedMass{} of the target mass is kept on
average (\LcRetainedMin{}--\LcRetainedMax{} over the $450$ units). The mean ratio
moves from \LcRatioAll{} to \LcRatioRet{}, the conditional norm stays the larger in
\LcCondRet{} of $450$ units (\LcCondAll{} with all bins), and the severity trend
remains (\LcSevOneRet{} at severity~1, \LcSevFiveRet{} at severity~5). The
recomputed all-bin norms match the frozen summaries to $3\times10^{-16}$. Restricting
to higher-count bins changes the estimand; the check does not remove target-rate
noise from the retained bins, does not address the source-rate selection of the
original partition, does not replace a revised-pipeline rerun of ImageNet-C and
carries no interval or coverage statement. The original all-bin results remain the
reported analysis.

\begin{table}[h]
\caption{Low-target-count sensitivity of the ImageNet-C summaries (post-hoc; original partition, fold-pooled records). ``Unflagged bins'' drops, within each (model, domain), the bins carrying the stored low-count flag of the pooled table (set by the preregistered per-fold rule of fewer than $200$ target examples and taken from the first available fold; not a threshold on the fold-averaged count) and renormalises the target bin mass over the rest; every one of the $450$ (model, domain) units keeps at least one bin. This changes the estimand to higher-count bins; it is not a revised-pipeline rerun and carries no interval or coverage statement.}
\label{tab:r8-lowcount}
\vspace{3pt}\centering\footnotesize\setlength{\tabcolsep}{4pt}
\begin{tabular}{lccccc}
\toprule
bins used & target mass kept & mean $L_{\mathrm{comp}}/L_{\mathrm{total}}$ & $L_{\mathrm{cond}}>L_{\mathrm{comp}}$ & severity 1 & severity 5 \\
\midrule
all & $100\%$ & $41.8\%$ & 400/450 & $50.6\%$ & $33.0\%$ \\
unflagged & $77.9\%$ ($55.8\%$--$98.6\%$) & $43.9\%$ & 401/450 & $52.7\%$ & $37.0\%$ \\
\bottomrule
\end{tabular}
\end{table}

\paragraph{The matched-ECE pair (original protocol; not reproduced).} Under the
original protocol, temperature-scaled ResNet-50 V2 and Swin-T had ECE $0.0308$ and
$0.0307$ with FAUC $0.046$ and $0.031$, a factor of $1.45$, and this was reported
as the sharpest static form of the dissociation. It does not survive the estimator
revision: in the post-audit numbers of Table~\ref{tab:models} that pair has ECE
$0.0324$ and $0.0318$, a gap of $5.3\times10^{-4}$, so it fails the preregistered
$5\times10^{-4}$ matching tolerance, and no other primary pair meets it in either
state. We record the historical value here and make no current empirical claim
from it. A post-hoc enumeration of all pairs among the eighteen checkpoints of
Appendix~\ref{app:breadth}, at the same tolerance, is in Table~\ref{tab:r7-matched};
the dissociation itself rests on Proposition~\ref{prop:witness}.

\begin{figure}[h]
\centering
\includegraphics[width=0.62\textwidth]{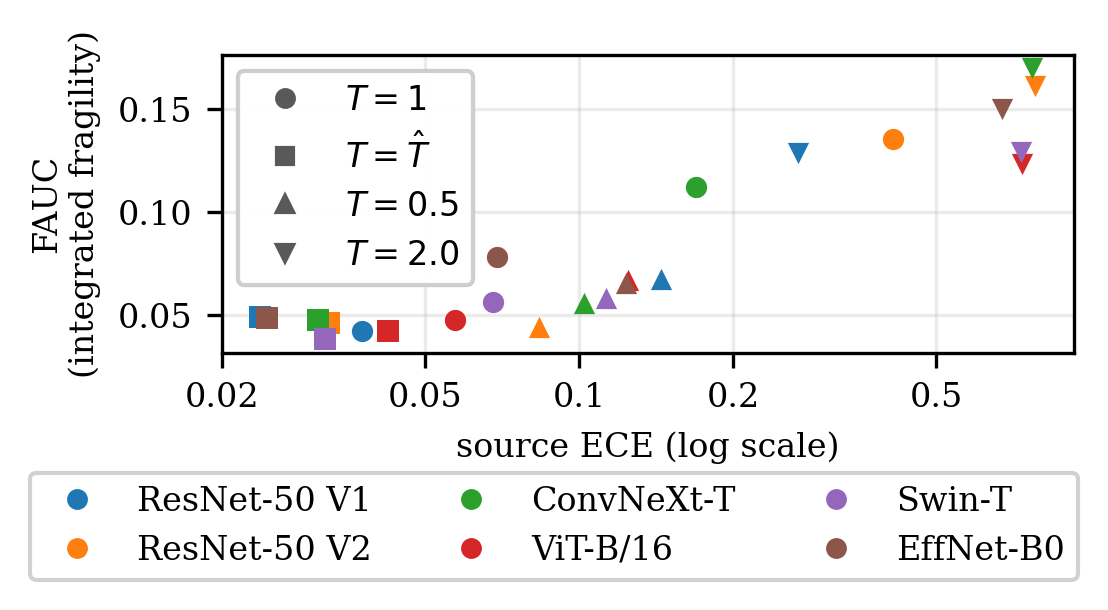}
\caption{\textbf{Source calibration error against restricted fragility,
exploratory.} One point per checkpoint and calibration state, all post-audit and
regenerated from the same source as Table~\ref{tab:models}; the six checkpoints
appear four times each and are not independent observations. Marginally the two
are associated (Pearson \PearsonAll{}, Spearman \SpearmanAll{}); conditioning on
$G_R$ removes it (partial \PartialEceAll{}, while $G_R$ retains \PartialGAll{}
given ECE). Appendix~\ref{app:ladder} gives both parent sets, the rank versions
and the leave-one-checkpoint-out ranges, and why none of this establishes
non-determination.}
\label{fig:ecefauc}
\end{figure}

\paragraph{Two further targets, exploratory.} The protocol also declared
ImageNet-R \citep{hendrycks2021many} and ImageNet-A \citep{hendrycks2021natural}
as \emph{exploratory} and never pooled with the primary benchmarks: both cover a
different $200$-class vocabulary, so the clean source reference is restricted to
those classes before any source quantity is estimated, while the classifier
remains the unchanged $1000$-way model. The composition-to-total norm ratio is $9.7\%$ and $5.9\%$ on them
($L_{\mathrm{total}}=0.293$ and $0.485$), so the dominance of the conditional term is if anything stronger on the
harder shifts. ImageNet-A in particular was built by adversarial filtration
against a ResNet-50 and is not an ordinary covariate shift; we report it with
that caveat and draw no further claim from either.

\section{The witness laws of Proposition~\ref{prop:witness}}\label{app:witness}

All four witnesses are two-point laws at a single confidence level, verified
exactly against the audited enumeration solver.

\begin{center}\small
\begin{tabular}{llccccc}
\toprule
& law of $\eta$ at $S=s$ & $s$ & $\mu_P$ & $e_P=\mu_P-s$ & $G_P$ & $z_+$ \\
\midrule
(i)    & $\{0.2,1.0\}$ w.p.\ $\tfrac12$ & $0.6$ & $0.6$ & $0$ & $0.16$ & $0.4$ \\
(ii)   & $\equiv0.9$                    & $0.6$ & $0.9$ & $0.3$ & $0$ & --- \\
(iii)a & $\equiv0.6$                    & $0.5$ & $0.6$ & $0.1$ & $0$ & --- \\
(iii)b & $\{0.2,1.0\}$ w.p.\ $\tfrac12$ & $0.5$ & $0.6$ & $0.1$ & $0.16$ & $0.4$ \\
(iv)a  & $\{0.4,1.0\}$ w.p.\ $(\tfrac23,\tfrac13)$ & $0.6$ & $0.6$ & $0$ & $0.08$ & $0.4$ \\
(iv)b  & $\{\tfrac13,0.9\}$ w.p.\ $(\tfrac{9}{17},\tfrac{8}{17})$ & $0.6$ & $0.6$ & $0$ & $0.08$ & $0.3$ \\
\bottomrule
\end{tabular}
\end{center}

The pair (iv) is the finite-support witness for Theorem~\ref{thm:ident}: the two
laws agree in \emph{both} the calibration residual and the grouping variance
(exactly, $G_P=0.08$ for both), both profiles equal $\sqrt{0.08\,\rho}$ on
$[0,1.125]$, and they separate by exactly $0.100$ for $\rho\ge2$, where one
saturates at $0.4$ and the other at $0.3$. For (iv)b, $\mathbb E\eta=\tfrac13\cdot
\tfrac9{17}+0.9\cdot\tfrac8{17}=0.6$, $\mathbb E\eta^2-0.36=0.08$, $z_-=-\tfrac4{15}$ and
$\rho_+=\rho_{\mathrm{sat}}=\tfrac98$; for (iv)a, $\rho_+=\rho_{\mathrm{sat}}=2$. Note that witness (i) has $G_P=0.16$
while the largest attainable value at $\mu_P=0.6$ is $0.6\cdot0.4=0.24$; it is
positively fragile, not maximally fragile.

\section{Cross-fitting, leakage test and estimator conventions}
\label{app:crossfit}\label{app:estimators}\label{app:diagnostics}
Each of the six checkpoints reproduces its published top-1 accuracy to within
$0.02$ points. Five folds, seed $20270214$; $J=20$ equal-mass confidence bins;
$K=8$ groups.
\emph{Original protocol.} As first specified, each training portion fitted the bin
edges, the correctness readout (logistic regression on confidence, margin, entropy
and a $50$-dimensional PCA of the penultimate representation) and its within-bin
quantile boundaries, and cell rates $\theta_P$ were then taken from that same
training portion with Beta--Binomial shrinkage toward the bin mean ($m=10$).
Group proportions came from the held-out portion using $(B,R)$ only, and cells
were retained at training count $\ge30$ and evaluation count $\ge10$. This is the
pipeline behind the original-protocol results retained in
Appendix~\ref{app:ladder}, and it is \emph{not} the method used for
Table~\ref{tab:models}: the revision below separates the data that fit the cells
from the data that estimate their rates, and drops the shrinkage so that
Proposition~\ref{prop:gunbiased} applies.

\paragraph{Leakage test.} Permuting the held-out fold's correctness labels and
rerunning the entire construction leaves every partition assignment and every
weight bit-identical (verified by SHA-256 over the weight arrays). The
permutation must be confined to the fold under test: permuting all folds at once
also scrambles each fold's training labels and produces a false positive.

\paragraph{FAUC, in full.} The reported FAUC is reproducible from the released
per-model outputs as follows. For each state, partition $R$ (primary
\texttt{R2\_within}), radius $\rho$ and direction, the per-bin plug-in profile
$\widehat{\mathcal F}^{\pm}_{B,R}(b,\rho)$ is solved on the plug-in cell rates
$\hat\theta$. Within each outer fold, bins are combined by
$w_b = n_b/\sum_{b'} n_{b'}$, the bin's share of retained evaluation examples in
that fold, into
\[
  F^{\pm}_f(\rho)\;=\;\Big(\textstyle\sum_b w_b\,
  \widehat{\mathcal F}^{\pm}_{B,R}(b,\rho)^2\Big)^{1/2},
\]
a quadratic mean over analysable bins, and the model-level curve is the unweighted
mean $F^{\pm}(\rho)=\frac15\sum_fF^{\pm}_f(\rho)$ over the five folds.
The radius grid is fixed at $\rho\in\{0,0.01,0.03,0.1,0.3,1,3\}$ --- seven points,
including $\rho=0$ where $F=0$ --- and is the same grid in every experiment that
reports FAUC. Writing $x_j=\log(1+\rho_j)$, so that $x$ runs over
$[0,\log4]$,
\[
  \mathrm{FAUC}\;=\;\sum_{j=0}^{5}\tfrac12\big[F^{+}(\rho_j)+F^{+}(\rho_{j+1})\big]
  \,(x_{j+1}-x_j),
\]
the trapezoidal rule in log-radius. Three conventions matter for exact
reproduction: the headline FAUC uses the \emph{upward} profile $F^{+}$ only (the
downward integral is recorded separately as a directional-asymmetry diagnostic);
the integral is \emph{not} normalised by $\log4\approx1.386$, so FAUC has the
units of $F$ scaled by that interval; and $F$ is built from $\hat\theta$, not
from the debiased $G_R$. Recomputing this from the frozen per-model profiles
reproduces every FAUC in Table~\ref{tab:models} exactly (maximum absolute
deviation $0$).

\paragraph{ECE, in full.} Every sample contributes exactly once, through the
fold in which it was an evaluation point, using that fold's frozen temperature
and frozen equal-mass bin edges; the resulting out-of-fold arrays are then pooled
over all $n=50{,}000$ validation images and
\[
  \mathrm{ECE}\;=\;\sum_{b=1}^{J}\frac{n_b}{n}\,
  \big|\bar c_b-\bar s_b\big|,\qquad J=20,
\]
with $\bar c_b$ the mean correctness and $\bar s_b$ the mean confidence in bin
$b$, empty bins skipped. The binning is \emph{equal-mass} (adaptive), on the same
$J=20$ preregistered bins as the fragility construction, with edges fitted on
each training fold; an equal-width $15$-bin variant is recorded as a secondary
metric and is not used in any comparison. Because the temperature is fitted on
the training fold and the ECE contribution is taken on the held-out fold,
temperature fitting and ECE evaluation never share data. The as-released and
temperature-scaled states use the identical estimator, differing only in the
confidences fed to it.

\paragraph{Data roles.} Every supervised object is fitted on data disjoint from
the data used to estimate cell rates. Within each outer fold $f$ (five folds,
seed $20270214$) the training portion is split once, $50/50$, into
\begin{center}\small
\begin{tabular}{lll}
\toprule
role & size & what is fitted or estimated on it \\
\midrule
$D_{\mathrm{fit}}$  & $20{,}000$ & temperature $T$; bin edges; correctness readout;
                                   within-bin group boundaries \\
$D_{\mathrm{rate}}$ & $20{,}000$ & raw cell rates $\tilde\theta_{bk}$ and counts
                                   $n_{bk}$, on the frozen cells \\
$D_{\mathrm{eval}}$ & $10{,}000$ & evaluation proportions $\hat p$; held-out
                                   correctness scoring frozen weights \\
\bottomrule
\end{tabular}
\end{center}
This is a change from the protocol as first specified, where the readout and the
temperature were both fitted on the whole training portion and $\hat\theta$ was
then formed from those same labels. Throughout, $\tilde\theta$ denotes raw binomial
cell proportions and $\hat\theta$ the cell rates a pipeline plugs into the profile:
the shrunk rates under the original protocol, and $\tilde\theta$ itself under the
revised roles. The temperature matters here even though
scaling preserves the argmax and therefore leaves correctness invariant: $T$ was
fitted by NLL on all $40{,}000$ training labels, so a rate-estimation point
contributed to the map that decides which bin it lands in, and no conditioning
argument recovers independence afterwards. The effect of one point on a scalar
fitted from $40{,}000$ is small, but the statement ``conditionally on the frozen
objects the rates are independent binomials'' is simply unavailable under that
design. We therefore refit $T$ on $D_{\mathrm{fit}}$ and report the cached-$T$
variant separately as a sensitivity variant.

\paragraph{What the plug-in estimates, in general.} With
$\hat\theta$ the vector of cell rates in bin $b$, weights $w$ and
$A_w=\operatorname{diag}(w)-ww^{\top}$, the plug-in is
$\hat\theta^{\top}A_w\hat\theta$. For fixed $w$, writing
$\beta=\mathbb E[\hat\theta]-\theta$ and $\Sigma=\operatorname{Cov}(\hat\theta)$,
\begin{equation}
  \mathbb E\big[\hat\theta^{\top}A_w\hat\theta\big]-\theta^{\top}A_w\theta
  \;=\;\underbrace{2\theta^{\top}A_w\beta+\beta^{\top}A_w\beta}_{\text{bias of }\hat\theta}
  \;+\;\underbrace{\operatorname{tr}(A_w\Sigma)}_{\text{estimation noise}} .
  \label{eq:gbias}
\end{equation}
Only when the cells are fixed and the $\hat\theta_{bk}$ are unbiased and mutually
independent does the second term reduce to $\sum_kw_k(1-w_k)v_k$ with
$v_k=\operatorname{Var}(\hat\theta_{bk})$; a shared shrinkage target correlates
cells, and a label-supervised readout fitted on the rate-estimation labels makes
$\beta\neq0$ in the direction that inflates the plug-in.

\begin{proposition}[Conditional unbiasedness]
\label{prop:gunbiased}
Fix the partition, non-negative weights $w$ with $\sum_kw_k=1$, and cell counts
$n_k\ge2$. Suppose the $\tilde\theta_k$ are independent binomial proportions with
mean $\theta_k$ on those counts. Then
\[
  \widehat G_w=\tilde\theta^{\top}A_w\tilde\theta
   -\sum_k w_k(1-w_k)\,\frac{\tilde\theta_k(1-\tilde\theta_k)}{n_k-1}
  \quad\text{satisfies}\quad
  \mathbb E\big[\widehat G_w\mid\text{the fixed objects}\big]=\theta^{\top}A_w\theta .
\]
\end{proposition}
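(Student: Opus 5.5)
The plan is to apply the general bias identity~\eqref{eq:gbias} with $\hat\theta=\tilde\theta$, and then show that the subtracted correction is an unbiased estimator of the noise term. Conditionally on the fixed objects (partition, weights $w$, counts $n_k$), each $\tilde\theta_k$ is a binomial proportion with mean $\theta_k$. Hence $\beta=\mathbb E[\tilde\theta]-\theta=0$, and the first bracket in~\eqref{eq:gbias} vanishes. Because $w$ is fixed, $A_w$ is a fixed matrix, so no selection enters the quadratic form.

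The first step is to evaluate the noise term $\operatorname{tr}(A_w\Sigma)$. Independence across cells makes $\Sigma$ diagonal with entries $v_k=\theta_k(1-\theta_k)/n_k$. The diagonal of $A_w=\operatorname{diag}(w)-ww^\top$ is $w_k-w_k^2=w_k(1-w_k)$. Therefore
\[
\mathbb E[\tilde\theta^\top A_w\tilde\theta]=\theta^\top A_w\theta+\sum_k w_k(1-w_k)\,\frac{\theta_k(1-\theta_k)}{n_k}.
\]
Equivalently, one can expand $\mathbb E[\tilde\theta_j\tilde\theta_k]=\theta_j\theta_k+\mathbf 1\{j=k\}v_k$ directly; this avoids relying on~\eqref{eq:gbias}.

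The second step is to show that the correction term is unbiased for $v_k$, cell by cell. For a binomial proportion on $n_k\ge2$ trials, $\mathbb E[\tilde\theta_k^2]=\theta_k^2+\theta_k(1-\theta_k)/n_k$. This gives
\[
\mathbb E[\tilde\theta_k(1-\tilde\theta_k)]=\theta_k(1-\theta_k)\bigl(1-1/n_k\bigr)=\theta_k(1-\theta_k)\,\frac{n_k-1}{n_k}.
\]
Dividing by $n_k-1$, which is legitimate because $n_k\ge2$, yields exactly $v_k$; this is the usual Bessel correction. Linearity of expectation with the fixed coefficients $w_k(1-w_k)$ then cancels the noise term, leaving $\theta^\top A_w\theta$.

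I expect no real obstacle here. The only delicate point is to make precise what "conditional on the fixed objects" means. Cell membership and counts must be determined without the rate labels; the revised data roles with $D_{\mathrm{fit}}$ and $D_{\mathrm{rate}}$ provide this. Given that membership and those counts, the labels must be independent Bernoulli variables with cell-constant means $\theta_k$. I will state this as the hypothesis being used rather than prove it. I will also note that $w$ must not depend on the $\tilde\theta$; otherwise $A_w$ could not be pulled out of the expectation.
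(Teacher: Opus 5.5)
Your proposal is correct and matches the paper's proof step for step. You use the trace identity $\mathbb E[\tilde\theta^{\top}A_w\tilde\theta]=\theta^{\top}A_w\theta+\operatorname{tr}(A_w\Sigma)$ with $\Sigma$ diagonal by independence, and then the Bessel-type identity $\mathbb E[\tilde\theta_k(1-\tilde\theta_k)/(n_k-1)]=\theta_k(1-\theta_k)/n_k$, which is exactly the paper's argument; your remark that the cells and $w$ must not depend on the rate labels is the same conditioning the paper expresses as ``the fixed objects''.
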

\begin{proof}
$\mathbb E[\tilde\theta^{\top}A_w\tilde\theta]=\theta^{\top}A_w\theta+
\operatorname{tr}(A_w\Sigma)$ and $\Sigma=\operatorname{diag}(v)$ by
independence, so $\operatorname{tr}(A_w\Sigma)=\sum_k(w_k-w_k^2)v_k$ with
$v_k=\theta_k(1-\theta_k)/n_k$. For a binomial proportion,
$\mathbb E[\tilde\theta_k(1-\tilde\theta_k)]=\theta_k(1-\theta_k)(n_k-1)/n_k$, so
$\mathbb E[\tilde\theta_k(1-\tilde\theta_k)/(n_k-1)]=v_k$ exactly; $n_k\ge2$ is
what makes the divisor meaningful. Subtracting gives the claim.
\end{proof}

\paragraph{Which estimand, and the rules that define it.} The proposition is
conditional on the realised weights, so $\widehat G_w$ is unbiased for the
\emph{empirical-weight} quantity $\theta^{\top}A_{\hat p}\theta$, not for the
population $G_R$ of the same cells. In an idealised model with fixed full support, no retention and independent
multinomial counts $N\sim\mathrm{Mult}(m,p)$, $\hat p=N/m$, one has
$\mathbb E[A_{\hat p}]=(1-1/m)A_p$, so a rate-noise-only correction retains a
$(1-1/m)$ factor against the population target; with $m\approx500$ evaluation
points per bin that factor is $0.998$. This delimits one term in that idealised
model. It is not a bound on the total departure from a population target of the
actual pipeline, which also selects cells by count, renormalises over the
retained ones and refits the partition on each fold; we do not claim such a
bound, and the estimand we report is the empirical-weight one. Three further conventions are part of the estimand and
not incidental. Cells enter only when $n^{\mathrm{rate}}_{bk}\ge30$ and
$n^{\mathrm{eval}}_{bk}\ge10$, and $\hat p$ is renormalised over the retained
cells, so the target is the restricted quantity on that retained sub-support;
retention uses counts, not labels, but it is still data-dependent and the
proposition is conditional on its outcome. Bins with fewer than two retained
cells are dropped. Negative estimates are retained, never clipped. Per-bin values
are combined by evaluation count within a fold and then averaged unweighted over
the five folds. Finally, the revised partitions are fitted on $D_{\mathrm{fit}}$ and can differ
from those of the original protocol. Differences between original-protocol and
post-audit estimates therefore need not estimate the bias of either procedure.

\paragraph{How large the selection effect is, measured rather than argued.}
We re-ran the pipeline with correctness replaced by labels independent of the
input, so the population $G_R$ of any induced partition is exactly $0$ and
anything reported is bias. Two randomisations are used and they answer slightly
different questions. A \emph{permutation} of the observed correctness vector fixes
the total number of successes and so samples without replacement; it is the
natural selection diagnostic but not the independent-binomial structure
Proposition~\ref{prop:gunbiased} assumes. Drawing \emph{i.i.d.}\ Bernoulli
pseudo-labels at the model's accuracy supplies that structure exactly. Both agree.
All uncertainties are across replicate randomisations on the same cached
predictions, so they describe randomisation error conditional on that cache, not
dataset sampling error; overlapping folds are not independent replicates and are
not counted as such.

\begin{center}\footnotesize\setlength{\tabcolsep}{4pt}
\begin{tabular}{llccc}
\toprule
randomisation & estimator & ResNet-50 V1 & ViT-B/16 & Swin-T \\
\midrule
permutation ($8\times$) & same-fold rates & $6.1\pm1.9$ & $6.9\pm1.7$ & $4.3\pm1.4$ \\
                        & inner-split rates & $0.7\pm0.9$ & $-2.3\pm2.1$ & $-1.2\pm1.8$ \\
\midrule
i.i.d.\ Bernoulli ($10\times$) & same-fold rates & $6.7\pm1.5$ & $4.4\pm1.0$ & --- \\
                        & inner-split rates & $0.6\pm1.7$ & $-0.9\pm1.7$ & --- \\
\bottomrule
\end{tabular}
\end{center}

\noindent
Entries are mean $\widehat G_R$ under the null $\pm$ Monte-Carlo standard error,
in units of $10^{-5}$; the standard deviation across replicates is
$\sqrt{8}$ or $\sqrt{10}$ times larger. The same-fold estimator is positive at
$3.2$--$4.6$ standard errors where the truth is zero; the inner-split estimator is
within one. A matched simulation with adaptive grouping --- a label-supervised
logistic readout on $12$ features, $K=8$ within-bin quantile groups, $2000$ points
per bin, $400$ replicates, truth taken as the population $G_R$ of the \emph{same}
fitted partition --- reproduces the ordering in all three regimes an estimator has
to survive: bias $+1.15\pm0.04$, $+2.62\pm0.20$ and $+2.54\pm0.19$ ($\times
10^{-3}$) for the same-fold estimator under a null, under genuine heterogeneity
and under unequal cell counts, against $-0.02\pm0.05$, $+0.28\pm0.28$ and
$+0.34\pm0.28$ for the inner split. Unequal cell counts do not change the
conclusion, and the selection term is larger when real heterogeneity is present
than under the null, so it cannot be removed by subtracting a constant. None of
this \emph{proves} unbiasedness: simulation at this precision shows only that no
bias is detected at that precision. Exact unbiasedness is what
Proposition~\ref{prop:gunbiased} supplies, under the conditions it states.

\paragraph{What the old and new estimators do not measure against each other.}
Table~\ref{tab:models} reports two \emph{calibration states} under one estimator;
the estimator contrast is the one between that table and the original-protocol
values retained in Appendix~\ref{app:ladder}. On real labels the difference
between the two \emph{protocols} is not a bias estimate. The inner split also
halves the sample that fits $\hat\theta$ and refits every nuisance on half the
fold, so the two protocols estimate restricted quantities of cells that can
differ; empirically their difference does not even keep a constant sign across
checkpoints. We read neither as a correction of the other.

\paragraph{Held-out rank agreement: aggregation.} For one
(checkpoint, state, fold, radius) we take the Spearman correlation across the
retained bins between the predicted upward per-bin profile and the drift realised
on $D_{\mathrm{eval}}$ under the frozen weights. A radius is skipped if fewer than
three bins are retained or either array is constant; a bin needs at least two
retained cells; $\rho=0$ is excluded because both arrays vanish identically there.
Within a fold we take the median over the six positive radii, then the mean over
the five outer folds, giving one value per (checkpoint, state). The figures quoted
in Section~\ref{sec:cannotsee} are the median, minimum and maximum of those
twenty-four values, with none excluded. ``Fold'' always means an outer
cross-fitting fold; the two halves of Appendix~\ref{app:fscert} are called
\emph{arms} and are a separate construction.

\paragraph{Unbiased rates do not give an unbiased profile.} For fixed $p$ and
$\rho$, $\theta\mapsto\mathcal F^{+}(p,\theta)$ is a supremum of linear functions
of $\theta$, hence convex, so Jensen gives
$\mathbb E[\mathcal F^{+}(p,\hat\theta)]\ge\mathcal F^{+}(p,\theta)$ whenever
$\hat\theta$ is conditionally unbiased, and the same holds for a non-negatively
weighted aggregate of per-bin profiles. This is an inequality in expectation: a
single realisation need not exceed the truth, and it does not imply that an
inner-split FAUC computed on a \emph{different} partition exceeds the original
one. The inequality is strict in the cases that matter: with
two equal-mass cells $\mathcal F^{+}(\rho)=\tfrac12|\theta_1-\theta_2|
\min\{\sqrt\rho,1\}$, so at $\theta_1=\theta_2$ the true profile is zero while
independent rates from $n=200$ per cell give $\mathbb E[\widehat{\mathcal F}^{+}]
=0.010$ at $\rho=0.3$ and $0.018$ at $\rho=3$. Splitting removes a selection mechanism from $\widehat G_R$; it does not make
FAUC unbiased, and it does not make it conservative. Halving the rate sample
increases the variance of $\hat\theta$, which at a \emph{fixed} partition and set
of weights increases the expected plug-in profile; that comparison is not the same
as comparing the split and original pipelines, which also differ in partition, and
we draw no ordering between those. We therefore keep three objects apart throughout: the
population restricted profile, the plug-in estimate of it, and the independent
split-sample lower confidence bound of Appendix~\ref{app:fscert}, which alone
carries a coverage guarantee and whose per-(model, state, split arm) statement we
do not transfer to any point estimate.

Negative values are \textbf{retained, not clipped}. The reported model-level value is
\[
  \widehat G_R^{\mathrm{model}}=\frac1{5}\sum_{f=1}^{5}
  \frac{\sum_b n^{\mathrm{ev}}_{f,b}\,\widehat G_{R,f}(b)}
       {\sum_b n^{\mathrm{ev}}_{f,b}},
\]
an evaluation-count-weighted mean over analysable bins within each fold, then an
unweighted mean over the five folds. Recomputing this reproduces every $G_R$ in
Table~\ref{tab:models} exactly, and recomputing ECE end to end from the cached
predictions reproduces Table~\ref{tab:models} to $<3\times10^{-17}$.

\paragraph{Debiased versus plug-in heterogeneity.} $G_R$ is reported with the
correction of Proposition~\ref{prop:gunbiased},
$\sum_kw_k(1-w_k)\tilde\theta_k(1-\tilde\theta_k)/(n_k-1)$, whereas the fragility
profile is computed from the plug-in rates with no correction. The two therefore
use different estimators and are not numerically interchangeable; no statement in
this paper compares $\sqrt{\rho\,G_R}$ with a plug-in fragility quantity as though
they estimated the same thing. (The percentage agreements quoted in earlier
versions of this appendix referred to the original-protocol shrunk estimator and
have been removed rather than recomputed, since no current claim uses them.)

\paragraph{Score-marginal diagnostics.} The construction preserves the binned
marginal exactly, $\max_b|Q_B(b)-P_B(b)|<10^{-14}$ throughout (pilot
$5.7\times10^{-16}$, multi-architecture $8.9\times10^{-15}$). The continuous
marginal is not preserved. In the pilot, under the primary partition at
$\rho=1$, the weighted-versus-source Kolmogorov--Smirnov statistic is $0.0051$
and the Wasserstein-1 distance $0.0014$; at $\rho=0.1$ they are $0.0017$ and
$0.0005$. These are reported as diagnostics, never as preservation.

\paragraph{Effective sample size.} In the pilot the minimum effective sample
size across retained (bin, radius) pairs is $622$ and the median $2{,}099$; the
minimum across the six architectures is $618$, and no bin falls below $30$. No
reported quantity rests on a cell the reweighting has emptied.

\subsection{Split-sample lower confidence bound}\label{app:fscert}

Proposition~\ref{prop:hierarchy} orders \emph{population} quantities. The plug-in
profile estimates the restricted profile but carries no finite-sample
guarantee, so a separate preregistered analysis supplies one. Its protocol was
written and hashed before any of its results were computed
(Appendix~\ref{app:claims}).

\paragraph{The pairwise construction.} Fix a bin $b$ and the learned readout $R$.
For an ordered pair of groups (source $u$, destination $v$), move mass $d$:
$q_v=p_v+d$, $q_u=p_u-d$, all other groups unchanged. Total bin mass is preserved;
non-negativity requires $d\le p_u$; the conditional Pearson $\chi^2$ cost is
$d^2(1/p_v+1/p_u)$; and since the perturbation has zero total mass, the centring
in $Z$ is immaterial and the reliability movement is exactly $d(\theta_v-\theta_u)$.
The budget $d^2(1/p_v+1/p_u)\le\rho$ and $d\le p_u$ together give the largest
feasible step
\[
  d^\ast(p_v,p_u,\rho)\;=\;\min\Big\{p_u,\ \sqrt{\rho\,p_vp_u/(p_v+p_u)}\Big\}.
\]
Because this move is a feasible point of the programme whose supremum defines
$\mathcal F^{+}_{B,R}(b,\rho)$, we get $\mathcal F^{+}_{B,R}(b,\rho)\ge
(\theta_v-\theta_u)_+\,d^\ast$ for every pair, with $d^\ast=0$ when $p_vp_u=0$. When
$p_u$ attains the minimum the
budget is automatically satisfied, since $\sqrt{\rho p_vp_u/(p_v+p_u)}\ge p_u$
rearranges to $\rho\ge p_u^2/p_v+p_u$, which is the cost of $d=p_u$.

\paragraph{Monotonicity of $d^\ast$.} $(p_v,p_u)\mapsto p_vp_u/(p_v+p_u)=(1/p_v+1/p_u)^{-1}$
is increasing in each argument, so the square-root term is increasing in both, and
$\min\{p_u,\cdot\}$ is a minimum of two functions each non-decreasing in $p_u$ and
non-decreasing in $p_v$. Hence $d^\ast$ is coordinatewise non-decreasing, and
substituting lower bounds for $p_v,p_u$ can only decrease it.

\paragraph{The simultaneous event and its family.} On the certification half,
conditionally on the realised counts, the correct counts in cell $(b,k)$ are
Binomial$(n_{bk},\theta_P(b,k))$, the cell counts within bin $b$ are
Binomial$(n_b,p_{bk})$, and the bin counts are Binomial$(n,p_b)$. We take exact
one-sided Clopper--Pearson bounds and combine by Bonferroni over
\[
  \underbrace{2JK}_{\theta\ \text{lower and upper}}+\underbrace{JK}_{p_{bk}\ \text{lower}}
  +\underbrace{J}_{p_b\ \text{lower}}\;=\;500 \quad (J=20,\ K=8)
\]
one-sided statements, each at $\alpha/500=10^{-4}$, giving an event $E$ with
$\Pr(E)\ge0.95$. On $E$, $(\theta_v-\theta_u)_+\ge[L_{bv}-U_{bu}]_+$ and, by the
monotonicity above, $d^\ast(p_v,p_u,\rho)\ge d^\ast(l_{bv},l_{bu},\rho)$; both
factors are non-negative, so their product is a lower bound, and
$\mathrm{LCB}^{+}_b(\rho)$ of~\eqref{eq:lcb} follows.

\paragraph{Why the maximisation and the radius grid are free.} Every pair yields a
bound valid on the \emph{same} event $E$, so their maximum is valid on $E$; the
selection is not a multiple test and needs no correction. Identically, the bounds
$L,U,l$ do not depend on $\rho$, so re-evaluating~\eqref{eq:lcb} across the grid
re-uses one event. The downward bound reverses the correctness comparison,
$[L_{bu}-U_{bv}]_+$, and likewise re-uses $E$.

\paragraph{Model level.} On $E$ we have $p_b\ge l_b\ge0$ together with
$\mathcal F^{+}_{B,R}(b,\rho)\ge\mathrm{LCB}^{+}_b(\rho)\ge0$, so termwise
$p_b\,\mathcal F^{+}_{B,R}(b,\rho)^2\ge l_b\,\mathrm{LCB}^{+}_b(\rho)^2$. Summing and taking the monotone
square root gives
\[
  \Big(\sum_b l_b\,\mathrm{LCB}^{+}_b(\rho)^2\Big)^{1/2}
  \;\le\;
  \Big(\sum_b p_b\,\mathcal F^{+}_{B,R}(b,\rho)^2\Big)^{1/2}.
\]
The $l_b$ lie inside the same family, so the complete aggregation is covered by
the single event.

\paragraph{Validity is conditional on the fitted nuisances, and also unconditional.}
The bin edges, the readout and, where applicable, the temperature are learned on the
fitting half only. Conditionally on those frozen objects, the certification half is
an i.i.d.\ sample from the induced population and $\Pr(E\mid\text{fitted objects})
\ge0.95$. Because the two halves are sample-separated, iterating the expectation over
the fitting half gives $\Pr(E)\ge0.95$ unconditionally as well.

\paragraph{Split protocol.} One balanced random partition of the $50{,}000$
validation indices into halves (split seed $20260902$, declared in the protocol). Two
arms are reported \emph{separately}: A fits and B certifies, then B fits and A
certifies. The two arms learn different $(B,R)$ and therefore certify different
population targets; they are never pooled, averaged, or selected between, and the
$95\%$ statement is per $(\text{model, state, arm})$ analysis --- not joint over the
six architectures, the two states, or the two arms.

\paragraph{Two deliberate differences from the plug-in pipeline.} First, the
temperature used here is fitted on the fitting half; it is a separate split-fitted
validation state mirroring the fitted state of Table~\ref{tab:models}, not the same
five-fold nuisance fit. Second, \textbf{no cell-retention filter is applied}: all
$K=8$ groups enter, and a small or empty cell simply receives a vacuous
Clopper--Pearson interval and $l_{bk}=0$. Count-based retention would be
data-dependent selection that the coverage proof does not cover. Consequently the
bound is a lower confidence bound for the restricted profile of the learned
finite readout and is \emph{not} an error bar on the retained-cell FAUC
estimator.

\paragraph{Validation.} The constructed reweighting was checked over $2\times10^5$
random supports: cost formula to $2.2\times10^{-15}$, objective exact, non-negativity
$\min q=0$, mass preserved to $4.4\times10^{-16}$, budget never exceeded. Synthetic
coverage against the true $\mathcal F^{\pm}$ from the audited solver gave $0$
violations out of $9{,}000$ in each of four regimes (wide cells, tight cells, small
$n$, skewed $p$) and $0$ out of $2{,}000$ for the model-level aggregation. Edge cases
pass: zero counts, $p\to0$, equal $\theta$, $\theta\in\{0,1\}$, $\rho=0$, and
saturation $d=p_u$.

\paragraph{Result.} The preregistered outcome categories were A (positive model-level
bound at $\rho=0.3$ for $\ge4$ of $6$ architectures in at least one primary state),
B (one to three), C (none). \textbf{Outcome A} was observed.

\begin{center}\small
\begin{tabular}{lcccc}
\toprule
& \multicolumn{2}{c}{as released $T{=}1$} & \multicolumn{2}{c}{split-fitted $T{=}\hat T$} \\
\cmidrule(lr){2-3}\cmidrule(lr){4-5}
Model & A fits/B cert. & B fits/A cert. & A fits/B cert. & B fits/A cert. \\
\midrule
ResNet-50 V1    & $0$      & $0$       & $0$ & $0$ \\
ResNet-50 V2    & $0.0158$ & $0.0171$  & $0$ & $0$ \\
ConvNeXt-T      & $0.0096$ & $0.0114$  & $2\times10^{-5}$ & $0$ \\
ViT-B/16        & $0$      & $0$       & $0$ & $0$ \\
Swin-T          & $0.0011$ & $0.00014$ & $0$ & $0$ \\
EfficientNet-B0 & $0.0018$ & $0.0014$  & $0$ & $0$ \\
\bottomrule
\end{tabular}
\end{center}

\noindent
$\mathrm{LCB}^{+}_{\mathrm{model}}(0.3)$; each entry carries its own $95\%$
statement. The two architectures at zero are the two with least
partition-accessible heterogeneity, matching the $G_R$ ordering of
Table~\ref{tab:models}. Where the bound is positive it recovers roughly $10$--$25\%$
of the corresponding unfiltered certification-half plug-in value at small $\rho$;
the strongest single bin (ResNet-50 V2, bin~4) gives $\mathrm{LCB}^{+}_b(0.3)=0.0414$ against a plug-in $0.1044$.

\paragraph{Two distinct reasons the numbers are flat in $\rho$.} The \emph{set} of
bins with $\mathrm{LCB}^{+}_b(\rho)>0$ does not change with $\rho>0$, because
positivity requires only a strictly positive lower-bounded correctness gap
$[L_{bv}-U_{bu}]_+>0$ together with $l_{bv},l_{bu}>0$, and neither depends on $\rho$.
Separately, the \emph{values} stop growing once $d^\ast$ saturates at $p_u$, which is
the mass constraint binding rather than the budget. The two effects are unrelated.

\section{Extended scope, assumptions and claim boundaries}\label{app:claims}

\paragraph{What the estimator revision does and does not reach.} The revision of
Appendix~\ref{app:estimators} changes where cell rates are estimated. Results
divide into four groups and we do not treat them alike.
\emph{(i) Population statements and algebraic identities} ---
Propositions~\ref{prop:transport}, \ref{prop:invisible}, \ref{prop:selective},
\ref{prop:witness}, \ref{prop:hierarchy} and \ref{prop:decomp},
Theorems~\ref{thm:profile} and \ref{thm:ident}, Corollary~\ref{cor:local} --- are
about the population and are untouched by any estimator.
\emph{(ii) Identities between arrays.} The monotone-reparameterisation analysis is
exactly zero because both sides are computed from literally the same arrays once
the bin index and readout are bit-identical; that remains true under any rate
estimator, and it was never an independent empirical finding.
\emph{(iii) The split-sample lower confidence bound} of Appendix~\ref{app:fscert}
draws its counts from a certification half disjoint from the half fitting its
nuisances, and its code path does not call the revised rate estimator. It was not
rerun; the reported values are those originally computed.
\emph{(iv) Everything numerical that reads source cell rates} --- levels, model
rankings, ratios, correlations, envelope utilisation and the
composition/conditional split --- had to be recomputed or withdrawn, and the
recomputation is reported with the analysis it belongs to. We do not claim that a
common estimator gives a common bias across models: the same procedure applied to
six architectures can be inflated by different amounts, which is precisely why
orderings had to be re-derived rather than assumed stable.

A closing identity is not evidence for its parts. If source rates carry error $e$,
then at fixed $p,q$ and readout
\[
  \widehat{\delta}_{\mathrm{comp}}-\delta_{\mathrm{comp}}=(q-p)^{\top}e,\qquad
  \widehat{\delta}_{\mathrm{cond}}-\delta_{\mathrm{cond}}=-q^{\top}e,\qquad
  \widehat{\delta}_{\mathrm{total}}-\delta_{\mathrm{total}}=-p^{\top}e,
\]
so $\widehat\delta_{\mathrm{comp}}+\widehat\delta_{\mathrm{cond}}=
\widehat\delta_{\mathrm{total}}$ still holds exactly while the components move.
With $p=(\tfrac12,\tfrac12)$, $q=(1,0)$, $\theta_P=(\tfrac12,\tfrac12)$,
$\theta_Q=(\tfrac34,\tfrac34)$ the truth is
$(\delta_{\mathrm{total}},\delta_{\mathrm{comp}},\delta_{\mathrm{cond}})=
(\tfrac14,0,\tfrac14)$; substituting $\hat\theta_P=(0.7,0.3)$ gives
$(\tfrac14,\tfrac15,\tfrac1{20})$ --- same total, same exact closure, reversed
dominance. The numerical closure of $5\times10^{-16}$ reported in
Appendix~\ref{app:decomp} therefore certifies the algebra and nothing about the
components, which is why Appendix~\ref{app:phase7} re-runs the decomposition under
the revised rate estimator instead of inferring stability from closure.
The population construction preserves the continuous score marginal exactly;
the finite-sample construction preserves only the binned marginal. The
population targets of our empirical heterogeneity and fragility quantities are
partition-restricted population lower bounds on the corresponding binned oracle
quantities, never the oracle $G_P(s)$ itself; that ordering is a population
statement, and the cross-fitted plug-in estimate of it carries no finite-sample
conservativeness guarantee. In the finite construction $G_P(b)=\operatorname{Var}(\eta\mid
B=b)$ and $\mathcal F_B$ are the objects of Section~\ref{sec:theory} for the binned
statistic $B$; they include within-bin variation of $\mu_P$ and are not lower
bounds on the continuous-level $G_P(s)$ or $\mathcal F_s$. ECE and the natural-drift
norms are ordinary out-of-fold estimates, not confidence bounds. The controlled shifts are constructed
adversarially and are not forecasts of natural shift; the composition component
of natural drift is bounded by the population source profile under the support
condition of Proposition~\ref{prop:decomp}, total drift is not. At finite
readout the conditional term conflates mechanism change with coarseness.
Temperature scaling of a multiclass top-label score is not a monotone
reparameterisation and can reorder examples, so it alters resolution as well as
calibration; we therefore do not treat it as a calibration-only intervention,
and states with $T\in\{0.5,2\}$ are stress tests rather than evidence about
recalibration. The fragility profile determines the centred conditional law of
the correctness propensity $\eta$ and not its mean; the law of the label $C$
given $S=s$ is Bernoulli and is fixed by the mean alone. The hierarchy of
Section~\ref{sec:theory} is informative only where $\eta$ is non-degenerate within
a level set. If the label were a deterministic function of $X$ and the classifier
deterministic, then $\eta\in\{0,1\}$, $Z$ would be two-valued with
$z_+=1-\mu_P(s)$ and $z_-=-\mu_P(s)$, and one would have exactly
\[
  G_P(s)=\mu_P(s)\big(1-\mu_P(s)\big),\qquad
  \mathcal F^{\pm}_s(\rho)=\min\!\big\{\sqrt{\rho\,G_P(s)},\,\pi^{\pm}\big\},
\]
with $\pi^{+}=1-\mu_P(s)$ and $\pi^{-}=\mu_P(s)$: the general statements
specialise to functions of $\mu_P(s)$ within that family. This is a degenerate
sub-family, not a failure of the dynamic question, and the converse does not
follow --- stochastic labelling alone does not guarantee non-trivial within-level
heterogeneity. We make no claim about how deterministic any particular labelling
mechanism is. Our empirical objects sit at a coarsened readout where
$\theta_P(b,k)$ averages over many inputs and need not be $\{0,1\}$-valued, even
when labels are deterministic. No claim is made that
any architecture is universally more robust.

\paragraph{Chronology of what was fixed when.} Six protocol artefacts were written and
cryptographically hashed before the results they govern: the pilot,
multi-architecture, natural-shift, natural-target disambiguation,
split-sample lower confidence bound (Appendix~\ref{app:fscert}), and monotone
score-reparameterisation (Appendix~\ref{app:monotone}) protocols. Each fixes its estimand, estimator, controls and
admissible outcomes in advance; the fifth also fixes its split seed before any
split was drawn, and the sixth fixes its transformation family and
parameterisation, its fitting objective, its cross-fitting scheme, the radius
grid, the numerical tolerances, the endpoint list and its three admissible
outcomes before any new real-data result was inspected. The model checkpoints are public pretrained weights and were
frozen before any analysis. Two post-audit analyses are not preregistered and are labelled here so that
no number in this paper depends on an unlabelled one. The first is the score-only
comparator of Appendix~\ref{app:ladder}, which reruns the frozen construction
with $R$ replaced by within-bin confidence quantiles; it introduces no new model,
dataset or training run, and it is a comparator with no ordering relation to the
learned-readout profile. The second is the cell-rate estimator audit reported in
Appendix~\ref{app:estimators}: it permutes the training-fold correctness labels
before the readout fit and the rate estimation, repeats the frozen pipeline over
eight permutations, and compares three cell-rate estimators. Four later extensions --- random score-preserving shifts, a CIFAR-10H case study,
a revised-partition recomputation of two natural-shift benchmarks
(Appendix~\ref{app:ext}) and a twelve-checkpoint architecture-breadth cohort
(Appendix~\ref{app:breadth}) --- each had a protocol hashed before its results
were computed; none is preregistered, and the quantities defined after inspection
are marked post-hoc where they appear. The diagnostics added in the final revision
(Appendix~\ref{app:r7}) are post-hoc throughout; their protocol and its first
amendment were hashed before the corresponding results were computed, the second
amendment (the controlled-shift null) after the random-shift null had been seen.
The low-target-count sensitivity of Appendix~\ref{app:decomp} was specified in
writing after the flagged-bin count had been reported and before it was computed;
it is post-hoc and leaves the original all-bin analysis as the reported one. An earlier internal note described the affected results as differing only in
level, with internal comparisons unaffected; that statement is withdrawn. What was
and was not recomputed is set out below, analysis by analysis.

\begin{center}\footnotesize\setlength{\tabcolsep}{4pt}
\begin{tabular}{p{0.34\textwidth}p{0.58\textwidth}}
\toprule
analysis & provenance \\
\midrule
Table~\ref{tab:models}; controlled-shift profiles, FAUC and held-out rank
agreement & \textbf{recomputed} under the revised data roles; orderings, ratios
and correlations re-derived rather than assumed stable \\
Figure~\ref{fig:ecefauc} and the associations of
Appendix~\ref{app:ladder} & \textbf{recomputed} from the same source as
Table~\ref{tab:models} \\
Split-sample lower confidence bound (Appendix~\ref{app:fscert}) & \textbf{original
analysis, not rerun}; its counts come from a certification half disjoint from the
half fitting its nuisances and its code path does not call the revised estimator \\
Monotone reparameterisation (Appendix~\ref{app:monotone}) & \textbf{original
protocol, not rerun}, retained for its structural and calibration endpoints; its
baseline is the original-protocol as-released state \\
Natural-shift summaries (Table~\ref{tab:natural}, Figure~\ref{fig:severity},
Appendix~\ref{app:decomp}) & \textbf{original protocol, fixed partition, not
rerun}; Appendix~\ref{app:phase7} adds an independent-rate sensitivity check of
the qualitative dominance only, and Appendix~\ref{app:ext} recomputes ImageNetV2
and ImageNet-Sketch under the revised roles \\
Pilot readout ladder, selective-risk grid, refitted-readout diagnostic &
\textbf{original protocol, not rerun}; reported descriptively \\
Random score-preserving shifts and CIFAR-10H case study
(Appendix~\ref{app:ext}) & \textbf{post-audit extensions, not preregistered};
protocols hashed before results; the rank summary and the same-proxy comparison
are post-hoc \\
Architecture-breadth cohort (Appendix~\ref{app:breadth}) & \textbf{post-audit
extension, not preregistered}; checkpoint list and protocol hashed before any
metric was computed; the six-checkpoint cohort is unchanged \\
Post-hoc diagnostics of the final revision (Appendix~\ref{app:r7}) & \textbf{post-hoc};
recomputed from the frozen partitions and results; no model inference; no
preregistered outcome re-scored \\
\bottomrule
\end{tabular}
\end{center}

\noindent
A historical label records where a number came from; it does not mean the number
was recomputed, and a structural invariance that holds under any rate estimator
does not give a historical statistic a finite-sample guarantee. Not everything
else reported here was preregistered either: the
synthetic theorem audit preceded the protocols, and the readout-refinement
comparison of Appendix~\ref{app:ladder}, the witness verification of
Appendix~\ref{app:witness}, the corruption-block dependence sensitivity of
Appendix~\ref{app:phase7}, the numerical confirmations in
Appendix~\ref{app:ident}, the selective-risk grid of Appendix~\ref{app:selective}
(which had no dedicated protocol and is reported descriptively) and the
partial-correlation statistic (Pearson correlation of OLS residuals, with intercept,
over all 24 raw, unranked (model, state) rows, without clustering adjustment)
reported in Appendix~\ref{app:ladder} are
post-hoc analyses of already-frozen outputs. They
are labelled as such, are reported descriptively, and are not used as primary
evidence for any claim in the main text.

\section{Preregistered natural-target disambiguation}\label{app:phase7}

\paragraph{Preregistration.} The protocol was written and hashed before any
result was computed, and fixes the endpoint, the statistic, the precondition and
the three admissible outcomes in advance. No result was inspected until the
analysis was run.

\paragraph{Why the analysis exists.} As released, the three candidate source
metrics rank the six checkpoints identically
(pairwise Spearman $+1.000$ for ECE--$G_R$, ECE--FAUC and $G_R$--FAUC), so any
comparison among them is uninformative. This is a design-power limitation, not
evidence that the metrics are equivalent. A post-audit breadth extension in
Appendix~\ref{app:breadth} shows that these identical rankings do not persist over
the eighteen-checkpoint cohort.

\paragraph{Source states.} Two states are used: \emph{as released} ($T{=}1$), and
the \emph{fitted temperature-scaled} state whose per-model, per-fold temperature is
read unchanged from the controlled study and never refitted. Quoting the
fold-averaged value for each model, these are $1.13$, $0.62$, $0.76$, $0.86$,
$0.84$ and $0.84$ for ResNet-50 V1/V2, ConvNeXt-T, ViT-B/16, Swin-T and
EfficientNet-B0; the five per-fold temperatures of a model differ by at most
$0.007$, and the analysis uses the per-fold values, not these averages. Temperature is never fitted on
any target data. Under fitted scaling the identical ranking is broken:
ECE--$G_R$ $=-0.257$, ECE--FAUC $=-0.486$, $G_R$--FAUC $=+0.771$. Because
temperature scaling also reorders a multiclass top-label score, this state breaks
the identical ranking without isolating calibration from resolution, and we use it
only as a matched-calibration comparison.

\paragraph{Unit of analysis and statistic.} For each target domain the six models
are ranked by each source metric and by the realised drift norm, and a Spearman
correlation is taken across models; the reported statistic is the mean of these
within-domain correlations, with a domain-level bootstrap interval ($2{,}000$
resamples). Six models alone give a single correlation almost no power, and
pooling model~$\times$~domain observations would ignore domain clustering;
averaging a within-domain statistic over the $75$ ImageNet-C domains respects the
clustering.

\paragraph{Primary endpoint.} $L_{\mathrm{comp}}$ only, because the composition
channel is the only part of natural drift that the source profile bounds.

\begin{center}
\begin{tabular}{lcccc}
\toprule
benchmark & domains & ECE & $G_R$ & FAUC \\
\midrule
ImageNet-C & 75 & $-0.093$ $[-0.18,+0.01]$ & $+0.595$ & $+0.634$ $[+0.54,+0.72]$ \\
ImageNet-Sketch & 1 & $+0.257$ & $+0.829$ & $+0.486$ \\
ImageNetV2 & 1 & $+0.257$ & $-0.029$ & $+0.200$ \\
\bottomrule
\end{tabular}
\end{center}

\paragraph{Dependence sensitivity (post-hoc, not preregistered).} The
preregistered interval resamples the $75$ ImageNet-C domains independently, but
$75=15$ corruptions $\times\,5$ severities and the five severities of one
corruption share a generative process and the same underlying base images, so
they are not independent sampling units and the preregistered interval is
plausibly too narrow. We therefore recompute the same statistic under a
\emph{corruption-block} bootstrap, resampling the $15$ corruption families with
replacement and taking all five severities of a drawn family together. The
preregistered analysis is unchanged; this is reported as a sensitivity check.

\begin{center}
\begin{tabular}{lccc}
\toprule
predictor & mean $\varrho$ & domain bootstrap (preregistered) & corruption-block (post-hoc) \\
\midrule
ECE   & $-0.093$ & $[-0.185,+0.006]$ & $[-0.240,+0.064]$ \\
$G_R$ & $+0.595$ & $[+0.510,+0.672]$ & $[+0.429,+0.717]$ \\
FAUC  & $+0.634$ & $[+0.542,+0.715]$ & $[+0.464,+0.765]$ \\
\bottomrule
\end{tabular}
\end{center}

Blocking widens every interval by a factor of $1.6$--$1.8$, which is the expected
direction. All three readings survive it: the FAUC
interval still excludes zero, the ECE interval still contains zero, and the two
still do not overlap. The result is also not carried by one corruption:
$14$ of the $15$ families have a positive mean within-domain correlation for
FAUC, ranging from $+0.98$ (JPEG compression) to $+0.38$ (contrast), with fog the
single negative family at $-0.20$. This addresses dependence among the five
severities of a corruption; it does \emph{not} address the dependence induced by
all fifteen families sharing the same underlying base images, so it is reported
as a sensitivity analysis and not as a corrected interval. Reproduced by
\texttt{posthoc\_checks/block\_bootstrap.py}.

\paragraph{Single-domain limitation.} ImageNet-Sketch and ImageNetV2 provide one
target domain each, so a domain-level bootstrap resamples a single point and the
bracketed quantities are not usable intervals. They are descriptive, they
disagree with each other, and no claim rests on them.

\paragraph{Null results on the ungoverned channels.} On ImageNet-C in the fitted
state, association with \emph{total} drift is $+0.128$ (ECE), $+0.022$ ($G_R$),
$+0.055$ (FAUC), and association with the \emph{conditional} term is likewise
near zero. Source fragility is associated with the channel the theory governs
and with neither channel outside it. Such an association would not violate the
decomposition --- which is an algebraic identity and constrains no correlation ---
but it would weaken the channel-specific reading we give here.

\begin{figure}[h]
\centering
\includegraphics[width=0.48\textwidth]{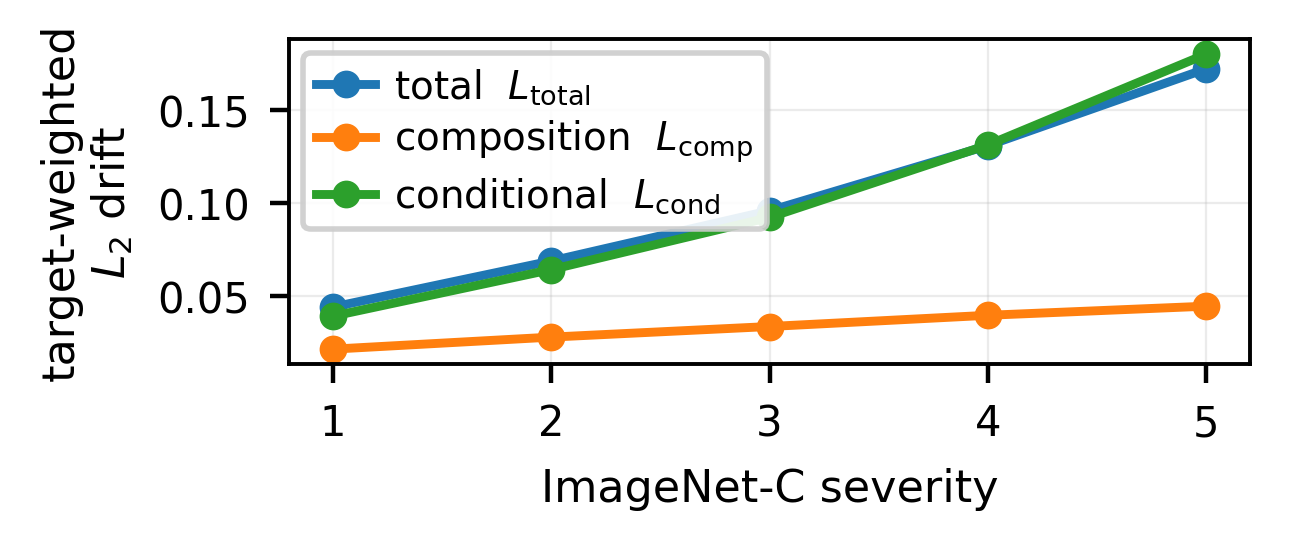}
\caption{\textbf{The composition-to-total norm ratio decreases with corruption
severity.} \emph{Original protocol, fixed partition; not recomputed.} ImageNet-C,
macro-averaged over fifteen corruptions and six architectures. The conditional
term tracks the total; the governed composition term grows far more slowly, and
$L_{\mathrm{comp}}/L_{\mathrm{total}}$ falls from $50.6\%$ at severity~1 to
$33.0\%$ at~5. Norms are not additive, so this is a ratio and not a share.}
\label{fig:severity}
\end{figure}

\begin{table}[h]
\caption{\emph{Original protocol, fixed partition; not recomputed under the
revised data roles.} Reliability drift under natural shift, averaged over six
checkpoints and domains, with $L$ and $U$ as defined in
Section~\ref{sec:natural}. The independent-rate check of
Appendix~\ref{app:phase7} tests the qualitative dominance only; it does not
regenerate these coefficients. The $L$ are
quadratic means of binwise drift, so $L_{\mathrm{total}}\neq L_{\mathrm{comp}}+
L_{\mathrm{cond}}$ despite the binwise identity being exact.}
\label{tab:natural}
\vspace{3pt}
\centering\small
\begin{tabular}{lcccc}
\toprule
Target & $L_{\mathrm{total}}$ & $L_{\mathrm{comp}}$ & $L_{\mathrm{cond}}$ & median $U$ \\
\midrule
ImageNet-C (75 domains) & 0.103 & 0.034 & 0.102 & 0.63 \\
ImageNetV2              & 0.058 & 0.006 & 0.055 & 0.37 \\
ImageNet-Sketch         & 0.240 & 0.024 & 0.223 & 0.89 \\
\bottomrule
\end{tabular}
\end{table}

\paragraph{Source rates independent of the frozen partition (post-audit).}
The decomposition reads source cell rates, so it belongs to group~(iv) of
Appendix~\ref{app:claims}. The natural-shift branch keeps the \emph{original}
frozen partition $(T_{\mathrm{old}},B_{\mathrm{old}},R_{\mathrm{old}})$, because
target penultimate features are not cached and the partition cannot be refitted
for target domains. That partition was fitted on the whole outer-training fold
$\{\text{fold}\neq f\}$, and its $\theta_{\mathrm{shrunk}}$ was estimated on the
same $40{,}000$ rows, so those rates are not independent of the cells they
describe. The source proportions $p_{bk}$ are unaffected: they use held-out
$(B,R)$ counts only, never labels.

Independence is recoverable here without new inference, because the outer
held-out fold never entered the fit. Reconstructing the index sets from the
recorded fold assignment and checking the reconstruction against the stored
cell counts (exact for all $30$ model--fold pairs), the held-out fold shares
$0$ rows with the fitting set, while a half of the training fold shares
$20{,}000$ of $20{,}000$. We therefore re-estimate the rates of the
\emph{same} frozen cells on the held-out fold and recompute the binwise
decomposition over $77$ cached target domains $\times$ five folds
($385$ domain--fold units per checkpoint, $380$ for ViT-B/16).

\begin{center}\footnotesize\setlength{\tabcolsep}{4.5pt}
\begin{tabular}{lcccccc}
\toprule
& \multicolumn{2}{c}{original (training-fold rates)}
& \multicolumn{2}{c}{held-out rates, $p_{bk}$ as frozen}
& \multicolumn{2}{c}{held-out split: rates $\perp$ $p$} \\
\cmidrule(lr){2-3}\cmidrule(lr){4-5}\cmidrule(lr){6-7}
Checkpoint & $L_{\mathrm{comp}}/L_{\mathrm{total}}$ & cond.\ larger
           & $L_{\mathrm{comp}}/L_{\mathrm{total}}$ & cond.\ larger
           & $L_{\mathrm{comp}}/L_{\mathrm{total}}$ & cond.\ larger \\
\midrule
ResNet-50 V1 & 0.210 & $100\%$ & 0.187 & $100\%$ & 0.190 & $100\%$ \\
ResNet-50 V2 & 0.469 & $67\%$  & 0.474 & $71\%$  & 0.438 & $81\%$  \\
ConvNeXt-T   & 0.927 & $70\%$  & 0.913 & $72\%$  & 0.956 & $76\%$  \\
ViT-B/16     & 0.222 & $100\%$ & 0.208 & $100\%$ & 0.232 & $100\%$ \\
Swin-T       & 0.343 & $99\%$  & 0.327 & $100\%$ & 0.347 & $99\%$  \\
EffNet-B0    & 0.299 & $100\%$ & 0.272 & $100\%$ & 0.280 & $100\%$ \\
\bottomrule
\end{tabular}
\end{center}

\noindent
The composition-to-total norm ratio is the mean over domain--fold units of
$L_{\mathrm{comp}}/L_{\mathrm{total}}$, each $L$ a target-mass-weighted quadratic
mean of binwise drifts; ``cond.\ larger'' is the fraction of domain--fold units
with $L_{\mathrm{cond}}>L_{\mathrm{comp}}$. Moving to independent rates changes the norm ratio
$L_{\mathrm{comp}}/L_{\mathrm{total}}$ by at most \NatShareShiftIndep{} in the
middle column and \NatShareShiftDisjoint{} in the right one, as an absolute change
in that ratio. The ratio is not a share of an additive decomposition and is not
confined to $[0,1]$: the components can cancel, so with
$\delta_{\mathrm{comp}}=0.2$ and $\delta_{\mathrm{cond}}=-0.1$ the total is $0.1$
and the ratio is $2$. Nor does the table say that no domain--fold unit reverses:
the fraction of units in which the conditional term is the larger moves between
columns (for example $67\%\to71\%\to81\%$ for ResNet-50 V2), and what is stable
is that for every checkpoint a majority of units keep the conditional term
larger. The binwise identity closes to
$\le6.1\times10^{-16}$ in every variant, which checks the algebra and not the
components: with source-rate error $e$ the components move by $(q-p)^{\top}e$ and
$-q^{\top}e$ while the identity still closes exactly
(Appendix~\ref{app:claims}). Domain--fold units are not independent
observations, and no interval is attached to these means. The exercise holds the
partition fixed, so it establishes insensitivity to the rate estimator, not to
the choice of partition or readout.

\paragraph{Structural coupling.} $\delta_{\mathrm{comp}}=\sum_k(q_k-p_k)
[\theta_P(b,k)-\mu_P(b)]$ is a functional of the same source cell-rate estimates
that define $G_R$ and the profile, so part of the association is structural. The
target-side factor $(q_k-p_k)$ is genuinely observed and is not produced by the
fragility optimiser, so the association is not definitional. The contrast we rely
on is that ECE --- which collapses the cell rates to bin-averaged correctness and
uses none of the within-bin contrasts --- has an interval that includes zero.

\paragraph{Preregistered result.} Under the prespecified decision rule, this
corresponds to Outcome~B (mixed): the direction is not stable across all three
benchmarks. ImageNet-C --- the only benchmark with repeated
target domains, and hence the only one supporting uncertainty quantification
under the preregistered analysis --- supports the preregistered direction. This
protocol fixed the natural-target analysis before its results were inspected, and
as preregistered that comparison is not reattempted; the later preregistered phases
of Appendices~\ref{app:fscert} and~\ref{app:monotone} address separate
finite-sample and monotone-reparameterisation questions.

\section{Monotone score reparameterisation}\label{app:monotone}

Section~\ref{sec:cannotsee} reports fragility across calibration states obtained
by multiclass temperature scaling. That family is not a reparameterisation of the
scalar top-label score: rescaling the logits can reorder examples, so it moves
resolution as well as calibration (Appendix~\ref{app:claims}). This appendix
records a preregistered intervention designed to remove exactly that ambiguity,
by acting \emph{only} on the already-computed scalar score through a strictly
increasing map.

\subsection{Invariance under a fixed strictly increasing bijection}

The following is elementary and is \textbf{not claimed as novel mathematics}. It
is stated because it converts the empirical predictions below from hypotheses
into consequences. In this lemma only, subscripts $S$ and $S'$ on $\mu$, $G$ and
$\mathcal F^{\pm}$ index the score representation; the source law $P$ is fixed.

\begin{lemma}[Score reparameterisation]\label{lem:reparam}
Let $S$ be a scalar score with range $I$, let $g:I\to I'$ be a strictly increasing
measurable bijection with measurable inverse, and put $S'=g(S)$. Then
\begin{enumerate}\itemsep2pt
\item $\sigma(S')=\sigma(S)$;
\item for $Q\ll P$, $\;Q_{S'}=P_{S'}$ if and only if $Q_S=P_S$, and
      $\mathbb E[r\mid S'=g(s)]=\mathbb E[r\mid S=s]$ for $r=dQ/dP$;
\item $\mu_{S'}(g(s))=\mu_S(s)$ and
      $\mathrm{Law}(\eta-\mu_{S'}(S')\mid S'{=}g(s))
       =\mathrm{Law}(\eta-\mu_S(S)\mid S{=}s)$;
\item $G_{S'}(g(s))=G_S(s)$ and, for every $\rho\ge0$,
      $\mathcal F^{\pm}_{S'}(g(s),\rho)=\mathcal F^{\pm}_S(s,\rho)$;
\item $\{S'\ge g(t)\}=\{S\ge t\}$, so coverage, the accepted set and the
      selective risk of any threshold policy are unchanged;
\item calibration is \emph{not} invariant:
      $\mu_{S'}(g(s))-g(s)=\mu_S(s)-g(s)$, which differs from $\mu_S(s)-s$
      whenever $g(s)\neq s$.
\end{enumerate}
\end{lemma}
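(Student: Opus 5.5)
The plan is to derive everything from item~1, the equality of $\sigma$-algebras, because every level-set object in the paper is defined through conditioning on $\sigma(S)$. For item~1, $S'=g(S)$ is a measurable function of $S$, which gives $\sigma(S')\subseteq\sigma(S)$. Since $g^{-1}$ is measurable and $S=g^{-1}(S')$, the reverse inclusion also holds. Equal $\sigma$-algebras give equal conditional expectations, so $\mathbb E[W\mid S']=\mathbb E[W\mid S]$ $P$-a.s.\ for any integrable $W$. To pass to the functional form used in the paper, I will take versions $h_{S'}$ and $h_S$ with $\mathbb E[W\mid S']=h_{S'}(S')$ and $\mathbb E[W\mid S]=h_S(S)$. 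Then $h_{S'}(g(S))=h_S(S)$ a.s., that is, $h_{S'}\circ g=h_S$ $P_S$-a.e. Every subsequent ``at $s$'' statement is to be read in this $P_S$-a.e.\ sense, following the conventions of Appendix~\ref{app:conventions}.

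Item~2 applies this with $W=r$. Combined with Lemma~\ref{lem:preserve}, the condition $\mathbb E_P[r\mid S']=1$ a.s.\ is the same event as $\mathbb E_P[r\mid S]=1$ a.s., so $Q_{S'}=P_{S'}$ iff $Q_S=P_S$. Item~3 takes $W=\eta$ to get $\mu_{S'}\circ g=\mu_S$. This implies $\mu_{S'}(S')=\mu_S(S)$ pointwise, so the two centred propensities are the same random variable $Z$. The regular conditional distributions of $Z$ given $\sigma(S')=\sigma(S)$ therefore agree as kernels, up to the reindexing $s\mapsto g(s)$. I expect this to be the main obstacle. The issue is that only the conditional law of $Z$ on a.e.\ level is determined, not a single fixed version. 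I will handle it by taking a regular version $\kappa_S(s,\cdot)$ and defining $\kappa_{S'}(s',\cdot)=\kappa_S(g^{-1}(s'),\cdot)$. I then check that $\kappa_{S'}$ is a valid version for $S'$ by verifying the disintegration identity on sets $\{S'\in A\}=\{S\in g^{-1}(A)\}$. Uniqueness of regular conditional distributions up to null sets finishes the argument.

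Item~4 follows from item~3. $G$ is the second moment of that conditional law. The profile~\eqref{eq:fragility} is, as noted before Proposition~\ref{prop:transport}, a functional of the conditional law of $Z$ alone, because the feasible set $\mathcal A_s(\rho)$ is defined by conditional moments of $r$ under the same conditional measure. More concretely, Theorem~\ref{thm:profile} expresses $\mathcal F^\pm$ through $G$, $z_\pm$, $p_+$, $M_1$ and $M_2$, all of which are law functionals; the case $G=0$ gives $0$ on both sides. For item~5, strict monotonicity gives $S\ge t\iff g(S)\ge g(t)$. The accepted set is therefore literally the same event, and coverage and $\Pr(C=0\mid\cdot)$ on that event coincide. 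Item~6 is immediate from item~3: the residual at the transformed level is $\mu_{S'}(g(s))-g(s)=\mu_S(s)-g(s)$. A one-line example, such as $\eta\equiv s$ with $g(s)\neq s$, shows that it genuinely changes.
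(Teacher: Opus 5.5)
Your proposal is correct and follows the paper's argument: everything is derived from $\sigma(S')=\sigma(S)$ and the fact that conditional expectations depend only on the conditioning $\sigma$-field, and items~5 and~6 follow by strict monotonicity and direct substitution. The differences are in detail only: you make the transfer of the regular conditional law in item~3 explicit through the kernel $\kappa_S(g^{-1}(s'),\cdot)$, and for item~4 you invoke the closed form of Theorem~\ref{thm:profile}, whereas the paper notes that the two programmes have the same feasible set and objective (and recovers $G$ by letting $\rho\downarrow0$), which does not need that theorem.
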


\begin{proof}
(i) $S'=g(S)$ is $\sigma(S)$-measurable and $S=g^{-1}(S')$ is
$\sigma(S')$-measurable, so the two $\sigma$-fields coincide. (ii) For Borel
$A\subset I'$, $\{S'\in A\}=\{S\in g^{-1}(A)\}$ with $g^{-1}(A)$ Borel, so
$Q_{S'}(A)=Q_S(g^{-1}(A))$ and likewise for $P$; conversely $g(A')$ is Borel for Borel $A'\subset I$ because $g$ is a Borel isomorphism onto $I'$. Conditional
expectations depend on the conditioning $\sigma$-field alone, so by (i)
$\mathbb E[r\mid S']=\mathbb E[r\mid S]$ as random variables, and likewise
$\mathbb E[(r-1)^2\mid S']=\mathbb E[(r-1)^2\mid S]$: the score-preserving
constraint and the $\chi^2$ budget define the same feasible set of weights.
(iii) is (i) applied to $\eta$, with the centring constant equal by the first
identity. (iv) The feasible set is the same by (ii) and the objective
$\mathbb E[(r-1)(\eta-\mu)\mid S=s]$ is the same function of $r$ by (iii), so the
two programmes coincide; taking $\rho\to0$ gives the variance statement.
(v) is strict monotonicity. (vi) is immediate, and is what makes the intervention
useful: everything at the level of the $\sigma$-field is frozen while the score
\emph{value}, which is what calibration compares reliability against, is free.
\end{proof}

\subsection{Cross-fitted implementation, and why the lemma is applied fold by fold}

For each architecture and each fold $f$ we fit
\[
  g_f(s)=\mathrm{sigmoid}\big(a_f\,\mathrm{logit}(s)+c_f\big),
  \qquad a_f=e^{\alpha_f}>0,
\]
on the other four folds by the binary negative log-likelihood of correctness, and
apply the frozen $g_f$ to the held-out fold. Monotonicity holds by construction:
the optimiser is unconstrained in $(\alpha_f,c_f)$ and $a_f>0$ for every real
$\alpha_f$, so no monotonicity constraint can be active. No held-out correctness
is used to fit any $g_f$. This is calibration of the scalar
confidence-as-correctness probability; it does not touch the multiclass logits,
the predicted labels, the penultimate representations or correctness.

Because $g_f$ differs across folds, \textbf{Lemma~\ref{lem:reparam} is applied
within each held-out fold, conditional on that fold's fitted map}, and not to the
pooled out-of-fold score. The pooled score is a \emph{fold-wise} strictly
monotone reparameterisation, not a single global one --- the same convention the
main experiment already uses for its per-fold temperature. We therefore make no
$\sigma$-field claim about the pooled score without fold identity, and the
ordering statements below are within-fold statements.

\subsection{Primary structural analysis}

Within each fold the raw equal-mass bin boundaries are pushed through that fold's
map, $\mathrm{edge}'_j=g_f(\mathrm{edge}_j)$; the transformed-score quantiles are
\emph{not} independently refitted. The frozen $T{=}1$ \texttt{R2\_within} readout
is retained verbatim, since $R$ is an auxiliary observable readout and not part of
the scalar-score intervention. Table~\ref{tab:monoinv} reports the outcome.

\begin{table}[h]
\caption{Structural invariance under the fold-wise monotone map, all six
architectures. Counts are over all $250{,}000$ train and evaluation bin
assignments per model. Every entry is exactly zero; the preregistered tolerance
was $10^{-12}$.}
\label{tab:monoinv}
\vspace{3pt}
\centering\small
\begin{tabular}{lcccccccccc}
\toprule
& \multicolumn{4}{c}{changed assignments / violations}
& \multicolumn{6}{c}{max absolute difference} \\
\cmidrule(lr){2-5}\cmidrule(lr){6-11}
& bins & frozen $R$ & rank & ties & cells & $G_R$ & $\mathcal F^{\pm}$
& FAUC$^+$ & FAUC$^-$ & sel.\ risk \\
\midrule
ResNet-50 V1    & 0 & 0 & 0 & 0 & $0$ & $0$ & $0$ & $0$ & $0$ & $0$ \\
ResNet-50 V2    & 0 & 0 & 0 & 0 & $0$ & $0$ & $0$ & $0$ & $0$ & $0$ \\
ConvNeXt-T      & 0 & 0 & 0 & 0 & $0$ & $0$ & $0$ & $0$ & $0$ & $0$ \\
ViT-B/16        & 0 & 0 & 0 & 0 & $0$ & $0$ & $0$ & $0$ & $0$ & $0$ \\
Swin-T          & 0 & 0 & 0 & 0 & $0$ & $0$ & $0$ & $0$ & $0$ & $0$ \\
EffNet-B0       & 0 & 0 & 0 & 0 & $0$ & $0$ & $0$ & $0$ & $0$ & $0$ \\
\bottomrule
\end{tabular}
\end{table}

\paragraph{What this does and does not establish.} Once the bin index $B$ and the
readout $R$ are bit-identical, equality of the downstream quantities is
\emph{arithmetic}: the cell table, $G_R$, $\mathcal F^{\pm}$, both FAUCs and
selective-risk fragility are then computed from literally the same arrays. We do
not present those equalities as independent empirical discoveries, and the analysis
verifies the structural prediction only for the \emph{fixed} restricted readout it
uses; it is not evidence that an arbitrary fitted readout is invariant
(Appendix~\ref{app:monosec}). The empirical checks are: the learned $g_f$
materially improves held-out calibration; within every held-out fold it preserves
the score ordering exactly; pushing the bin boundaries through $g_f$ reproduces the
raw bin assignment bit-identically; and the frozen readout assignments are
bit-identical, the last being a bookkeeping check since $R$ is retained by
construction.

\subsection{Calibration endpoints}

\begin{table}[h]
\caption{Cross-fitted calibration under the fold-wise monotone map. ECE is the
$20$-bin equal-mass estimator of Appendix~\ref{app:estimators} with the bin
partition held fixed; ECE$_{15}$ is the equal-width secondary metric. All six
architectures exceed the preregistered $20\%$ relative-reduction bar.}
\label{tab:monoece}
\vspace{3pt}
\centering\footnotesize\setlength{\tabcolsep}{4.5pt}
\begin{tabular}{lccccccc}
\toprule
& \multicolumn{2}{c}{ECE} & & \multicolumn{2}{c}{binary NLL}
& \multicolumn{2}{c}{ECE$_{15}$} \\
\cmidrule(lr){2-3}\cmidrule(lr){5-6}\cmidrule(lr){7-8}
& $T{=}1$ & monotone & reduction & $T{=}1$ & monotone & $T{=}1$ & monotone \\
\midrule
ResNet-50 V1    & 0.03614 & 0.01288 & $64.4\%$ & 0.39579 & 0.37929 & 0.03688 & 0.01349 \\
ResNet-50 V2    & 0.41178 & 0.02162 & $94.8\%$ & 0.80867 & 0.38869 & 0.41178 & 0.02064 \\
ConvNeXt-T      & 0.16933 & 0.01477 & $91.3\%$ & 0.44668 & 0.35481 & 0.16933 & 0.01312 \\
ViT-B/16        & 0.05589 & 0.02577 & $53.9\%$ & 0.36461 & 0.34621 & 0.05605 & 0.02328 \\
Swin-T          & 0.06794 & 0.01989 & $70.7\%$ & 0.36261 & 0.34134 & 0.06794 & 0.01740 \\
EffNet-B0       & 0.06892 & 0.01408 & $79.6\%$ & 0.39254 & 0.37488 & 0.06847 & 0.01156 \\
\bottomrule
\end{tabular}
\end{table}

The median relative reduction is $75.1\%$ and the binary NLL improves for all six.
This experiment was run under the original protocol and has not been re-run under
the revised data roles. Its as-released baseline reproduces the \emph{original-protocol}
as-released state bit-identically (maximum absolute difference $0$ over ECE, $G_R$
and both FAUCs); that baseline is not the post-audit column of
Table~\ref{tab:models}, whose ResNet-50 V1 as-released ECE is $0.038$ against the
$0.03614$ of Table~\ref{tab:monoece}. The two baselines are different analyses and
we do not treat them as one. Nothing in the structural conclusion depends on which
baseline is used: the analysis shows that pushing a frozen partition through a strictly
increasing map leaves the downstream arrays identical, which is an arithmetic
consequence of Lemma~\ref{lem:reparam} and holds under either rate estimator.

\subsection{Fitted maps and numerical diagnostics}

\begin{table}[h]
\caption{Fitted parameters and rank diagnostics. Spearman and Kendall are
\emph{within-fold}, where a single $g_f$ acts; no optimisation reached a numerical
boundary in any of the $30$ fold fits. The edge gap is
$\max_j|g(\mathrm{edge}_j)-\mathrm{edge}^{\mathrm{refit}}_j|$, the distance between
the pushed-forward boundaries the primary analysis uses and independently refitted
transformed-score quantiles.}
\label{tab:monodiag}
\vspace{3pt}
\centering\footnotesize\setlength{\tabcolsep}{4.5pt}
\begin{tabular}{lccccccc}
\toprule
& $a_f$ range & $c_f$ range & Spearman & Kendall & \multicolumn{2}{c}{pooled ties}
& edge gap \\
\cmidrule(lr){6-7}
& & & & & before & after & \\
\midrule
ResNet-50 V1    & $0.710$--$0.719$ & $-0.128$--$-0.091$ & $1.000$ & $1.000$ & 29 & 4 & $2.2{\times}10^{-9}$ \\
ResNet-50 V2    & $1.430$--$1.457$ & $+2.554$--$+2.570$ & $1.000$ & $1.000$ & 29 & 4 & $2.2{\times}10^{-9}$ \\
ConvNeXt-T      & $1.342$--$1.352$ & $+1.026$--$+1.043$ & $1.000$ & $1.000$ & 29 & 4 & $2.3{\times}10^{-9}$ \\
ViT-B/16        & $1.343$--$1.360$ & $+0.124$--$+0.155$ & $1.000$ & $1.000$ & 29 & 4 & $0.7{\times}10^{-9}$ \\
Swin-T          & $1.251$--$1.264$ & $+0.351$--$+0.377$ & $1.000$ & $1.000$ & 29 & 4 & $1.9{\times}10^{-9}$ \\
EffNet-B0       & $1.081$--$1.091$ & $+0.476$--$+0.495$ & $1.000$ & $1.000$ & 29 & 4 & $1.8{\times}10^{-9}$ \\
\bottomrule
\end{tabular}
\end{table}

\paragraph{Pooled ties.} The pooled tie count falls from $29$ to $4$. This is
\emph{not} a tie break by a monotone map: samples sharing a raw score but lying in
different folds receive different $g_f$ and so separate in the pooled score.
Within every fold the tie breaks are zero. The arithmetic is exact --- for
ResNet-50 V1, $4$ of the $29$ duplicate-value groups lie entirely inside one fold
and $25$ span two or more, predicting precisely the $4$ pooled ties observed.

\paragraph{Pushed edges versus refitted quantiles.} The $\sim\!2\times10^{-9}$ gap
in Table~\ref{tab:monodiag} is expected and is reported as a distance only: linear
interpolation between order statistics does not commute with a nonlinear $g$, so
$g(\mathrm{quantile}(S))\neq\mathrm{quantile}(g(S))$. The primary analysis uses the
pushed-forward edges, as preregistered.

\subsection{Deterministic stress maps}

Two label-free maps fixed in advance, $g_{\mathrm{flat}}(s)=
\mathrm{sigmoid}(0.5\,\mathrm{logit}\,s)$ and $g_{\mathrm{sharp}}(s)=
\mathrm{sigmoid}(2\,\mathrm{logit}\,s)$, act on the scalar score alone and are not
temperature scaling. They are controls, not candidate transformations
(Table~\ref{tab:monostress}), and no outcome category depends on them.

\begin{table}[h]
\caption{Deterministic stress maps. Large deliberate changes in numerical
confidence leave every structural quantity exactly unchanged.}
\label{tab:monostress}
\vspace{3pt}
\centering\footnotesize\setlength{\tabcolsep}{4.5pt}
\begin{tabular}{lcccccccc}
\toprule
& \multicolumn{4}{c}{$g_{\mathrm{flat}}$} & \multicolumn{4}{c}{$g_{\mathrm{sharp}}$} \\
\cmidrule(lr){2-5}\cmidrule(lr){6-9}
& ECE & bins chg.\ & merged & $\max|\Delta\mathcal F|$
& ECE & bins chg.\ & merged & $\max|\Delta\mathcal F|$ \\
\midrule
ResNet-50 V1    & 0.04713 & 0 & 0 & $0$ & 0.09074 & 0 & 146 & $0$ \\
ResNet-50 V2    & 0.36709 & 0 & 0 & $0$ & 0.46333 & 0 & 0 & $0$ \\
ConvNeXt-T      & 0.23640 & 0 & 0 & $0$ & 0.10666 & 0 & 0 & $0$ \\
ViT-B/16        & 0.16995 & 0 & 0 & $0$ & 0.05472 & 0 & 0 & $0$ \\
Swin-T          & 0.16788 & 0 & 0 & $0$ & 0.05107 & 0 & 0 & $0$ \\
EffNet-B0       & 0.14557 & 0 & 0 & $0$ & 0.07057 & 0 & 0 & $0$ \\
\bottomrule
\end{tabular}
\end{table}

The map $g_{\mathrm{sharp}}$ merges $146$ previously distinct ResNet-50 V1 scores.
The map is strictly increasing as a function on $[0,1]$; its \emph{finite-precision
evaluation} saturates near one, and we therefore do not claim that every
deterministic stress-map implementation is one-to-one in floating point. The
merging occurs inside the top bin, so no bin assignment changes and no structural
quantity moves. No exact $S=0$ or $S=1$ occurs in any of the six models, and no
clipping rule is used anywhere.

\subsection{Secondary analysis: refitting the readout}\label{app:monosec}

\begin{table}[h]
\caption{Secondary end-to-end analysis, in which the bin quantiles \emph{and} the
low-capacity logistic readout are refitted on the transformed confidence
coordinate. Diagnostic only.}
\label{tab:monosec}
\vspace{3pt}
\centering\small
\begin{tabular}{lcccc}
\toprule
& ARI$(R)$ & exact $R$ agreement & relative $|\Delta G_R|$ & relative $|\Delta$FAUC$^+|$ \\
\midrule
ResNet-50 V1    & 0.136 & 0.379 & $108.7\%$ & $27.2\%$ \\
ResNet-50 V2    & 0.615 & 0.795 & $1.9\%$ & $0.5\%$ \\
ConvNeXt-T      & 0.584 & 0.775 & $9.9\%$ & $4.5\%$ \\
ViT-B/16        & 0.611 & 0.796 & $25.4\%$ & $11.3\%$ \\
Swin-T          & 0.575 & 0.770 & $27.5\%$ & $13.4\%$ \\
EffNet-B0       & 0.672 & 0.832 & $13.6\%$ & $7.2\%$ \\
\bottomrule
\end{tabular}
\end{table}

The fitted finite-sample readout is parameterisation-dependent
(Table~\ref{tab:monosec}): refitting its
logistic, standardised feature pipeline and its within-bin group thresholds after
changing the confidence coordinate can alter $R$, and with it the \emph{restricted
plug-in} profile, even though the population score $\sigma$-field is
unchanged and, by Lemma~\ref{lem:reparam}, so is the population fragility target
under a bijective reparameterisation. This is not a failure of
the lemma, and the frozen-$R$ primary result is correspondingly not evidence that
an arbitrary learned readout must itself be parameterisation invariant. The
largest effect is ResNet-50 V1, which also has the smallest $G_R$, so a small
absolute change is a large relative one. We report this analysis only as a diagnostic and
base no claim on it.

\subsection{Preregistration}

The protocol was written and cryptographically hashed before any result from this
phase was computed, and prespecified the transformation family and
parameterisation, fitting objective, cross-fitting scheme, radius grid, numerical
tolerances, evaluation endpoints, and outcome criteria. A suite of synthetic
validation tests was required to pass before any six-model summary was computed;
the full protocol and verification record are provided in the supplementary
material. The observed result satisfied the prespecified criterion for Outcome~A.

\section{Post-audit empirical extensions}\label{app:ext}

This appendix reports three analyses carried out after every result above was
frozen. Each has a written protocol that was cryptographically hashed before its
results were computed, but each was designed with the paper's results known, so
none is preregistered and none is used as a preregistered endpoint. Checkpoints,
seeds, radii, benchmarks, bin counts and tolerances were fixed in those protocols
and were not changed after results were seen. Two quantities below were defined
after the corresponding results had been inspected and are marked post-hoc where
they appear: the same-proxy comparison of Appendix~\ref{app:ext:human} and the
rank summary of Appendix~\ref{app:ext:random}. Every number in this appendix is
read from one result file released with the code, which also records the protocol
hashes; the three tables are generated from it.

\subsection{Random score-preserving shifts}\label{app:ext:random}

The controlled shifts of Section~\ref{sec:cannotsee} maximise the estimated
movement given $\hat\theta$, so their agreement with held-out drift is partly
built in. This analysis replaces the optimised weights with random ones and keeps
everything else. Within each retained bin, with evaluation-fold group proportions
$p_k$ (counts only, no labels), draw $z_k\sim\mathcal N(0,1)$, centre
$a_k=z_k-\sum_jp_jz_j$ so that $\sum_kp_ka_k=0$, and set $r_k=1+t\,a_k$ with
\[
  t=\min\Big\{\big(\rho/\textstyle\sum_kp_ka_k^2\big)^{1/2},\;
  \min_{k:\,a_k<0}(-1/a_k)\Big\},
\]
so that $r\ge0$, $\sum_kp_kr_k=1$ and $\sum_kp_k(r_k-1)^2\le\rho$. No sign flip
or orientation by $\hat\theta$, correctness, $G_R$, ECE, FAUC or drift is
applied. Each of the $720{,}000$ shifts was checked against all three constraints
at tolerance $10^{-12}$, with no violation. The seed ($90210$), the number of
directions ($100$) and the per-cell random streams were fixed in the protocol.
The data roles are those of Appendix~\ref{app:estimators}:
$\widehat{\mathcal F}^{\pm}_{B,R}(b,\rho)$ is computed from $D_{\mathrm{rate}}$
rates alone, the weights are frozen, and the realised movement
$\delta_{\mathrm{random}}=\sum_kp_k(r_k-1)\theta^{\mathrm{eval}}_k$ is read from
held-out correctness on $D_{\mathrm{eval}}$. All twenty bins were retained in
each of the $60$ (checkpoint, state, fold) units, so there are
$6\times2\times5\times6\times20=7{,}200$ bin--radius cells with $100$ shifts each.
The cell, not the shift, is the unit: shifts in one cell share every estimated
quantity and all of their held-out labels.

\paragraph{Ordering.} For each cell we compare the two-sided plug-in profile
$\max(\widehat{\mathcal F}^{+}_{B,R},\widehat{\mathcal F}^{-}_{B,R})$ with the
median of $|\delta_{\mathrm{random}}|$ over its $100$ directions; the Spearman
correlation over the $7{,}200$ cells is \RsPooledPrim{}. With the largest of the $100$
movements in place of the median it is $0.925$; we treat that as secondary,
because a maximum over $100$ held-out responses is itself a selection. The
protocol did not declare a rank summary, so both correlations are post-hoc
descriptions; the cells are not independent, and we attach no interval to them.
Figure~\ref{fig:ext-random}(a) shows the cells against the diagonal. Most lie far
below it, as expected for non-optimised directions; the median random movement is
about a quarter of the profile. Most of the correlation does not come from
within-bin heterogeneity. Pooling over radii adds the common $\sqrt\rho$ scale, and
the within-bin label-permutation null of Appendix~\ref{app:r7}, which has no
heterogeneity at all, reproduces most of the rest because binomial rate noise
scales with bin accuracy in both quantities (Table~\ref{tab:r7-null}). The ordering
is therefore a weak test of the profile; the controlled shifts, whose signed drift is
centred at zero under the same null, are the informative one.

\paragraph{Sharpness.} Table~\ref{tab:ext-random} reports the per-shift ratio
$|\delta_{\mathrm{random}}|/\widehat{\mathcal F}^{\pm}_{B,R}$ on the branch
selected by the sign of $\delta_{\mathrm{random}}$, the rule fixed in the
protocol. Overall, $96.1\%$ of ratios are below one; where a ratio exceeds one it
does so by a mean of $0.30$. These are descriptive frequencies over dependent
shifts, and since $\widehat{\mathcal F}$ is a plug-in estimate a ratio above one is
not a coverage failure. Exceedance falls monotonically across $G_R$ quartiles,
from $9.8\%$ to $0.63\%$, and it also falls with radius, from about $5\%$ at
radii up to $0.3$ to $0.55\%$ at $\rho=3$. The quartile pattern is consistent with
cell-rate noise mattering most when the restricted heterogeneity it is compared
with is small; the analysis cannot separate that explanation from others. By checkpoint the median ratio lies between $0.254$ and $0.281$ and the
exceedance between $2.6\%$ and $5.3\%$. Direction sampling is a separate source
of variation: across the $100$ directions of a cell the ratio has a median
standard deviation of $0.220$, which reflects the random directions and says
nothing about dataset sampling.

\paragraph{What this does not show.} The random shifts live on the same finite
support and obey the same $\chi^2$ geometry as the optimised ones; they are not
natural shifts, they say nothing about how close natural shifts come to the worst
case, and they give $\widehat{\mathcal F}$ no finite-sample coverage. They remove
one coupling --- the optimisation of the weights against $\hat\theta$ --- and
nothing else.

\begin{figure}[h]
\centering
\includegraphics[width=\textwidth]{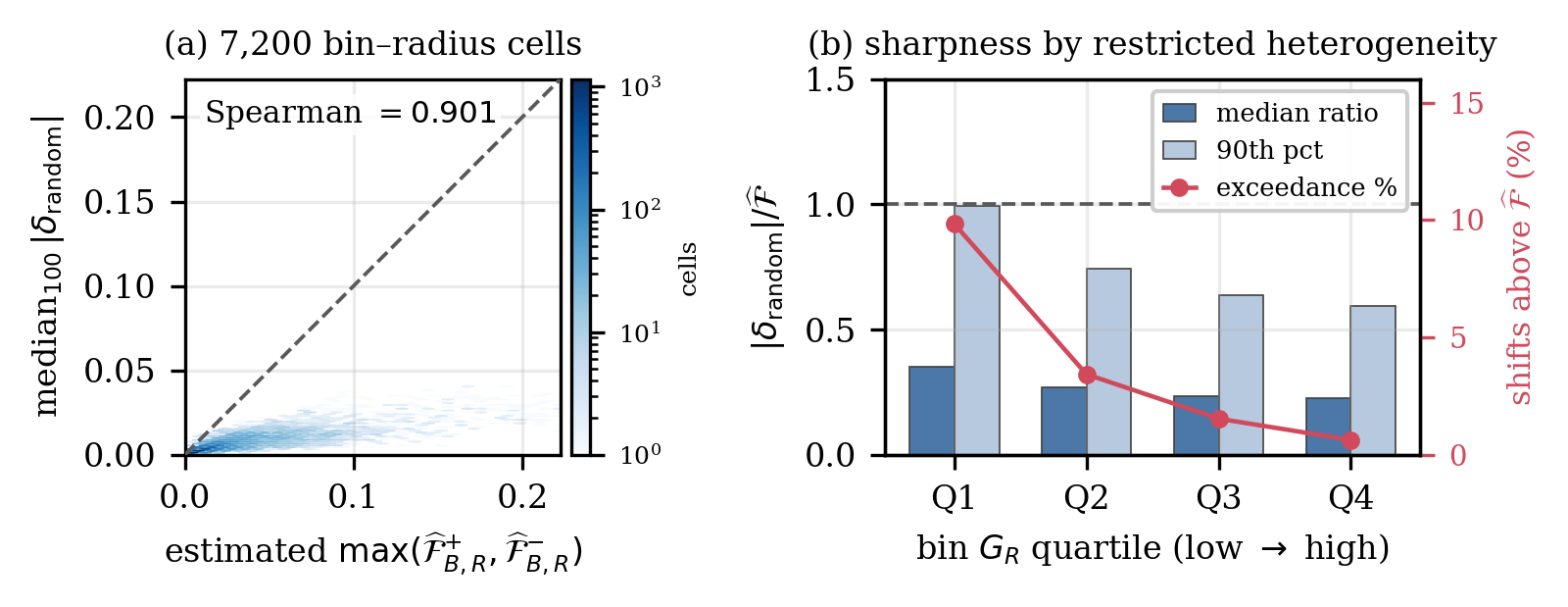}
\caption{\textbf{Random score-preserving shifts, post-audit.} (a) One point per
bin--radius cell: two-sided plug-in profile against the median held-out movement
over $100$ random directions (Spearman $0.901$, most of which a within-bin
label-permutation null reproduces; Appendix~\ref{app:r7}); dashed line $y=x$, equal axes.
(b) Per-shift ratio $|\delta_{\mathrm{random}}|/\widehat{\mathcal F}$ by quartile
of the bin's $G_R$: median and 90th percentile (bars, left axis) and the
percentage of shifts with ratio above one (line, right axis). Descriptive; the
shifts are not independent.}
\label{fig:ext-random}
\end{figure}

\begin{table}[h]
\caption{Random score-preserving shifts: the per-shift ratio $|\delta_{\mathrm{random}}|/\widehat{\mathcal F}^{\pm}_{B,R}$, with the branch chosen by the sign of $\delta_{\mathrm{random}}$. Shifts within a bin--radius cell share data and are not independent; the percentages are descriptive and are not coverage rates. Post-audit, not preregistered.}
\label{tab:ext-random}
\vspace{3pt}
\centering\small
\begin{tabular}{lrccc}
\toprule
stratum & shifts & median ratio & 90th pct. & ratio $>1$ (\%) \\
\midrule
all & 720,000 & 0.264 & 0.741 & 3.86 \\
\midrule
$G_R$ Q1 $[-0.00148,\,0.00012]$ & 180,000 & 0.352 & 0.994 & 9.84 \\
$G_R$ Q2 $[0.00012,\,0.00066]$ & 180,000 & 0.270 & 0.742 & 3.43 \\
$G_R$ Q3 $[0.00066,\,0.00235]$ & 180,000 & 0.232 & 0.638 & 1.54 \\
$G_R$ Q4 $[0.00235,\,0.03797]$ & 180,000 & 0.226 & 0.595 & 0.63 \\
\midrule
$T=1$ & 360,000 & 0.257 & 0.708 & 3.08 \\
$T=\hat T$ & 360,000 & 0.270 & 0.776 & 4.64 \\
\midrule
$\rho=0.01$ & 120,000 & 0.318 & 0.820 & 5.29 \\
$\rho=0.03$ & 120,000 & 0.318 & 0.816 & 5.16 \\
$\rho=0.1$ & 120,000 & 0.318 & 0.825 & 5.36 \\
$\rho=0.3$ & 120,000 & 0.311 & 0.806 & 4.93 \\
$\rho=1.0$ & 120,000 & 0.221 & 0.608 & 1.86 \\
$\rho=3.0$ & 120,000 & 0.157 & 0.448 & 0.55 \\
\bottomrule
\end{tabular}
\end{table}

\subsection{A human-label proxy for the correctness propensity}\label{app:ext:human}

The propensity $\eta(x)=\Pr(C=1\mid X=x)$ is never observed, and the hierarchy of
Section~\ref{sec:theory} is informative only where it varies inside a level set.
CIFAR-10H \citep{peterson2019human} gives about fifty human labels for each of the
$10{,}000$ CIFAR-10 test images \citep{krizhevsky2009learning}, which allows a
direct if approximate look. For a frozen classifier with prediction $\hat y(x)$ we
use
\[
  \eta_H(x)=\frac{\#\{\text{annotations of }x\text{ equal to }\hat y(x)\}}
                 {\#\{\text{annotations of }x\}},
\]
the agreement of human annotators with the model. This is a proxy: it measures
correctness against the human label distribution rather than against the fixed
dataset label, and it is estimated from finitely many votes. We use three public
CIFAR-10 checkpoints fixed in the protocol (ResNet-56, VGG16-BN and MobileNetV2;
none trained by us) and the $514{,}200$ annotations of $2{,}571$ annotators that
remain after attention checks are removed. The annotators are split at random,
with a seed fixed in the protocol, into two disjoint groups, so each image's
labels divide into two halves given by different people; this yields proxies
$\eta_H^A$ and $\eta_H^B$ from a mean of about $26$ votes each (range
$14$--$39$). The protocol described the split per image; the implementation
instead partitions annotators globally, a stricter separation because no
annotator contributes to both halves. The data roles
follow the main pipeline with $J=10$ bins and $K=4$ groups, chosen in the protocol
so that cells keep about $100$ rate rows; the readout (logistic regression on
confidence, margin, entropy and a $16$-dimensional principal-component projection
of the penultimate representation) is fitted on $D_{\mathrm{fit}}$ CIFAR-10
correctness. The projection uses no labels but, unlike the ImageNet pipeline, is
computed from a random $5{,}000$ of all $10{,}000$ images, so evaluation-image
features (never their labels) enter it.
Table~\ref{tab:ext-human} and Figure~\ref{fig:ext-human} give the results in
three layers, which answer different questions.

\begin{table}[h]
\caption{CIFAR-10H case study, $J=10$ bins and $K=4$ groups in each of $5$ folds, $50$ fold--bin cells per model. $G_H^{\times}$ is the cross-half covariance of the two annotator-half proxies; $G_H^{\times}$, $G_R$ and $G_R^H$ are medians over cells, in units of $10^{-3}$. Columns 4--5: the pre-specified cross-target diagnostic, whose restricted term uses the fixed CIFAR-10 label and so a different propensity from $G_H^{\times}$; Proposition~\ref{prop:hierarchy} does not order the two. Columns 6--8: the post-hoc same-proxy diagnostic, in which both terms use $\eta_H$. Ratios are medians over cells with interquartile ranges in brackets. Top-1 accuracy on the CIFAR-10 test set: ResNet-56 94.37\%, VGG16-BN 94.15\%, MobileNetV2 94.05\%.}
\label{tab:ext-human}
\vspace{3pt}
\centering\footnotesize\setlength{\tabcolsep}{3.5pt}
\begin{tabular}{lccccccc}
\toprule
 & \multicolumn{2}{c}{human proxy} & \multicolumn{2}{c}{\shortstack{cross-target\\(pre-specified)}} & \multicolumn{3}{c}{\shortstack{same proxy\\(post-hoc)}} \\
\cmidrule(lr){2-3}\cmidrule(lr){4-5}\cmidrule(lr){6-8}
model & $G_H^{\times}$ & vote noise & $G_R$ & ratio & $G_R^H$ & ratio & $\le G_H^{\times}$ \\
\midrule
ResNet-56 & 9.29 & 9.3\% & 0.021 & 0.33\% & 0.521 & 6.3\% [3.8, 10.1] & 50/50 \\
VGG16-BN & 8.33 & 9.9\% & 0.012 & 0.17\% & 0.093 & 1.7\% [0.2, 3.5] & 50/50 \\
MobileNetV2 & 9.23 & 8.3\% & 0.035 & 0.29\% & 0.251 & 2.8\% [1.3, 4.3] & 50/50 \\
\midrule
pooled & & & & 0.25\% [0.0, 2.1] & & 3.3\% [1.4, 6.3] & 150/150 \\
\bottomrule
\end{tabular}
\end{table}

\paragraph{Human-proxy heterogeneity.} Within each bin,
$G_H(b)=\operatorname{Var}(\eta_H\mid B=b)$ on evaluation images. Computed separately
from the two annotator halves, the $150$ (model, fold, bin) values agree with
Pearson correlation $0.9995$ (Figure~\ref{fig:ext-human}a). Vote noise inflates a
variance computed from vote proportions, so the primary quantity is the
cross-half covariance $G_H^{\times}(b)=\operatorname{Cov}(\eta_H^A,\eta_H^B\mid B=b)$,
whose two factors carry independent vote noise; this correction was fixed in an
addendum written after the protocol but before any heterogeneity value was
inspected. $G_H^{\times}$ is positive in all $150$ cells, and vote noise accounts for
a median $8$--$10\%$ of the uncorrected variance. Like every binned quantity in
this paper, $G_H$ also contains the variation of the mean propensity across the
scores inside a bin, so it is an upper reference for, not an estimate of,
level-set heterogeneity in the human proxy.

\paragraph{Pre-specified cross-target diagnostic.} The protocol compared the
debiased restricted $G_R$ of Proposition~\ref{prop:gunbiased}, estimated from
CIFAR-10 correctness, with $G_H^{\times}$. The pooled median ratio is $0.25\%$ and
$G_R$ is below $G_H^{\times}$ in every cell. The two terms target different
propensities, however: with one fixed label per image the correctness target is
$\{0,1\}$-valued, whereas $G_H^{\times}$ is built on the human label distribution.
Proposition~\ref{prop:hierarchy} orders heterogeneity quantities that share a
propensity and does not order these two, so this ratio is not a capture ratio,
and we report it only as the diagnostic the protocol specified.

\paragraph{Post-hoc same-proxy diagnostic.} After reading that result we added a
comparison in which both terms use $\eta_H$. The readout, bins and cells are
unchanged; the $D_{\mathrm{rate}}$ cell rate is replaced by the mean of
$\eta_H^A$ over the cell, and the restricted heterogeneity $G_R^H(b)$ is debiased
with the continuous analogue of Proposition~\ref{prop:gunbiased}, subtracting
$\sum_kp_k(1-p_k)s_k^2/n_k$ with $s_k^2$ the within-cell sample variance.
$G_R^H$ is at most $G_H^{\times}$ in all $150$ cells
(Figure~\ref{fig:ext-human}b); $14$ of the $150$ debiased estimates are negative,
and they are retained, so for those cells the inequality holds trivially. The
pooled median of $G_R^H/G_H^{\times}$ is $3.3\%$ (interquartile range
$1.4$--$6.3\%$), between $1.7\%$ and $6.3\%$ by model. In the profile the fraction
is larger: at the smallest radius, $\rho=0.01$, the same-proxy median of
$\mathcal F^{H}_{B,R}/\mathcal F^{H}_{B}$ is $0.26$, $0.16$ and $0.18$ for the three
models, close to the square roots of their variance ratios, which is the relation
the variance-controlled regime of Theorem~\ref{thm:profile} gives when both
profiles are in that regime; it rises with $\rho$, to $0.53$, $0.36$ and $0.38$ at
$\rho=3$. Both profiles in these ratios are estimated from finite data --- the
human-proxy profile $\mathcal F^{H}_{B}$ has no simple vote-noise correction, and
the restricted profile uses plug-in cell means --- so the direction of bias of their
ratio is not established; we use it descriptively. This comparison was
defined after the pre-specified result had been read; it is labelled post-hoc and
does not replace that result.

Taken together: in this dataset a human-label proxy for $\eta$ has within-bin
heterogeneity that is reproducible across disjoint annotators, and a finite
readout fitted to dataset correctness exposes a few percent of it in variance. The case study involves three convolutional classifiers on one
dataset, and the proxy is not $\eta$; it grounds the premise of the theory and
does not estimate the oracle quantities of Section~\ref{sec:theory}. No
lower confidence bound is computed, because repeated labels of the same images
do not meet its assumptions.

\begin{figure}[h]
\centering
\includegraphics[width=\textwidth]{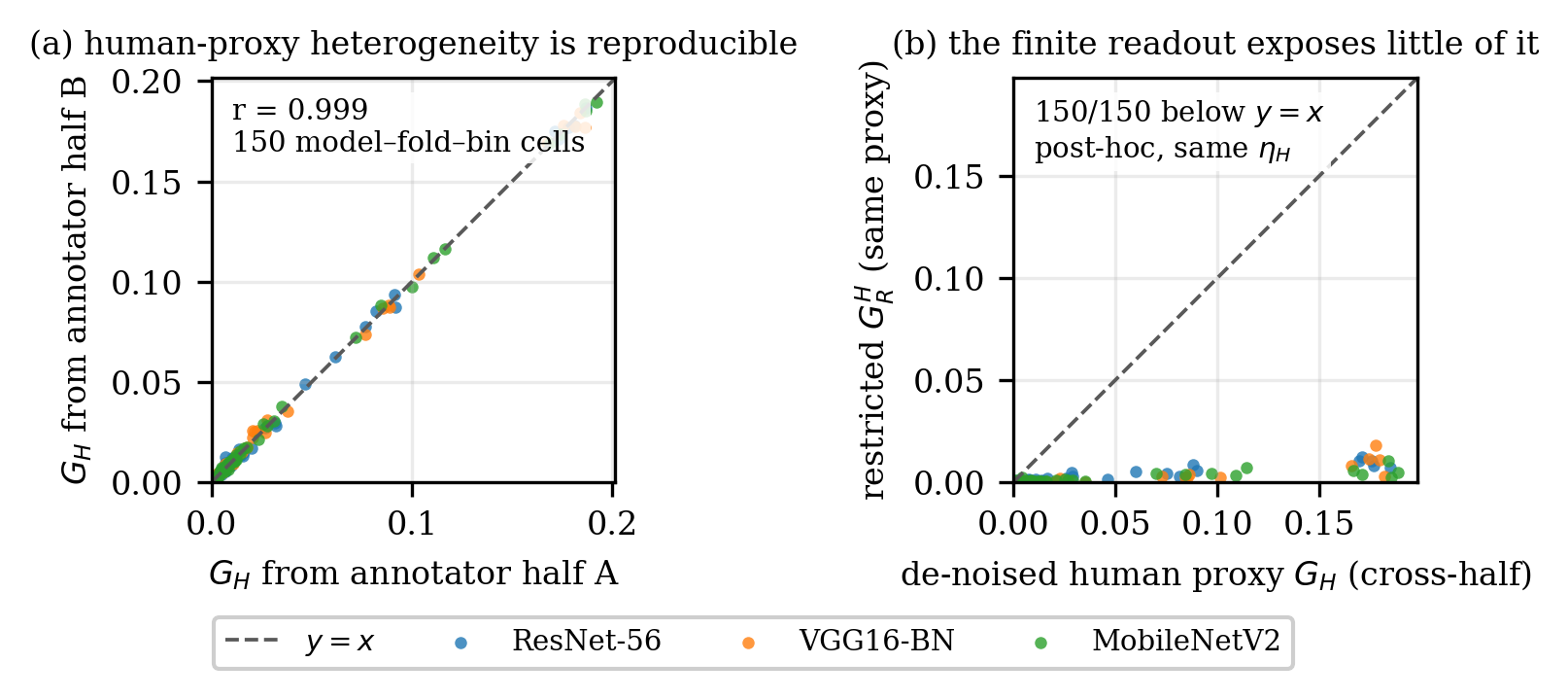}
\caption{\textbf{CIFAR-10H case study, post-audit.} One point per (model, fold, bin) cell, $150$ in total; dashed line $y=x$. (a) Human-proxy heterogeneity $G_H$ from
annotator half A against half B. (b) Post-hoc same-proxy diagnostic: the
vote-noise-corrected cross-half $G_H^{\times}$ against the debiased restricted
$G_R^H$ computed from the same proxy.}
\label{fig:ext-human}
\end{figure}

\subsection{Natural shift under the revised partition}\label{app:ext:natural}

Section~\ref{sec:natural} and Appendix~\ref{app:phase7} use the original frozen
partition. For ImageNetV2 and ImageNet-Sketch we forwarded every target image
through each checkpoint with that checkpoint's own preprocessing and cached the
penultimate representation, which allows the revised pipeline to be applied end
to end: bin edges, temperature, readout and group boundaries are fitted on
$D_{\mathrm{fit}}$ and applied unchanged to the target, source cell rates come
from $D_{\mathrm{rate}}$ and source proportions from $D_{\mathrm{eval}}$, and
target labels fit nothing. Retention rules, $J$, $K$ and the radius grid are
those of the main pipeline, and Proposition~\ref{prop:decomp} closes to within $3.9\times10^{-16}$ in every bin.

\begin{table}[h]
\caption{Natural shift under the original and the revised partition, with matched aggregation: per (checkpoint, source fold) unit, then the mean over the $30$ units. The primary comparison is between the first two rows of each block, both in the as-released state; the temperature-scaled row has no original-partition counterpart and is an additional result, not a partition sensitivity. Units share target images and source data and are not independent. Post-audit, not preregistered.}
\label{tab:ext-natural}
\vspace{3pt}
\centering\small
\begin{tabular}{llccccc}
\toprule
benchmark & partition, state & units & $L_{\mathrm{comp}}/L_{\mathrm{total}}$ & range & $L_{\mathrm{cond}}>L_{\mathrm{comp}}$ & median $U$ \\
\midrule
ImageNetV2 & original, $T=1$ & 30 & 12.2\% & [2.1, 29.7] & 100\% & 0.399 \\
 & revised, $T=1$ & 30 & 12.0\% & [2.4, 31.5] & 100\% & 0.357 \\
 & revised, $T=\hat T$ & 30 & 6.3\% & [3.0, 12.4] & 100\% & 0.331 \\
\midrule
ImageNet-Sketch & original, $T=1$ & 30 & 9.8\% & [3.7, 24.7] & 100\% & 0.891 \\
 & revised, $T=1$ & 30 & 10.0\% & [2.8, 30.0] & 100\% & 0.768 \\
 & revised, $T=\hat T$ & 30 & 7.8\% & [1.6, 22.1] & 100\% & 0.661 \\
\bottomrule
\end{tabular}
\end{table}

The comparison is made at matched aggregation. Each (checkpoint, source fold)
unit gives its own $L_{\mathrm{total}}$, $L_{\mathrm{comp}}$ and
$L_{\mathrm{cond}}$ as target-mass-weighted quadratic means over bins, its own
composition-to-total norm ratio and its own median utilisation; the benchmark
value is the mean over the $30$ units. The original partition, with its original
cell rates, is recomputed in exactly this way. Section~\ref{sec:natural} reports
$11.1\%$ and $9.6\%$ for the same benchmarks because Appendix~\ref{app:decomp}
pools the five folds at bin level before forming per-(model, domain) norms; the
matched recomputation of the original partition gives $12.2\%$ and $9.8\%$. The
difference is one of aggregation, not a conflict between analyses, and no
quantity below mixes the two partitions.

Table~\ref{tab:ext-natural} gives the result. The primary comparison is the
original against the revised partition in the as-released state, the only state
the original analysis used. The composition-to-total norm ratio is essentially
unchanged ($12.2\%\to12.0\%$ on ImageNetV2, $9.8\%\to10.0\%$ on
ImageNet-Sketch), and the conditional norm exceeds the composition norm in all
$30$ units of both benchmarks under both partitions. Utilisation is lower under
the revised partition ($0.399\to0.357$ and $0.891\to0.768$); the revision changes
the fitting data, the temperature, the readout and the rate estimator together,
so we attribute that fall to none of them. The temperature-scaled state, which
the original analysis did not include, gives lower ratios ($6.3\%$ and $7.8\%$)
with the conditional term again larger in every unit; it is an additional result
rather than a partition sensitivity. Agreement here is qualitative agreement on
two single-domain benchmarks, not a validation of Section~\ref{sec:natural}.

ImageNet-C was not rerun: its raw images were not available to us for this
analysis, and no subset of corruptions or severities was substituted. This
recomputation therefore says nothing about the per-corruption comparison, the
severity trend or the channel-specific association of
Appendix~\ref{app:phase7}, all of which rest on ImageNet-C.

\section{Architecture-breadth extension}\label{app:breadth}

\textbf{Post-audit, not preregistered.} The paper and every result above were
known when this extension was designed. Its question is whether the source-side
findings --- the controlled-shift rank agreement, the random-shift ordering, the
split-sample lower confidence bound and the identical as-released rankings of ECE,
$G_R$ and FAUC --- depend on the six checkpoints of Table~\ref{tab:models}. The answer is
descriptive: checkpoints are not i.i.d.\ draws from a population of models, so no
$p$-value and no interval is attached to any comparison below.

\paragraph{Cohort.} Twelve additional frozen ImageNet-1K checkpoints, one per
architecture family, were resolved from public metadata before any weight was
downloaded and before any quantity was computed, under rules fixed in the
released protocol: ImageNet-1K-only training, no distillation where the family
has a non-distilled checkpoint, evaluation at $224$, original-author weights where available, then the checkpoint closest to a
predeclared $22$~M parameter target. This yielded DenseNet-201, ResNeXt-50, RegNetX-4.0GF,
MobileNetV3-L, DeiT-S/16, XCiT-S12/16, PoolFormer-S24, MaxViT-T, ResMLP-12
(the all-MLP family; no ImageNet-1K-only MLP-Mixer checkpoint exists), MViTv2-T,
ConvMixer-768/32 and PVTv2-B2 (the predeclared backup after EfficientFormer,
distilled-only, and EdgeNeXt, evaluated at $256$). Exact identifiers, weight
checksums and preprocessing are released. Every checkpoint reproduces its public
top-1 within $0.04$~pp on the same $50{,}000$ validation images in the same
order as the primary cohort; no technical exclusion was needed. Each checkpoint
was then run through the revised pipeline of Appendix~\ref{app:estimators}
unchanged (five folds, seed $20270214$, $J=20$, $K=8$, the data roles
$D_{\mathrm{fit}}/D_{\mathrm{rate}}/D_{\mathrm{eval}}$, retention rules,
estimators and radius grid), in the four calibration states of
Section~\ref{sec:cannotsee}. Table~\ref{tab:breadth} gives the two primary states.

\begin{table}[h]
\caption{Post-audit architecture-breadth cohort: twelve additional frozen ImageNet-1K checkpoints, one per family, under the revised data roles of Appendix~\ref{app:estimators} (same estimators as Table~\ref{tab:models}). Acc.\ is the measured top-1 on the 50,000 validation images; every checkpoint reproduces its public reference within $0.04$~pp. Not preregistered; the six checkpoints of Table~\ref{tab:models} remain the frozen primary cohort.}
\label{tab:breadth}
\vspace{3pt}
\centering\footnotesize\setlength{\tabcolsep}{4pt}
\begin{tabular}{llccccccc}
\toprule
& & & \multicolumn{3}{c}{$T=1$} & \multicolumn{3}{c}{$T=\hat T$} \\
\cmidrule(lr){4-6}\cmidrule(lr){7-9}
Model & family & Acc. & ECE & $G_R$ & FAUC & ECE & $G_R$ & FAUC \\
\midrule
DenseNet-201 & DenseNet & 0.773 & 0.032 & 0.0006 & 0.043 & 0.021 & 0.0009 & 0.050 \\
ResNeXt-50 32x4d & ResNeXt & 0.776 & 0.065 & 0.0009 & 0.049 & 0.028 & 0.0015 & 0.057 \\
RegNetX-4.0GF & RegNet & 0.785 & 0.052 & 0.0008 & 0.047 & 0.025 & 0.0017 & 0.061 \\
MobileNetV3-L & MobileNetV3 & 0.758 & 0.067 & 0.0029 & 0.076 & 0.021 & 0.0007 & 0.047 \\
DeiT-S/16 & DeiT & 0.799 & 0.081 & 0.0010 & 0.049 & 0.041 & 0.0002 & 0.037 \\
XCiT-S12/16 & XCiT & 0.820 & 0.066 & 0.0018 & 0.055 & 0.050 & 0.0014 & 0.049 \\
PoolFormer-S24 & PoolFormer & 0.803 & 0.038 & 0.0042 & 0.072 & 0.033 & 0.0046 & 0.077 \\
MaxViT-T & MaxViT & 0.834 & 0.096 & 0.0027 & 0.072 & 0.037 & 0.0006 & 0.042 \\
ResMLP-12 & all-MLP & 0.766 & 0.121 & 0.0032 & 0.078 & 0.025 & 0.0002 & 0.038 \\
MViTv2-T & MViTv2 & 0.824 & 0.112 & 0.0032 & 0.077 & 0.032 & 0.0006 & 0.041 \\
ConvMixer-768/32 & ConvMixer & 0.802 & 0.175 & 0.0074 & 0.109 & 0.029 & 0.0006 & 0.044 \\
PVTv2-B2 & PVTv2 & 0.821 & 0.080 & 0.0020 & 0.060 & 0.040 & 0.0006 & 0.040 \\
\bottomrule
\end{tabular}
\end{table}

\begin{table}[h]
\caption{Cohort summaries. Top: Spearman rank correlations among the three source metrics across checkpoints (one point per checkpoint), as released / temperature-scaled. Bottom: held-out controlled-shift rank agreement over (checkpoint, state) combinations with the aggregation of Appendix~\ref{app:estimators}, and random score-preserving shifts (Appendix~\ref{app:ext:random}; ``corr.'' is the Spearman correlation over bin--radius cells between the two-sided plug-in profile and the median, or secondarily the maximum, of $100$ held-out movements; most of it is reproduced by the permutation null of Appendix~\ref{app:r7}). Primary-six values in this table use the revised estimator; Appendix~\ref{app:phase7} reports the original-protocol values. All rank correlations are computed from the unrounded source values. Descriptive; checkpoints and cells are not independent samples and no interval is attached.}
\label{tab:breadth-summary}
\vspace{3pt}
\centering\footnotesize\setlength{\tabcolsep}{5pt}
\begin{tabular}{lccc}
\toprule
cohort & (ECE, $G_R$) & (ECE, FAUC) & ($G_R$, FAUC) \\
\midrule
primary six & +1.00 / -0.37 & +1.00 / -0.83 & +1.00 / +0.60 \\
additional twelve & +0.61 / -0.28 & +0.81 / -0.36 & +0.92 / +0.99 \\
all eighteen & +0.75 / -0.26 & +0.85 / -0.36 & +0.95 / +0.95 \\
\bottomrule
\end{tabular}
\par\vspace{5pt}\setlength{\tabcolsep}{3pt}
\begin{tabular}{lcccccccc}
\toprule
& \multicolumn{4}{c}{held-out rank agreement} & \multicolumn{4}{c}{random shifts} \\
\cmidrule(lr){2-5}\cmidrule(lr){6-9}
cohort & comb. & median & IQR & min--max & cells & corr.\ (median) & corr.\ (max) & below $\widehat{\mathcal F}$ \\
\midrule
primary six & 24 & 0.82 & [0.57, 0.87] & 0.08--0.94 & 7,200 & 0.9010 & 0.9254 & 96.1\% \\
additional twelve & 48 & 0.80 & [0.63, 0.86] & 0.14--0.94 & 14,400 & 0.9005 & 0.9241 & 96.2\% \\
all eighteen & 72 & 0.81 & [0.61, 0.87] & 0.08--0.94 & 21,600 & 0.9007 & 0.9245 & 96.2\% \\
\bottomrule
\end{tabular}
\end{table}

\begin{table}[h]
\caption{Additional twelve checkpoints: split-sample lower confidence bound $\mathrm{LCB}^{+}_{\mathrm{model}}(0.3)$ of Proposition~\ref{prop:fscert}, one column per split arm (construction and split seed of Appendix~\ref{app:fscert} unchanged; each entry carries its own per-analysis $95\%$ statement, never joint), and the fold-wise monotone reparameterisation of Appendix~\ref{app:monotone} replicated post-audit (relative reduction of cross-fitted ECE; structural quantities unchanged exactly). The original 4-of-6 outcome and Outcome~A are historical and are not redefined.}
\label{tab:breadth-cert}
\vspace{3pt}
\centering\footnotesize\setlength{\tabcolsep}{3pt}
\begin{tabular}{lcccccc}
\toprule
& \multicolumn{2}{c}{as released $T{=}1$} & \multicolumn{2}{c}{split-fitted $T{=}\hat T$} & \multicolumn{2}{c}{monotone map} \\
\cmidrule(lr){2-3}\cmidrule(lr){4-5}\cmidrule(lr){6-7}
Model & A fits/B cert. & B fits/A cert. & A fits/B cert. & B fits/A cert. & ECE reduction & lower NLL \\
\midrule
DenseNet-201 & $0$ & $0$ & $0$ & $0$ & $57.4\%$ & yes \\
ResNeXt-50 32x4d & $0.00068$ & $0$ & $0.00018$ & $0.00058$ & not fitted & -- \\
RegNetX-4.0GF & $0$ & $0$ & $0.00053$ & $0.00037$ & not fitted & -- \\
MobileNetV3-L & $0.0011$ & $0.0011$ & $0$ & $0.00034$ & $85.2\%$ & yes \\
DeiT-S/16 & $0.00011$ & $0.00064$ & $0$ & $0$ & $70.0\%$ & yes \\
XCiT-S12/16 & $0.00066$ & $0.00059$ & $0$ & $0$ & $53.1\%$ & yes \\
PoolFormer-S24 & $0.0029$ & $0.0032$ & $0.0036$ & $0.0034$ & $50.9\%$ & yes \\
MaxViT-T & $0.0024$ & $0.0028$ & $0$ & $0$ & $81.4\%$ & yes \\
ResMLP-12 & $0.00066$ & $0.0019$ & $0$ & $0$ & $88.5\%$ & yes \\
MViTv2-T & $0.0020$ & $0.0042$ & $0$ & $0$ & $78.6\%$ & yes \\
ConvMixer-768/32 & $0.0110$ & $0.0092$ & $0$ & $0$ & $88.3\%$ & yes \\
PVTv2-B2 & $0.0015$ & $0.0015$ & $0$ & $0$ & $73.8\%$ & yes \\
\bottomrule
\end{tabular}
\end{table}

\paragraph{Rank association among the source metrics.} On the primary six, ECE,
$G_R$ and FAUC rank the checkpoints identically as released (all three Spearman
correlations $+1.00$; Appendix~\ref{app:phase7}). On the twelve new checkpoints
the correlations are \BrConfEceGRraw{} (ECE, $G_R$), \BrConfEceFaucRaw{} (ECE,
FAUC) and \BrConfGRFaucRaw{} ($G_R$, FAUC), and on all eighteen
\CombConfEceGRraw{}, \CombConfEceFaucRaw{} and \CombConfGRFaucRaw{}
(Table~\ref{tab:breadth-summary}, Figure~\ref{fig:breadth-ranks}). The identical
ranking therefore does not persist in the broader cohort: the three quantities
remain positively associated but are no longer rank-identical. Under temperature
scaling the ECE associations change sign (ECE--FAUC \SpEceFaucTsPrim{} on the six,
\SpEceFaucTsAll{} on all eighteen) while $G_R$ and FAUC stay positively associated
(\SpGrFaucTsPrim{} and \SpGrFaucTsAll{}); Table~\ref{tab:r7-cohort} gives the
family-deletion ranges for both states. None of this is evidence about
determination, for the reason given in Appendix~\ref{app:ladder}.

\paragraph{Controlled-shift rank agreement.} With the aggregation of
Appendix~\ref{app:estimators} (median over radii within fold, mean over folds,
one value per checkpoint--state), the breadth cohort gives a median of
\BrRankMedian{} over its $48$ combinations, range \BrRankLo{}--\BrRankHi{},
against $\SpearmanMedian{}$ (range $\SpearmanRangeLo{}$--$\SpearmanRangeHi{}$)
on the primary $24$ and \CombRankMedian{} on all $72$. The weakest values again
sit on the combinations with the smallest restricted heterogeneity
(Figure~\ref{fig:breadth-gr}); as before, this is consistent with a
finite-sample difficulty and identifies no cause.

\begin{figure}[h]
\centering
\includegraphics[width=\textwidth]{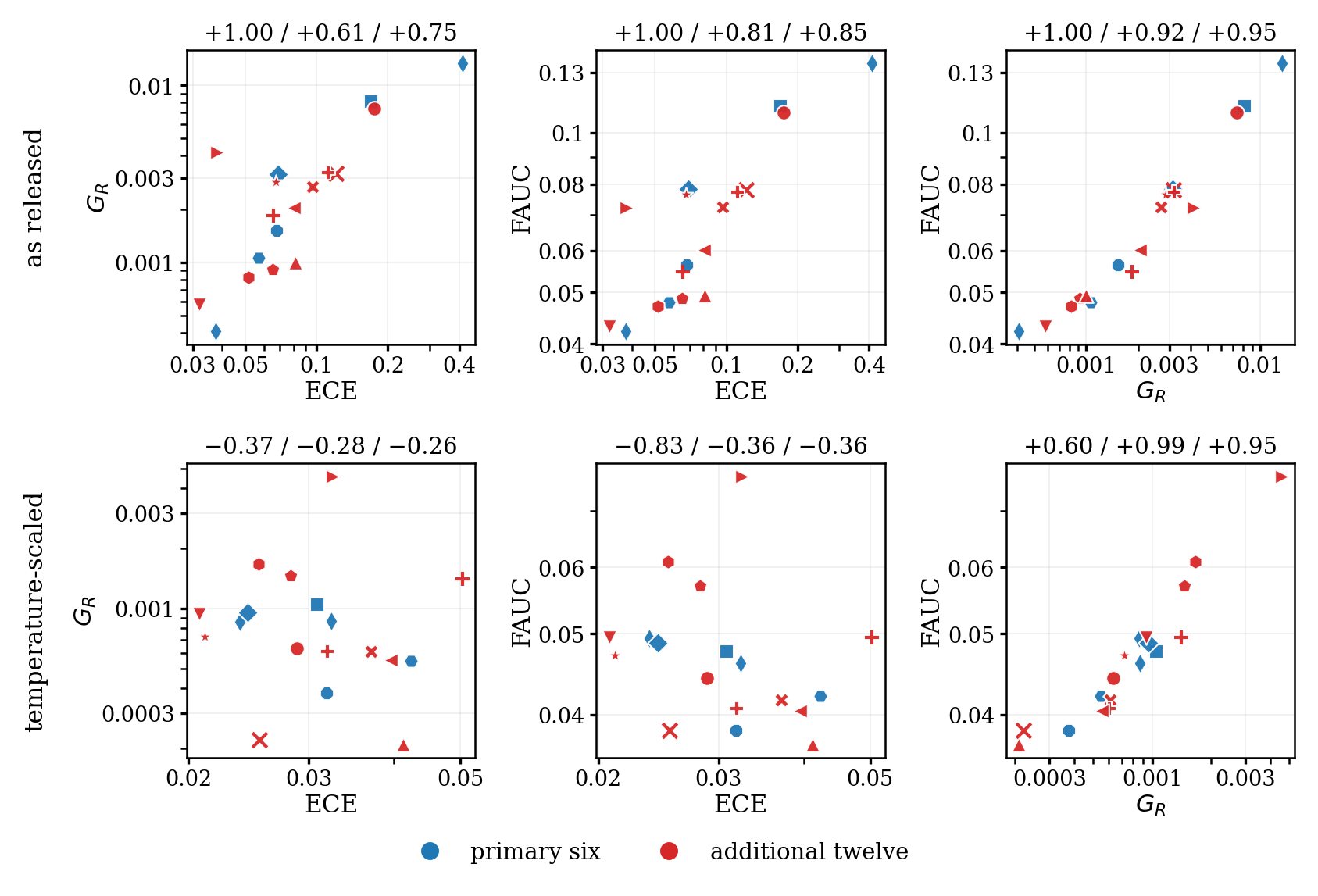}
\caption{\textbf{Source metrics across eighteen checkpoints, as released (top)
and temperature-scaled (bottom).} One point per checkpoint (blue: primary six;
red: additional twelve; marker = family), log axes. Panel titles give the Spearman
rank correlation on the six / the twelve / all eighteen, computed from unrounded
values. Descriptive; the checkpoints are not independent samples.}
\label{fig:breadth-ranks}
\end{figure}

\begin{figure}[h]
\centering
\includegraphics[width=0.6\textwidth]{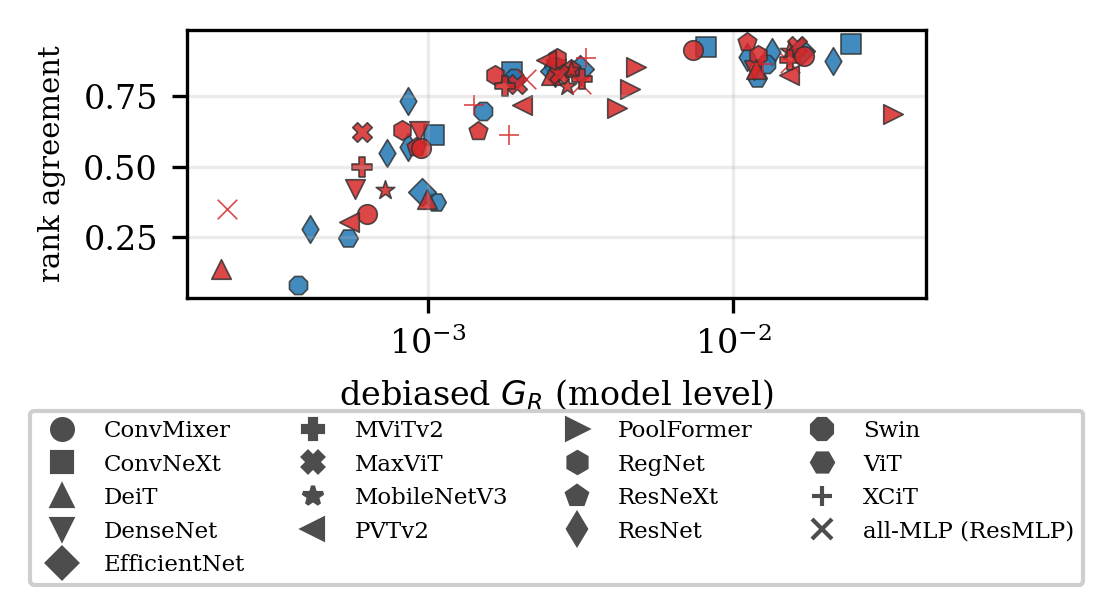}
\caption{\textbf{Held-out rank agreement against restricted heterogeneity.} All
eighteen checkpoints in four calibration states (blue: primary six; red:
breadth twelve; marker = family). The weakest agreement occurs where $G_R$ is
smallest.}
\label{fig:breadth-gr}
\end{figure}

\paragraph{Random score-preserving shifts.} The random-shift construction of
Appendix~\ref{app:ext:random} was repeated on the breadth cohort with a new seed
declared in the protocol ($20260924$): primary states, five folds, twenty bins,
six radii, $100$ label-free Gaussian directions per bin--radius cell, the same
feasibility assertions (no violation in $1{,}440{,}000$ shifts). Over the
breadth cohort's cells the two-sided plug-in profile has Spearman correlation
\RsPooledBr{} with the median held-out movement (\BrEOneSpMax{} with the largest),
against \RsPooledPrim{} ($0.925$) on the primary six; \BrEOneBelow{}\% of movements fall
below the branch of the profile in their own direction, with a median ratio of
\BrEOneMedRatio{}, and exceedances again concentrate in the lowest $G_R$
quartile (\BrEOneQone{}\% against \BrEOneQfour{}\% in the highest). Over all
eighteen checkpoints ($2{,}160{,}000$ shifts) the primary correlation is
\RsPooledAll{} (Figure~\ref{fig:breadth-random}). Cells and checkpoints are
dependent; the shifts remain non-natural and give the plug-in no coverage. The
near-equality of the three cohort values reflects the construction: most of the
statistic is reproduced by the permutation null of Appendix~\ref{app:r7}.

\begin{figure}[h]
\centering
\includegraphics[width=\textwidth]{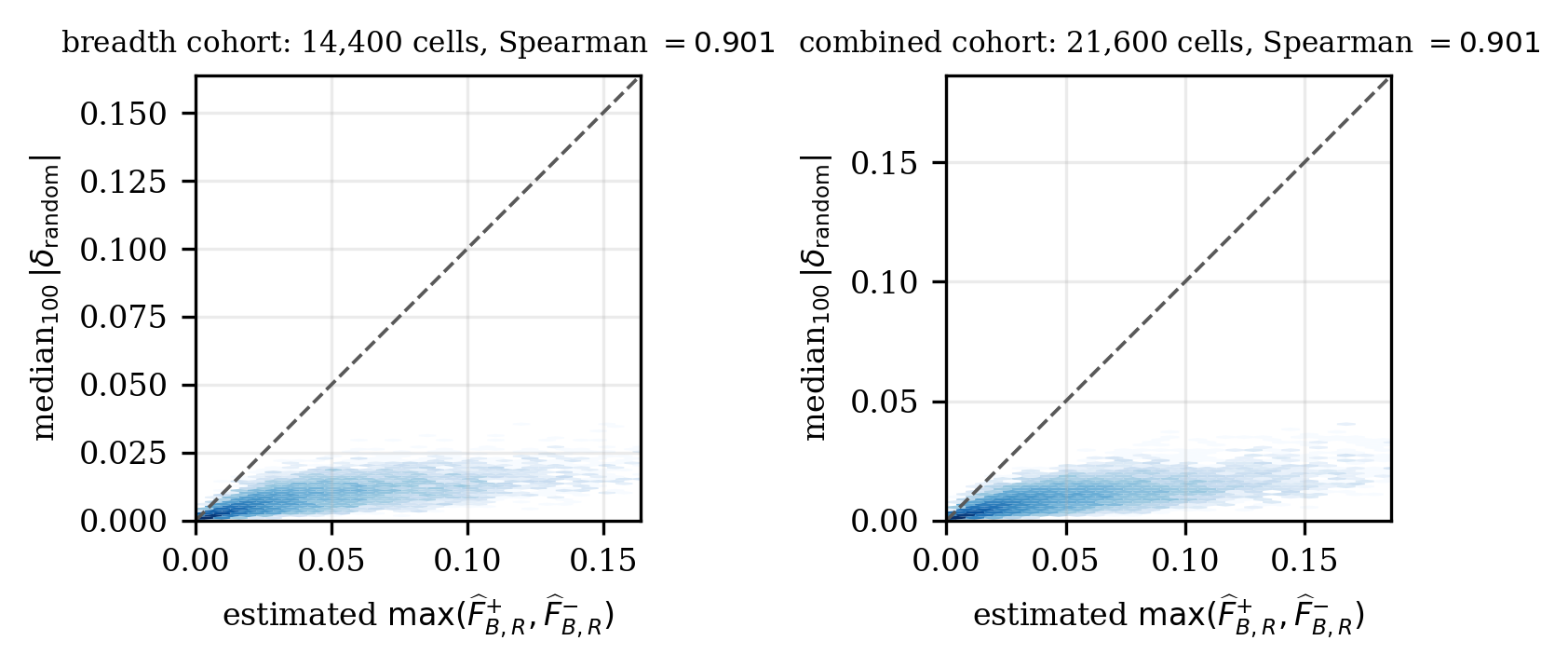}
\caption{\textbf{Random score-preserving shifts on the breadth and combined
cohorts.} One point per bin--radius cell: two-sided plug-in profile against the
median held-out movement over $100$ random directions; dashed line $y=x$, equal
axes. Descriptive; most of the association is reproduced by the permutation null of
Appendix~\ref{app:r7}.}
\label{fig:breadth-random}
\end{figure}

\paragraph{Split-sample lower confidence bound.} The construction of
Appendix~\ref{app:fscert} was run unchanged (split seed $20260902$, both arms,
$500$ Bonferroni-corrected one-sided statements per analysis, no retention
filter) on each new checkpoint. As released, $\mathrm{LCB}^{+}_{\mathrm{model}}(0.3)$
is positive in both arms for \BrCertBothRaw{} of the twelve and in at least one
arm for \BrCertOneRaw{}; in the split-fitted temperature-scaled state it is
positive in both arms for \BrCertBothTemp{} (Table~\ref{tab:breadth-cert}).
Positive values are small, at most $0.0042$ except ConvMixer-768/32
($0.0092$--$0.0110$), in the root-mean-square probability units of
Section~\ref{sec:certificate}. RegNetX-4.0GF is zero as released and positive after
scaling: the scaled state refits the temperature, bins and readout, so it has a
different target. Each entry carries its own $95\%$ statement; the counts describe
the table and assert no joint coverage, and the preregistered outcome categories
of Appendix~\ref{app:fscert}, defined for the six, are not re-scored on the
enlarged set.

\paragraph{Monotone reparameterisation, replicated.} The fold-wise map
$g_f(s)=\mathrm{sigmoid}(a_f\,\mathrm{logit}\,s+c_f)$ of Appendix~\ref{app:monotone}
was fitted and applied by the same code on the new caches. For the ten
checkpoints on which the calibrator fits, the cross-fitted ECE falls by a median
\BrMonoMedian{}\%, binary NLL improves for \BrMonoNLL{} of the ten, and every
structural quantity (bin assignment, frozen readout, cell table, $G_R$,
$\mathcal F^{\pm}$, FAUC) is unchanged to a maximum absolute difference of
\BrMonoMaxdF{}, as Lemma~\ref{lem:reparam} predicts. For ResNeXt-50 and
RegNetX-4.0GF the fit is undefined: three and six validation images receive a
top-label confidence of exactly $1.0$ in float32, where $\mathrm{logit}\,s$ is
infinite, so the optimiser terminates at its identity start; the frozen map
family was not modified and these two are reported as not fitted. A clipped
variant would merge distinct scores and so would not be a strictly increasing
bijection in the sense of Lemma~\ref{lem:reparam}; we did not run one. Outcome~A of
the preregistered protocol is not re-scored.

\paragraph{Family-level sensitivity.} Removing all checkpoints of one of the
$\BrFamilies{}$ families in turn from the combined cohort moves the as-released
Spearman(ECE, FAUC) over \LofoEceFaucLo{}--\LofoEceFaucHi{}, Spearman($G_R$,
FAUC) over \LofoGRFaucLo{}--\LofoGRFaucHi{}, the median held-out rank agreement
over \LofoRankLo{}--\LofoRankHi{} and the random-shift correlation over
\LofoEOneLo{}--\LofoEOneHi{}. Keeping one checkpoint per family (the first in
the frozen cohort order, so ResNet-50 V1 for ResNet) gives \OpfEceFauc{},
\OpfGRFauc{}, \OpfRank{} and \OpfEOne{}. In the temperature-scaled state the
deletions move Spearman(ECE, FAUC) over \LofoTsLo{} to \LofoTsHi{} and one per family
gives \OpfTsEceFauc{} (post-hoc; Table~\ref{tab:r7-cohort}). These are ranges over
deletions, not intervals.

\paragraph{What this changes and what it does not.} The controlled-shift rank
agreement and the positivity of the lower confidence bound as released recur on
twelve checkpoints from twelve further families with no change to the pipeline.
Two things differ from the primary six: the identical as-released rankings of ECE,
$G_R$ and FAUC do not persist, which removes a design-power caveat of
Appendix~\ref{app:phase7} without altering any preregistered outcome; and after
temperature scaling the bound is positive in both arms for \BrCertBothTemp{} of the
\BrN{} breadth checkpoints, whereas it is zero in at least one arm for all six
primary ones. Nothing here bears on natural shift or on the CIFAR-10H case study,
which were not extended.

\section{Post-hoc diagnostics added in the final revision}\label{app:r7}

\paragraph{Provenance.} Every analysis in this appendix was designed after all
earlier results were known and is post-hoc. A protocol fixing the statistics,
nulls, seeds, replicate counts, tolerances and reporting rules was written and
hashed before any of them was run. A first amendment, written before any result
below was computed, records how the frozen partitions were reconstructed. A second
amendment, the controlled-shift null, was defined after the random-shift null had
been inspected. No model is re-run and no preregistered outcome is re-scored. The
partitions ($D_{\mathrm{fit}}$ temperature, bins, readout and groups) were rebuilt
with the frozen code. They reproduce all $21{,}600$ frozen random-shift cells
exactly, and the frozen per-checkpoint $G_R$, FAUC and rank agreement exactly for
$17$ of $18$ checkpoints. For DenseNet-201 the frozen source-pipeline run used a
different number of BLAS threads from its frozen random-shift run, and the
reconstruction, which matches the latter, differs from the former by $3\times10^{-5}$
in FAUC. The randomised PCA in the readout makes the frozen pipeline reproducible
to about $10^{-5}$ in FAUC across thread settings, not bit for bit. The released
code records the thread counts.

\paragraph{Cohort statistics and matched pairs.} Table~\ref{tab:r7-cohort}
recomputes the rank correlations from unrounded source metrics in both states,
including family-deletion ranges for the temperature-scaled state, which the
original extension reported only as released. Table~\ref{tab:r7-matched} lists
every pair among the eighteen checkpoints whose ECE agrees within the original
tolerance; the full enumeration of all $2\times153$ pairs is released. No primary
pair matches in either state, so the original primary-cohort outcome is unchanged.
In every matched pair the FAUC difference has the same sign as the $G_R$
difference. The pairs therefore show that similar estimated ECE does not fix FAUC;
they are not a test of fragility at fixed $G_R$.

\paragraph{The within-bin label-permutation null.} For each (checkpoint, state,
fold, bin), correctness is permuted uniformly among the $D_{\mathrm{rate}}$ rows of
the bin and, independently, among its $D_{\mathrm{eval}}$ rows. Bin assignments,
readout groups, cell counts, retention, source and evaluation proportions and the
random directions are all held fixed. The permutation preserves each bin's accuracy
in each role and the unequal cell counts, and it removes any association between
correctness and the readout inside a bin, so the permuted data have population
$G_R=0$. Everything else is recomputed exactly as for the observed data. There are
$100$ replicates per variant (both roles, $D_{\mathrm{rate}}$ only,
$D_{\mathrm{eval}}$ only) with a fixed seed sequence. The Monte Carlo standard error
of the null mean of the pooled statistic is below $2\times10^{-4}$. Permutations are
conditional on fold: an image in the rate or evaluation set of several folds is
permuted independently in each, so the pooled null does not reproduce cross-fold
dependence. It is a reference distribution for these statistics, not a sampling
distribution for the data, and we report no $p$-value.

Table~\ref{tab:r7-null} gives the results. For the random-shift statistic, the
null with both roles permuted reproduces most of the observed correlation on all
eighteen checkpoints: pooled $0.877$ against $0.901$, and at fixed radius $0.75$
against $0.83$. The mechanism is estimation noise. The plug-in profile of a bin
with no heterogeneity is of order $\sqrt{\rho\,\mu(1-\mu)/n}$, and the unsigned
random movement on held-out data is of the same order with the evaluation counts,
so both rise and fall with bin accuracy whatever the heterogeneity. Permuting only
one role removes part of this common factor and lowers the null further. The
observed statistics exceed every replicate, by a few hundredths pooled and about
$0.08$ at fixed radius. Within single (checkpoint, state, fold, radius) units the
primary cohort's observed median lies inside the null range. The controlled-shift
statistic behaves differently. Its target is the \emph{signed} held-out drift of
the frozen maximising weights, which has mean zero under the null. Its null median
is centred at zero, with maxima of at most $0.07$ over the three cohorts, against
$0.59$--$0.63$ observed.

\paragraph{The profile against its variance.} Table~\ref{tab:r7-profile}
compares, on identical cells, rates and proportions, the plug-in profile with
$\sqrt{\rho\hat G}$ and with $\min\{\sqrt{\rho\hat G},\hat z_+\}$, where
$\hat G=\sum_kp_k(\hat\theta_k-\bar\theta)^2$ is the plug-in variance. It is not
the corrected $G_R$ and not a square-rooted Table~\ref{tab:models} entry. Both
baselines bound the plug-in profile from above, by Cauchy--Schwarz and the support
bound, and they coincide with it in the variance-controlled regime. The cap uses
the same estimated support endpoint $\hat z_+$ as the profile, so it removes the part
of the overstatement that the endpoint information alone accounts for. Model-level FAUC
recomputed with either baseline in place of the profile, with the frozen
aggregation, exceeds FAUC by a median factor of \FaucVoverF{} and \FaucVcapOverF{}
respectively, and ranks the $36$ checkpoint--state combinations with Spearman
\FaucVRank{} and \FaucVcapRank{} against FAUC. On the random-shift cells the three
predictors have pooled Spearman correlations of $0.901$, $0.896$ and $0.898$ with
the median movement. For the controlled shifts, the median rank agreement of each
with held-out drift is $0.62$. The differences between profile and variance
therefore show in magnitudes at $\rho\ge1$ and between matched-variance bins, but
they do not change rankings or held-out ordering on these readouts. The hierarchy
of Proposition~\ref{prop:hierarchy} does not order regime boundaries: refining the
cells $\{0.4,0.6\}$ (probabilities $\tfrac12,\tfrac12$) into rates $0.4,0.5,0.7$
(probabilities $\tfrac12,\tfrac14,\tfrac14$) raises $\rho_+$ from $1$ to $1.5$, and a
larger budget can move a unit from the tail-controlled regime into saturation. We did
not test how the granularity of the readout affects the regime boundaries or this
comparison.

\paragraph{Not done.} The protocol listed two optional analyses, a bin-count
sensitivity and a stabilised monotone map for ResNeXt-50 and RegNetX-4.0GF. Neither
was run, and no claim depends on them. ImageNet-C was not re-run under the revised
pipeline.

\begin{table}[h]
\caption{Rank correlations among source metrics across checkpoints (Spearman, unrounded inputs), as released / temperature-scaled. Leave-one-family-out gives the range over the $17$ deletions; one-per-family keeps the first checkpoint of each family in the frozen cohort order. Post-hoc and descriptive; checkpoints are not independent samples.}
\label{tab:r7-cohort}
\vspace{3pt}\centering\scriptsize\setlength{\tabcolsep}{3pt}
\begin{tabular}{lccc}
\toprule
& (ECE, FAUC) & ($G_R$, FAUC) & (ECE, $G_R$) \\
\midrule
primary six & +1.00 / $-$0.83 & +1.00 / +0.60 & +1.00 / $-$0.37 \\
additional twelve & +0.81 / $-$0.36 & +0.92 / +0.99 & +0.61 / $-$0.28 \\
all eighteen & +0.85 / $-$0.36 & +0.95 / +0.95 & +0.75 / $-$0.26 \\
\midrule
leave one family out & [+0.79, +0.90] / [$-$0.50, $-$0.28] & [+0.93, +0.98] / [+0.94, +0.96] & [+0.65, +0.90] / [$-$0.42, $-$0.16] \\
one per family & +0.82 / $-$0.35 & +0.94 / +0.96 & +0.71 / $-$0.28 \\
\bottomrule
\end{tabular}
\end{table}

\begin{table}[h]
\caption{All checkpoint pairs among the eighteen whose estimated ECE agrees within the original tolerance $5\times10^{-4}$ (all $153$ pairs per state enumerated; none within the primary six in either state). Post-hoc enumeration on an enlarged cohort; similar estimated ECE is not equal population calibration error, and no significance is claimed.}
\label{tab:r7-matched}
\vspace{3pt}\centering\scriptsize\setlength{\tabcolsep}{3pt}
\begin{tabular}{llcccc}
\toprule
state & pair & $|\Delta\mathrm{ECE}|$ & FAUC & FAUC ratio & $G_R$ \\
\midrule
$T=\hat T$ & ResNet-50 V2 / PoolFormer-S24 & $1.41\times10^{-4}$ & 0.0461 / 0.0770 & 1.67 & 0.00086 / 0.00459 \\
$T=\hat T$ & ResNet-50 V2 / MViTv2-T & $4.54\times10^{-4}$ & 0.0461 / 0.0407 & 1.13 & 0.00086 / 0.00061 \\
$T=\hat T$ & Swin-T / MViTv2-T & $0.77\times10^{-4}$ & 0.0384 / 0.0407 & 1.06 & 0.00038 / 0.00061 \\
$T=\hat T$ & DenseNet-201 / MobileNetV3-L & $3.72\times10^{-4}$ & 0.0495 / 0.0471 & 1.05 & 0.00094 / 0.00072 \\
$T=\hat T$ & RegNetX-4.0GF / ResMLP-12 & $1.09\times10^{-4}$ & 0.0610 / 0.0384 & 1.59 & 0.00166 / 0.00022 \\
$T=1$ & ResNet-50 V1 / PoolFormer-S24 & $4.98\times10^{-4}$ & 0.0423 / 0.0723 & 1.71 & 0.00041 / 0.00417 \\
$T=1$ & ResNeXt-50 / XCiT-S12/16 & $4.29\times10^{-4}$ & 0.0488 / 0.0547 & 1.12 & 0.00091 / 0.00184 \\
\bottomrule
\end{tabular}
\end{table}

\begin{table}[h]
\caption{Random-shift ordering statistic against the within-bin label-permutation null ($M=100$ replicates per variant; correctness permuted within each bin in $D_{\mathrm{rate}}$ and/or $D_{\mathrm{eval}}$, with partitions, cell counts, proportions and directions fixed). Entries: observed / null mean [null maximum]. ``Fixed radius'': for the observed data and for each replicate, the mean over the six radii of the Spearman correlation over cells at that radius; the null mean and maximum are taken over replicates of that mean. (The r7 version of this table printed, as the bracket, the maximum over radii and replicates of the radius-specific correlations, a looser quantity; corrected here.) ``Within unit'': median over (checkpoint, state, fold, radius) units of the Spearman correlation across the $20$ bins. Permutations are fold-conditional, so the null does not reproduce cross-fold dependence; no $p$-value is reported. Post-hoc.}
\label{tab:r7-null}
\vspace{3pt}\centering\scriptsize\setlength{\tabcolsep}{3pt}
\begin{tabular}{llccc}
\toprule
cohort & null variant & pooled & fixed radius & within unit \\
\midrule
primary six & both roles & 0.901 / 0.876 [0.886] & 0.828 / 0.747 [0.771] & 0.776 / 0.748 [0.777] \\
 & $D_{\mathrm{rate}}$ only & 0.901 / 0.835 [0.843] & 0.828 / 0.692 [0.714] & 0.776 / 0.755 [0.776] \\
 & $D_{\mathrm{eval}}$ only & 0.901 / 0.846 [0.854] & 0.828 / 0.659 [0.681] & 0.776 / 0.743 [0.765] \\
\midrule
additional twelve & both roles & 0.901 / 0.877 [0.881] & 0.838 / 0.755 [0.768] & 0.803 / 0.755 [0.774] \\
 & $D_{\mathrm{rate}}$ only & 0.901 / 0.851 [0.856] & 0.838 / 0.726 [0.741] & 0.803 / 0.759 [0.775] \\
 & $D_{\mathrm{eval}}$ only & 0.901 / 0.866 [0.872] & 0.838 / 0.711 [0.722] & 0.803 / 0.764 [0.776] \\
\midrule
all eighteen & both roles & 0.901 / 0.877 [0.881] & 0.835 / 0.753 [0.765] & 0.795 / 0.752 [0.776] \\
 & $D_{\mathrm{rate}}$ only & 0.901 / 0.846 [0.849] & 0.835 / 0.715 [0.725] & 0.795 / 0.758 [0.768] \\
 & $D_{\mathrm{eval}}$ only & 0.901 / 0.859 [0.864] & 0.835 / 0.693 [0.703] & 0.795 / 0.757 [0.770] \\

\bottomrule
\end{tabular}
\par\vspace{5pt}
\begin{tabular}{lccc}
\toprule
controlled shifts (as released and $T=\hat T$) & observed median & null mean & null max \\
\midrule
primary six & $0.592$ & $-0.005$ & $0.059$ \\
additional twelve & $0.627$ & $-0.004$ & $0.067$ \\
all eighteen & $0.624$ & $-0.005$ & $0.046$ \\
\bottomrule
\end{tabular}
\end{table}

\begin{table}[h]
\caption{The plug-in profile against its variance summaries on identical cells (all eighteen checkpoints, as released and $T=\hat T$, $3{,}600$ bin units). Regime: share of units outside the variance-controlled regime (upward profile). Excess: median relative overstatement of $\widehat{\mathcal F}^{+}$ by $\sqrt{\rho\hat G}$ and by $\min\{\sqrt{\rho\hat G},\hat z_+\}$. Matched variance: median relative profile difference within disjoint consecutive pairs of units whose $\hat G$ agree within $1\%$ ($1{,}742$ pairs), and the share of pairs differing by more than $10\%$. Post-hoc; plug-in quantities, no coverage.}
\label{tab:r7-profile}
\vspace{3pt}\centering\footnotesize\setlength{\tabcolsep}{4pt}
\begin{tabular}{lcccccc}
\toprule
$\rho$ & $0.01$ & $0.03$ & $0.1$ & $0.3$ & $1$ & $3$ \\
\midrule
outside variance regime & 0\% & 0\% & 0\% & 45\% & 99\% & 100\% \\
excess of $\sqrt{\rho\hat G}$ & 0\% & 0\% & 0\% & 0\% & 10\% & 44\% \\
excess of capped variance & 0\% & 0\% & 0\% & 0\% & 9\% & 10\% \\
matched-variance difference & 0\% & 0\% & 0\% & 1\% & 10\% & 19\% \\
pairs differing by $>10\%$ & 0\% & 0\% & 0\% & 6\% & 48\% & 72\% \\
\bottomrule
\end{tabular}
\end{table}

\clearpage
\section{Known-propensity magnitude and resolution study}\label{app:r9}

This study was designed after every other result in the paper was known and is
post-hoc. Its specification (laws, sampling, seeds, replicates, metrics, reading rule,
failure rules) was written and hashed before any of its random data were generated;
one technical amendment, a corrected solver (Section~\ref{app:r9:audit}), was made
after the first run and changed no classification. The known-propensity parts are
synthetic: they show when the full profile's magnitude is estimable from rate labels,
not that it matters on any benchmark. Code, seeds and unrounded outputs are in the
supplementary material.

\subsection{Finite-sample magnitude with a known propensity}\label{app:r9:mag}

\paragraph{Design.} One bin with $S\equiv0.5$; the input space is a finite set of
cells with known masses $p_j$ and propensities $\eta_j$, and the readout is the cell,
so the target is the exact profile of the law $(\eta_j,p_j)$. Law A puts
$(\tfrac14,\tfrac12,\tfrac14)$ on $\eta\in\{0.1,0.5,0.9\}$ and law B puts
$(\tfrac1{102},\tfrac{25}{51},\tfrac{25}{51},\tfrac1{102})$ on
$\{0.1,0.22,0.78,0.9\}$: both have mean $\tfrac12$, variance $\tfrac2{25}$ and
endpoints $0.1$ and $0.9$, and at $\rho=3$ the capped variance is $0.4$ for both, while
$\mathcal F^+_{A}(3)=\tfrac25$ (saturation, $\rho_{\mathrm{sat}}=3$) and
$\mathcal F^+_{B}(3)=(24+\sqrt2)/85\approx0.29899$ (tail regime; exact, and equal to a
50-digit independent dual). The family $\lambda P_A+(1-\lambda)P_B$,
$\lambda\in\{0,\tfrac14,\tfrac12,\tfrac34,1\}$, varies the tails at fixed mean,
variance and endpoints. Controls: the same laws with variance divided by $16$
($\eta'=0.5+0.25(\eta-0.5)$); a constant propensity on four cells (target $0$); a
two-point law on four cells, $\eta=(0.3,0.3,0.7,0.7)$, for which full and capped
profiles coincide in population; and one asymmetric law fixed in advance,
$\eta=(0.2,0.7,0.8,0.9)$ with masses $(0.05,0.35,0.40,0.20)$, in both directions. Budgets
$\rho\in\{0,0.01,0.03,0.1,0.3,1,3\}$; $n\in\{250,500,1000,2000,5000\}$ rate labels per
bin; $2000$ independent replicates per condition. Primary sampling is stratified with
proportional allocation and at least one label per cell (law B's endpoint cells get
\MagMinLabelB{} labels at $n=250$); secondary arms use equal allocation, iid sampling
with empty cells dropped, and iid sampling with cells of fewer than $30$ labels dropped,
which is the pipeline's rule; dropped cells stay in the target. From the same
estimated rates and known masses we compute the full plug-in profile,
$\sqrt{\rho\hat G}$ with the uncorrected plug-in variance, the capped variance
$\min\{\sqrt{\rho\hat G},\hat z\}$ with the estimated endpoint, and, as a labelled extra
with more information, the cap at the true endpoint. Errors are in probability units
against the exact profile. The paired unit is the replicate; a condition counts as
favouring one estimator if the mean paired difference in absolute error exceeds
$0.005$ and its $95\%$ Monte-Carlo interval excludes zero.

\begin{table}[h]
\caption{Known-propensity check, primary arm: one bin, upward, $\rho=3$. Exact profile, population slack of the capped variance, and mean absolute
error over $2000$ replicates of the full plug-in and the capped variance from the
same rates.}
\label{tab:r9-magnitude}
\vspace{2pt}
\centering\small
\RnineMainTable
\end{table}

\paragraph{Results.} Table~\ref{tab:r9-magnitude} summarises four laws at $\rho=3$
and Table~\ref{tab:r9-e1} lists the primary arm. Every plug-in
profile was at most both $\sqrt{\rho\hat G}$ and the capped variance (largest excess
$\MagMaxFullOverCap$ over all replicates), as the theory requires, so the plug-in
can be less accurate than the cap only by falling below the target. Of the
\MagCondTotal{} conditions with $\rho>0$, \MagCondFull{} meet the specified
practical-advantage criterion for the full plug-in and none for the cap. In the other
\MagCondTie{} the difference stays below the criterion, which does not mean that the two
estimates are equal. They include every condition in which the cap is exact in
population (the variance-controlled budgets, law A at $\rho=3$, and the two controls)
and conditions whose cap slack is small (at most $0.012$). Population equality need not
persist in the estimates: with four separately estimated cells, the estimated law of a
control is generally neither constant nor two-point. Where the cap has population slack, its error does not
shrink with $n$ (law B at $\rho=3$: \MagCapBsmall{} at $n=250$, \MagCapBlarge{} at
$n=5000$, slack \MagSlackB{}), whereas the plug-in error does (\MagFullBsmall{} to
\MagFullBlarge{}). The plug-in's upward bias, from maximising over noisy rates, is
small in the main laws but reaches \MagWeakSelBias{} for law B with variance
divided by $16$ at $n=250$, where both estimators' relative errors exceed
\MagWeakRelFull{}: weak heterogeneity is where rate noise swamps the difference.
Table~\ref{tab:r9-arms} shows the sampling arms. Equal allocation and iid sampling
with empty cells dropped behave like the primary arm. The $30$-label rule removes at
least one of law B's endpoint cells in every replicate through $n=\MagIIDDropMaxN{}$, and
both endpoints in every replicate through $n=1000$; the estimators then describe a
truncated law and underestimate by about $0.02$. Through $n=1000$ the two coincide; at
$n=2000$ one endpoint survives in some draws and the estimates can differ, but neither
meets the practical-advantage criterion. For the
asymmetric law's downward profile the rule drops the rare hard cell in every
replicate at $n=250$ and in most at $n=500$; the uncapped variance, being larger, is
then closer to the target than both the full plug-in and the capped variance (four
conditions). Full per-condition results, including RMSE, bias
decompositions and the other arms, are in the supplementary material.

\begin{table}[h]
\caption{Known-propensity magnitude, primary arm (proportional stratified labels,
$2000$ replicates). $\mathcal F$: exact profile. MAE f/c: mean absolute error of the
full plug-in and of the capped variance; $\Delta$: mean paired difference
$|\text{full}-\mathcal F|-|\text{capped}-\mathcal F|$ (negative favours the full
plug-in; Monte-Carlo standard errors are at most $0.002$). Upward unless stated.}
\label{tab:r9-e1}
\vspace{2pt}
\centering\scriptsize
\RnineEoneTable
\end{table}

\begin{table}[h]
\caption{Sampling arms at two conditions: bias and MAE of the full plug-in (f) and the
capped variance (c).}
\label{tab:r9-arms}
\vspace{2pt}
\centering\scriptsize
\RnineArmsTable
\end{table}

\subsection{Nested readouts on known atoms}\label{app:r9:nested}

\paragraph{Design.} Three laws on $32$ atoms in one bin, generated by fixed rules and
seeds: \emph{smooth} ($\eta$ increasing in a feature $x$ with noise, masses
Dirichlet$(2)$), \emph{rare} (same $\eta$, masses Dirichlet$(0.3)$, so many atoms are
rare) and \emph{wave} ($\eta$ non-monotone in $x$). A readout of $K$ cells takes
contiguous blocks of $32/K$ atoms in a fixed ordering, $K\in\{1,2,4,8,16\}$, so each
level refines the previous one; the atoms define the binned oracle. The feature
ordering uses no propensity or label. The ordering by $\eta$ is an oracle,
structural diagnostic, not a readout one could learn. We verified that every cell has
exactly one parent and that masses aggregate exactly. For finite samples, $n$ rate
labels are stratified over the $16$ finest cells, atoms are drawn within a cell in
proportion to mass, and the rates of coarser cells are mass-weighted averages of the
finest ones, so all levels use the same labels; $1000$ replicates.

\paragraph{Results.} Table~\ref{tab:r9-e2pop} gives exact quantities. The oracle gap
falls under refinement, and the middle bound of Proposition~\ref{prop:approx} is
tight in the variance-controlled regime and close beyond it: over the refined levels
its excess over the restricted profile is a median \ResBratioMedOne{} times the true
gap at $\rho=1$ and \ResBratioMedThree{} times at $\rho=3$ (largest
\ResBratioMaxThree{}), against \ResAratioMedOne{} and \ResAratioMedThree{} times for
the simple bound. The wave law shows why a coarse readout can miss almost
everything: its two-cell feature split has $G_R=0.0016$ against $G_P=0.050$. At a fixed
label budget (Table~\ref{tab:r9-e2fin}), the estimation error of the restricted
profile tends to grow with $K$, though not in every condition, while the approximation
gap shrinks; in that table (feature ordering, upward, $\rho=3$) the total error against
the oracle was smallest at $K=\ResBestKsmall{}$ for all three laws with $250$ labels
and at $K=\ResBestKlarge{}$ with $5000$. No output was smoothed or constrained to be
monotone. These laws are synthetic, and the bound's input $H_R$ is known here only
because $\eta$ is.

\begin{table}[h]
\caption{Nested readouts, exact population values (upward). Gap: $\mathcal F_B-\mathcal
F_{B,R}$; B and A: the middle and right-hand bounds of Proposition~\ref{prop:approx},
reported as excess over $\mathcal F_{B,R}$.}
\label{tab:r9-e2pop}
\vspace{2pt}
\centering\scriptsize
\RnineEtwoPopTable
\end{table}

\begin{table}[h]
\caption{Nested readouts at a fixed label budget, feature ordering, upward, $\rho=3$:
RMSE of the plug-in restricted profile against its own target / against the binned
oracle ($1000$ replicates).}
\label{tab:r9-e2fin}
\vspace{2pt}
\centering\scriptsize
\RnineEtwoFinTable
\end{table}

\subsection{Nested merges of the ImageNet readout}\label{app:r9:real}

The $K=8$ groups of each bin are quantile groups of one fitted readout score, so
merging adjacent groups gives the quartile, median and trivial partitions of the same
score: a genuinely nested sequence, unlike the readouts of
Appendix~\ref{app:ladder}. Bins, readout, rates ($D_{\mathrm{rate}}$) and masses
($D_{\mathrm{eval}}$) are the frozen ones; the support is fixed once by the pipeline's
rule at $K=8$ and inherited by the merges. At $K=8$ this reproduces every checkpoint's
FAUC and $G_R$ (largest difference $\RealReproMax$). Table~\ref{tab:r9-e3} summarises
the eighteen checkpoints. Four groups retain \RealKfourLo{}--\RealKfourHi{} of the
eight-group FAUC and two groups \RealKtwoLo{}--\RealKtwoHi{}. From four to eight
groups the plug-in $G_R$ keeps rising, but a median of only \RealDebShareRaw{} (as
released) and \RealDebShareTemp{} (temperature-scaled) of that rise survives the
binomial correction of Proposition~\ref{prop:gunbiased}: much of the finest
resolution's increase is rate noise, most clearly after scaling. This is descriptive.
Without $\eta$ there is no oracle gap or $H_R$ to report; the assumption-free envelope
$\mathbb E_b[\theta(1-\theta)]$ of Appendix~\ref{app:approx} is a median \EnvRatioMed{}
times the corrected $G_R$ over the \EnvRecords{} bin records where the latter is
positive (tenth percentile \EnvRatioQten{} times), so it gives no useful bound here.

\begin{table}[h]
\caption{Nested merges on ImageNet, eighteen checkpoints, median [range]. The last
column is the share of the plug-in increase in $G_R$ from four to eight groups that
remains after the binomial-noise correction.}
\label{tab:r9-e3}
\vspace{2pt}
\centering\scriptsize
\RnineEthreeTable
\end{table}

\subsection{A defect in a closed-form solver used by two permutation nulls}\label{app:r9:audit}

The permutation nulls of Appendix~\ref{app:r7} computed profiles with a closed-form
tail-regime solver. When the largest cell rates of a bin are tied, that solver could
accept a spurious root at the top value and return a value above $\sqrt{\rho\hat G}$.
We found this while running the study above, corrected it (ties merged, roots
re-verified elementwise, the frozen bisection as fallback), and recomputed both nulls
with only the solver changed. The corrected solver agrees with the frozen bisection
to $10^{-13}$ on $480{,}000$ random tie-heavy cases. In the random-shift null
\AudChanged{} of \AudTotal{} profile values change by more than $10^{-6}$ (largest
change \AudMaxEntry{}), the null summaries move by at most $\AudMaxSummary$, and no
number or table entry in this paper changes; the controlled-shift null, observed and
permuted, is unchanged. All pre-existing observed statistics, and all profiles outside
these two nulls, use the frozen bisection solver; the study in this appendix uses the
corrected solver, with bisection as fallback. The frozen arrays are kept as released; the corrected ones and the audit are
in the supplementary material.

\end{document}